\newcounter{arxiv}
\DeclareMathOperator*{\argmax}{arg\,max}
\DeclareMathOperator*{\esssup}{ess\,sup}
\newcommand{\bydef}{\triangleq}
\newcommand{\probd}{\mathbb{P}}
\newcommand{\OpProbFlat}[1]{\probd( {#1} )}
\newcommand{\dif}{\mathop{}\!\mathrm{d}}
\newcommand{\Estim}[1]{\hat{#1}}
\newcommand{\RewardEstim}[1]{\tilde{#1}}
\newcommand{\Rmax}[1]{R^{\max}_{#1}}
\newcommand{\Vmax}[1]{V^{\max}_{#1}}
\newcommand{\absval}[1]{\left|{#1}\right|}
\newcommand{\absvalflat}[1]{\lvert{#1}\rvert}
\newcommand{\CondCurlyBrackets}[1]{\expandafter\ifx\expandafter\relax\detokenize{#1}\relax\else\left(#1\right)\fi}
\newcommand{\CondRectBrackets}[1]{\expandafter\ifx\expandafter\relax\detokenize{#1}\relax\else[#1]\fi}
\newcommand{\CondRectBracketsHigh}[1]{\expandafter\ifx\expandafter\relax\detokenize{#1}\relax\else\left[#1\right]\fi}
\newcommand{\CondColon}[1]{\expandafter\ifx\expandafter\relax\detokenize{#1}\relax\else:{#1}\fi}
\newcommand{\Expt}{\mathbb{E}}
\newcommand{\ExptFlat}[3]{\ensuremath{\Expt_{#1}^{#2}\CondRectBrackets{#3}}}
\newcommand{\ExptFlatOrigHistory}[2]{\ExptFlat{t+1\CondColon{#1}}{p_Z}{#2}}
\newcommand{\ExptFlatOrigState}[2]{\ExptFlat{t+1\CondColon{#1}}{p_T,p_Z}{#2}}
\newcommand{\ExptFlatSimpHistory}[2]{\ExptFlat{t+1\CondColon{#1}}{q_Z}{#2}}
\newcommand{\ExptFlatSimpState}[2]{\ExptFlat{t+1\CondColon{#1}}{p_T,q_Z}{#2}}
\newcommand{\DZ}{\Delta_{Z}}  
\newcommand{\DZEst}{\Estim{\Delta}_{Z}}  
\newcommand{\MImSa}{\RewardEstim{m}}  
\newcommand{\PhiImSa}{\RewardEstim{\Phi}}
\newcommand{\PhiImSaEst}{\Estim{\Phi}}
\newcommand{\PhiImSaEstOrig}{\Estim{\PhiImSa}}
\newcommand{\OrigPOMDP}{\mathbf{P}}
\newcommand{\PBMDP}{\mathbf{M_{P}}}
\newcommand{\PB}[1]{\bar{b}_{#1}}
\newcommand{\Policy}[1]{\pi_{#1}}  
\newcommand{\PiLB}{\pi^{\mathcal{LB}}}
\newcommand{\PiUB}{\pi^{\mathcal{UB}}}
\theoremstyle{plain}
\newtheorem{theorem}{\protect\theoremname}
\theoremstyle{plain}
\newtheorem{lemma}{\protect\lemmaname}
\theoremstyle{plain}
\newtheorem{corollary}{\protect\corollaryname}
\providecommand{\theoremname}{Theorem}
\providecommand{\lemmaname}{Lemma}
\providecommand{\corollaryname}{Corollary}
\theoremstyle{plain}
\newtheorem{theoremrpt}{\protect\theoremrptname}
\theoremstyle{plain}
\newtheorem{lemmarpt}{\protect\lemmarptname}
\theoremstyle{plain}
\newtheorem{corollaryrpt}{\protect\corollaryrptname}
\providecommand{\theoremrptname}{Theorem}
\providecommand{\lemmarptname}{Lemma}
\providecommand{\corollaryrptname}{Corollary}
\newenvironment{innerproof}
 {\proof}
 {\endproof}
\title{Simplifying Complex Observation Models in Continuous POMDP Planning with Probabilistic Guarantees and Practice}
\author {
    Idan Lev-Yehudi\textsuperscript{\rm 1},
    Moran Barenboim\textsuperscript{\rm 1},
    Vadim Indelman\textsuperscript{\rm 2}
}
\begin{document}

\maketitle

\begin{abstract}
    Solving partially observable Markov decision processes (POMDPs) with high dimensional and continuous observations, such as camera images, is required for many real life robotics and planning problems. 
    Recent researches suggested machine learned probabilistic models as observation models, but their use is currently too computationally expensive for online deployment.
    We deal with the question of what would be the implication of using simplified observation models for planning, while retaining formal guarantees on the quality of the solution. 
    Our main contribution is a novel probabilistic bound based on a statistical total variation distance of the simplified model. 
    We show that it bounds the theoretical POMDP value w.r.t. original model, from the empirical planned value with the simplified model, by generalizing recent results of particle-belief MDP concentration bounds.
    Our calculations can be separated into offline and online parts, and we arrive at formal guarantees without having to access the costly model at all during planning, which is also a novel result. 
    Finally, we demonstrate in simulation how to integrate the bound into the routine of an existing continuous online POMDP solver.
\end{abstract}

\section{Introduction}

\begin{figure}[t]
    \centering
    \includegraphics[width=0.8\columnwidth]{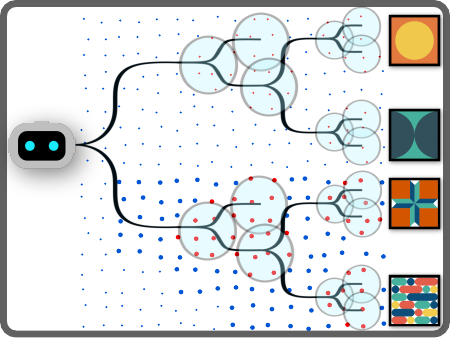} 
    \caption{
        An illustration of a planning session with a simplified observation model.
        The scattered dots are the pre-sampled states, and the dot size is relative to $\DZ$, the estimated discrepancy between the simplified and original observation models.
        The simplified observation model is less accurate on the bottom where the surroundings are more visually complex.
        For the two policies, we compute the bound as a summation over $\DZ$ weighted by the transition model. 
        We bound the summation to a truncation distance indicated by the cyan circles, and $\DZ$ within it is marked in red.
        The bottom policy chooses actions that give higher weights to states with greater $\DZ$, resulting in looser bounds.
    }
    \label{fig:planning_illustration}
\end{figure}

\begin{figure}[t]
    \centering
    \includegraphics[width=0.95\columnwidth]{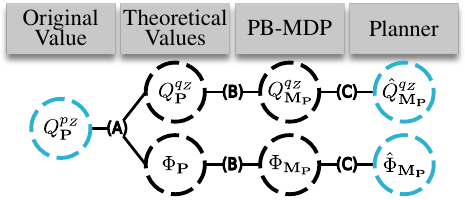}
    \caption
    {
    The various relationships between the action value functions in Corollary \ref{crl:JointApproximationBound} for probably-approximately bounding $\absvalflat{Q_{\OrigPOMDP}^{p_Z}-\Estim{Q}_{\PBMDP}^{q_Z}}\leq \Estim{\Phi}_{\PBMDP}$.
    (A) is given by Theorem \ref{thm:LocalActionBound}, connecting theoretical value functions with the theoretical local state bound.
    (B) is given by Corollary \ref{crl:ArbitraryPrecisionBounds}, connecting theoretical action value functions with their PB-MDP approximation.
    (C) is given by any planner with performance guarantees, such as POWSS, approximating the PB-MDP values.
    }
    \label{fig:inkscape_figure}
\end{figure}

Partially observable Markov decision processes (POMDP) are a flexible mathematical framework for modeling real world decision-making and planning problems with inherent uncertainty.
Yet, POMDPs are notoriously hard to solve \cite{Papadimitriou87math}.
Online solvers like POMCP \cite{Silver10nips} and DESPOT \cite{Ye17jair} focus on finding a solution for the current belief only, rather than solving for the entire belief space, hereby restricting the complexity of solution.

These planners are not trivially suitable for problems with continuous observation or action spaces.
In recent years there have been several practical approaches for continuous solvers, such as POMCPOW, PFT-DPW \cite{Sunberg18icaps} and LABECOP \cite{Hoerger21icra}.
However the current state of the art is divided between practical algorithms and those with guarantees \cite{Lim21cdc}.

Recently were suggested deep neural networks for probabilistic visual observation models in AI.
\cite{Eslami18neural} suggested a neural model for scene rendering.
Both \cite{Jonschkowski18rss} and \cite{Karkus18corl} suggested DPF and PF-NET respectively, which are models aimed at learning particle filtering.
In the context of model free planning, examples include QMDP-net \cite{Karkus17nips} and Deep Variational Reinforcement Learning \cite{Igl18pmlr}.
In the context of model based planning, recent approaches include DualSMC \cite{wang20ijcai} and VTS \cite{Deglurkar23l4dc}.
All of these works demonstrate the computational challenges arising from incorporating deep neural visual models into practical POMDP planning.

Our research focuses on the implications of using a different observation model, less accurate but computationally favorable, during planning in continuous POMDPs.
This could describe for example the case of using a shallower neural network for planning than for inference.
To the best of our knowledge, there still isn't a clear-cut answer as to how accurate a learned observation model needs to be in order to attain a target performance in continuous POMDP planning.
Our work provides insight into the complexity-performance trade-off in planning with different observation models.

\subsection{Contribution}

In this paper, we employ a simplified observation model and derive guarantees in the form of concentration inequalities.
To the best of our knowledge, this is the first to work to do so.
Specifically,
\begin{itemize}
    \item We derive a novel non-parametric bound on the difference between value functions when planning with different observation models, and we observe that it can be formulated as a local state function.
    \item Our bound is practically computed by separating calculations to offline and online parts, such that the online estimator of the bounds refrains from accessing the computationally expensive original observation model.
	\item We derive concentration bounds on the online estimator of the bound.
    We do so by generalizing previous convergence results of particle-belief MDPs to state rewards under general policies.
    \item Finally, we demonstrate the practical computation of the bound's estimator in a simple simulated environment.
\end{itemize}

\subsection{Related Work}

\paragraph{Simplification of Probabilistic Models in POMDP Planning}
In \cite{Hoerger23ijrr} the authors introduce considered several simplification levels of the transition model, whereas we consider only two levels.
Yet \citeauthor{Hoerger23ijrr} had to access the most complex model online, and their convergence guarantee is only asymptotic.
\cite{Ha18nips} consider learning a simplified generative model, for environments in which direct training would be infeasible.
On this "world model" they train the policy, yet they too do not have any performance guarantees for the learned policy.
A technique for measuring non-linearity based on total variation distance in POMDP planning was considered by \cite{hoerger20wafr}.
While there are some similarities in the approach to our work, they did not show how their practical estimator is related to the theoretical bound, nor did they give any performance guarantees.

\paragraph{Continuous POMDP Planning With Guarantees}
\cite{Lim20ijcai} proved that POWSS, a Monte Carlo tree search algorithm for continuous POMDPs based on a modification of Sparse Sampling \cite{Kearns02jml}, converges to the optimal policy in high probability. However, is not efficient enough for practical use.
Later in \cite{Lim23jair} the convergence results were extended to prove that generally the particle-belief MDP (PB-MDP) accurately approximates the original POMDP with high probability.
Recently \cite{Shienman22icra, Barenboim23ral2} developed pruning of data association hypotheses in continuous POMDPs with guarantees.
The former focuses on a specific reward of entropy of hypotheses weights, while the latter provides looser bounds but for any general state reward.
In both works, the hybrid belief setting and type of simplification are different from our work.

\section{Preliminaries}

A POMDP is the tuple $\langle \mathcal{X},\mathcal{A},\mathcal{Z},p_{T},p_{Z},r,\gamma,L,b_0\rangle$.
$\mathcal{X},\mathcal{A},\mathcal{Z}$ are the state space, action space and observation space respectively. Throughout the paper, $x\in\mathcal{X}$, $a\in\mathcal{A}$, $z\in\mathcal{Z}$ denote individual states, actions and observations respectively.
We consider $\mathcal{X}$ and $\mathcal{Z}$ that are continuous, while $\mathcal{A}$ can be either discrete or continuous

The conditional probability distributions $p_{T}$ and $p_{Z}$ are the transition and observation models respectively. $p_{T}(x^{\prime}\mid x,a)$ models the uncertainty of taking action $a\in\mathcal{A}$ from state $x\in\mathcal{X}$, and $p_{Z}(z\mid x)$ models the uncertainty of receiving an observation $z\in\mathcal{Z}$ at state $x$. 
The reward function $r\colon\mathbb{N}\times\mathcal{X}\times\mathcal{A}\rightarrow\mathbb{R} $ gives the immediate reward of applying action $a$ at state $x$ and time $t$. 
We denote $r_t\colon \mathcal{X}\times\mathcal{A}\to \mathbb{R}$ as the reward function at time $t$. 
We assume that the POMDP starts at time $0$ and terminates after $L\in\mathbb{N}$ steps. 
$\gamma\in(0,1]$ is the discount factor. 

Because of the partial observability, the agent has to maintain a probability distribution of the current state given past actions and observations, known as the belief.
The initial belief of the agent, denoted as $b_{0}$, captures the uncertainty in the initial state.
A history at time $t$ is defined as a sequence of the starting belief, followed by actions taken and observations received until $t$: $H_{t}\bydef(b_{0},a_{0},z_{1},\dots,a_{t-1},z_{t})$.
The belief at time $t$ is defined as the conditional distribution of the state given the history $b_{t}(x_{t})\bydef \probd (x_{t}\mid H_{t})$.
We denote $H_{t}^{-}\bydef(b_{0},a_{0},z_{1},\dots,a_{t-1})$ for the same history without the last measurement, and the propagated belief $b_{t}^{-}(x_{t})\bydef \probd (x_{t}\mid H_{t}^{-})$.
The belief reward is the expected state reward: $r_i(b_i, a)\bydef \ExptFlat{x_i\sim  b_i}{}{r_i(x_i,a)}$.

A policy at time $i$, denoted as $\Policy{i}$, is a mapping from the space of all histories to the action space $\Policy{i}\colon \mathcal{H}_i\to \mathcal{A}$. A policy $\pi$ is a collection of policies from the start time until the POMDP terminates, i.e. $\pi\bydef(\pi_t)_{t=0}^{t=L-1}$, and for brevity we denote $\Policy{i}$ instead of $\Policy{i}(H_i)$.
It can be shown that optimal decision-making can be made given the belief, instead of considering the entire history \cite{Kaelbling98ai}.
Since the belief and history fulfill the Markov property, a POMDP is a Markov decision process (MDP) on the belief space, commonly referred to as the belief MDP.

The value function of a policy from time $t$ is the expected sum of discounted rewards until the horizon of the POMDP: $V_{t}^{\pi}(b_{t})\bydef \ExptFlat{z_{t+1:L}}{}{\sum_{i=t}^{L}\gamma^{i-t}r_{i}(b_{i},\Policy{i})}$.
Most often the action value function, defined as $Q_{t}^{\pi}(b_{t},a)\bydef r(b_{t},a)+\gamma\mathbb{E}_{z_{t+1}}[V_{t+1}^{\pi}(b_{t+1})]$, is used as an intermediate step in the estimation of the value function.
The goal of planning is to compute a policy that would maximize the value function.
The optimal policy is denoted $\pi^{*}=\argmax_{\pi} V_{t}^{\pi}(b_{t})$, and the corresponding value and action value functions for the optimal policy are $V_{t}^{*}$ and $Q_{t}^{*}$.

Often in real world applications it is impractical to update the belief exactly.
Particle filters are used instead to represent a belief non parametrically.
The particle belief of $C$ particles at time $i$ is represented by a set state samples and weights $\PB{t} =\{(x_{t}^{j},w_{t}^{j})\}_{j=1}^{C}$, and is defined as the discrete distribution $\probd (x\mid \PB{t})=\frac{\sum_{j=1}^{C}{w_{t}^{j}\cdotp \delta(x-x_{t}^{j})}}{\sum_{j=1}^{C}{w_{t}^{j}}}$.
The basic particle filtering algorithm, Sequential Importance Sampling (SIS) \cite[1.3.2]{Doucet00}, approximates the target posterior distribution by recursively updating the importance weights of the particles with the observation likelihood, and normalizes the weights accordingly.
Sequential Importance Resampling (SIR) adds resampling of state particles after the weight updates, to avoid weight degeneracy \cite{Kong94}.

The Monte Carlo (MC) estimator of an expectation, denoted with $\Estim{\ExptFlat{}{}{}}$, is its approximation with a finite number of samples. This applies to quantities that are defined as expectations, such as $\Estim{V}_{t}^{\pi}$.
For example, the empirical value function based on $N$ samples $\{z_{t+1:L}^{j}\}_{j=1}^{N}$ would be $\Estim{V}_{t}^{\pi} = \frac{1}{N} \sum_{j=1}^{N} \gamma^{i-t}r_{i}(b_{i}^{j},\Policy{i}^{j})$, $\Policy{i}^{j}=\pi(H_{i}^{j})$, where $b_{i}^{j}$ and $H_{i}^{j}$ are updated with $z_{t+1:i}^{j}$, i.e. the sampled sequence of observations until time $i$.

\cite{Lim23jair} considered the PB-MDP, where the belief space is explicitly the space of particle beliefs $\PB{}$ of $C$ particles.
They proved that with high probability the optimal value in the PB-MDP is bounded from the optimal POMDP value, and this bound converges as $C\to\infty$.

We similarly denote the original POMDP as $\OrigPOMDP$ and the PB-MDP as $\PBMDP$.
Quantities that are measured w.r.t. each are denoted accordingly, f.e. the value function starting at time $t$, given policy $\pi$ and starting belief $b_{t}$ in the original POMDP is denoted as $V_{\OrigPOMDP,t}^{\pi}(b_{t})$.
When a result applies to both $\OrigPOMDP$ and $\PBMDP$, we omit this notation.

\section{Methodology}

\subsection{Problem Formulation} 
\label{sec:ProblemFormulation}

We consider cases in which the observation model $p_{Z}$ is simplified for computational reasons.
We denote the simplified model as $q_{Z}$, which is only used in planning.
In our setting, inference is performed with the original model, such that the initial belief $b_{t}$ for planning session starting at time $t$ was updated with $p_{Z}$.
We denote the future value function based on belief updates in planning with either the original or simplified observations models as $V_{t}^{\Policy{},p_Z}$ and $V_{t}^{\Policy{},q_Z}$ respectively, and similarly for $Q_{t}^{\Policy{},p_Z}$ and $Q_{t}^{\Policy{},q_Z}$.

In the settings we consider, $p_Z$ is computationally more expensive than $q_Z$, and both are considerably more expensive than $p_T$.
In the example of a ground robot equipped with a camera, $q_Z$ could be a neural network of similar architecture to $p_Z$ but much shallower, whereas $p_T$ might be a Gaussian or some other simple parametric distribution. Note that in part the difference in complexity arises from the difference in dimensions of the state and observation spaces.

Our goal is to derive calculable probabilistic bounds on $\absvalflat{V_{t}^{\Policy{},p_Z} - \Estim{V}_{t}^{\Policy{},q_Z}}$, while restricting access to $p_Z$ to offline computations only.
Additionally, we wish to analyze the difference in using the bounds for post guarantees, i.e. after a policy has been extracted, to when they're used during the decision-making, i.e. policy computation.

We denote shortened notations for the following densities:
 $\tau_{i} \bydef p_T(x_{i}\mid x_{i-1},\Policy{i-1})$, $[\zeta\slash\xi]_{i} \bydef [{p_Z}\slash{q_Z}](z_i \mid x_{i})$, $[\zeta\slash\xi]_{i}^{H} \bydef [{p_Z}\slash{q_Z}](z_i \mid H_{i}^{-})$. 
 The values in $[\cdot]$ can be replaced with either respective option. 
 The product of densities is denoted $[\tau\slash\zeta\slash\xi]_{i:j} \bydef \prod_{l=i}^{j}{[\tau\slash\zeta\slash\xi]_{l}}$. 
 We denote the expectations:
 \begin{gather}
    \textstyle \ExptFlat{b_{i}}{}{\star} \bydef \int_{x_{i}} b_i(x_i) \star \dif x_i \\
    \textstyle \ExptFlat{i:j}{p_T}{\star} \bydef \int_{x_{i:j}} \tau_{i:j}\star \dif x_{i:j} \\
    \textstyle \ExptFlat{i:j}{[{p_Z}\slash{q_Z}]}{\star} \bydef \int_{z_{i:j}} [\zeta\slash\xi]_{i:j}^{H} \star \dif z_{i:j} \\
    \textstyle \ExptFlat{i:j}{p_T,[{p_Z}\slash{q_Z}]}{\star} \bydef \int_{z_{i:j}} \ExptFlat{b_{i-1}}{}{\ExptFlat{i:j}{p_T}{[\zeta\slash\xi]_{i:j} \star}} \dif z_{i:j} \\
    \textstyle \ExptFlat{i:{j+1}^{-}}{p_T,[{p_Z}\slash{q_Z}]}{\star} \bydef \int_{z_{i:j}} \ExptFlat{b_{i-1}}{}{\ExptFlat{i:j+1}{p_T}{[\zeta\slash\xi]_{i:j} \star}} \dif z_{i:j},
\end{gather}
It holds that $\textstyle [\zeta\slash\xi]_{i:j}^{H}=\ExptFlat{b_{i-1}}{}{\ExptFlat{i:j}{p_T}{[\zeta\slash\xi]_{i:j}}}$ by the law of total probability,
and it follows that $\ExptFlat{i:j}{[{p_Z}\slash{q_Z}]}{\star}=\ExptFlat{i:j}{p_T,[{p_Z}\slash{q_Z}]}{\star}$.
Additionally, it holds that $\ExptFlat{i:j}{p_T,[{p_Z}\slash{q_Z}]}{\ExptFlat{j+1}{p_T}{\star}} = \ExptFlat{i:{j+1}^{-}}{p_T,[{p_Z}\slash{q_Z}]}{\star}$ by Fubini's theorem \cite[1.7]{Durrett19book}.

We denote for a bounded quantity its lower and upper bounds as $\mathcal{LB}$ and $\mathcal{UB}$ respectively. F.e, if $\absvalflat{A}\leq B$, then $\mathcal{LB}(A)=-B$ and $\mathcal{UB}(A)=B$.

For the formulation and practical computation of the bounds, we take the following assumptions:
\begin{enumerate}[label=\roman*.]
    \item The reward at each time step is bounded: $\absvalflat{r_i(x_{i},a)}\leq \Rmax{i}$, for time index $i$. Hence, follows from the triangle inequality that the value function at a certain time step is bounded by $\absvalflat{V_{t}^{\pi}(b_{t},a)}\leq \sum_{i=t}^{L}\gamma^{i-t}\Rmax{i} \bydef \Vmax{t}$, and we define $V_{\max}\bydef \max_t \Vmax{t}$. \label{assum:BoundedReward}
    \item The reachable state space is totally bounded, i.e. for every $\varepsilon > 0$ it can be covered with a finite number of open balls of radius $\varepsilon$. \label{assum:TotallyBounded}
    \item The models $p_{T}$, $p_{Z}$, $q_{Z}$ can be sampled from, and queried for PDF values given samples. \label{assum:ModelsSampleQuery}
\end{enumerate}
For the result of probabilistic convergence guarantees in Theorem \ref{thm:ProbConverge}, we additionally assume the following:
\begin{enumerate}[label=\roman*., resume]
    \item Weights of particles in particle beliefs are updated via the Sequential Importance Sampling (SIS) algorithm, i.e. $w_{t}^{j}\propto [\zeta\slash\xi]_{t}\cdotp w_{t-1}^{j}$ for the $j$'th observation sample $z_{t}^{j}$. Specifically, there is no resampling step. \label{assum:PFSIS}
    \item The normalized observation likelihoods of the original and simplified models are bounded almost everywhere w.r.t. the particle filter's proposal distribution: $\sup_{z_{1:t}} \esssup_{x_{0:t}\sim b_0\cdotp \tau_{1:t}}\frac{[\zeta\slash\xi]_{1:t}}{[\zeta\slash\xi]_{1:t}^{H}}\leq d_{\infty}^{\max}$, where $d_{\infty}^{\max}\in\mathbb{R}$. See \cite{Lim23jair} for further details. \label{assum:dMax}
\end{enumerate}

\subsection{Bound for Simplified Observation Model}

In this section we derive the theoretical bound for the value function with the original observation model and a given policy, w.r.t. the value with the simplified model.
We start by stating the following known equivalence relationship between expectation over rewards.
\begin{lemma}
\label{lem:StateExptEquiv}
The expected belief-dependent reward w.r.t. histories, is equivalent to the expected state-dependent reward w.r.t. the joint distribution of states and observations.
\begin{gather}
    \ExptFlat{t+1:i}{[{p_Z}\slash{q_Z}]}{r_i(b_{i},\Policy{i})}=\ExptFlat{t+1:i}{p_T,[{p_Z}\slash{q_Z}]}{r_i(x_{i},\Policy{i})}. \label{eq:state_action_reward}
\end{gather}
\end{lemma}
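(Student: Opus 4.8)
The plan is to unfold both sides according to the definitions of the expectation operators given in the Problem Formulation, and to reduce the claim to the elementary identity $r_i(b_i,\Policy{i}) = \ExptFlat{x_i\sim b_i}{}{r_i(x_i,\Policy{i})}$, i.e.\ the definition of the belief reward as the expected state reward under the current belief. First I would write the left-hand side using the definition $\ExptFlat{t+1:i}{[p_Z/q_Z]}{\star} = \int_{z_{t+1:i}} [\zeta/\xi]_{t+1:i}^{H}\, \star\, \dif z_{t+1:i}$, so that it becomes an integral over observation sequences of the history-conditioned observation likelihood times $r_i(b_i,\Policy{i})$, where crucially $b_i$ and $\Policy{i}$ are deterministic functions of the realized history $H_i = (b_0,a_0,z_1,\dots,a_{i-1},z_i)$ (the actions being fixed once the policy and observations are fixed).

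Next I would substitute $r_i(b_i,\Policy{i}) = \int_{x_i} b_i(x_i)\, r_i(x_i,\Policy{i})\, \dif x_i$ and recall that $b_i(x_i) = \probd(x_i \mid H_i)$. The key step is then to recognize that $[\zeta/\xi]_{t+1:i}^{H}\cdot b_i(x_i)$, integrated appropriately, is exactly the joint density of states and observations along the trajectory: by the identity already recorded in the excerpt, $[\zeta/\xi]_{t+1:i}^{H} = \ExptFlat{b_t}{}{\ExptFlat{t+1:i}{p_T}{[\zeta/\xi]_{t+1:i}}}$, and propagating $b_t$ forward through $\tau_{t+1:i}$ and conditioning on the observations reproduces $b_i$. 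Concretely, I would use the law of total probability / Bayes' rule to write the marginal over $x_i$ against the observation-weighted measure as an integral over the full state trajectory $x_{t:i}$ against $b_t(x_t)\,\tau_{t+1:i}\,[\zeta/\xi]_{t+1:i}$, which is precisely the integrand defining $\ExptFlat{t+1:i}{p_T,[p_Z/q_Z]}{r_i(x_i,\Policy{i})}$. Rearranging the order of integration (all integrands are nonnegative, or bounded by Assumption~\ref{assum:BoundedReward}, so Tonelli/Fubini applies) then yields the right-hand side.

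I expect the main obstacle to be purely bookkeeping rather than conceptual: carefully tracking that the policy-chosen actions $a_{t}, \dots, a_{i-1}$ appearing inside $\tau_{t+1:i}$ and inside $r_i(\cdot,\Policy{i})$ are consistent on both sides — on the left they are hidden inside the history dependence of $b_i$ and $\Policy{i}$, while on the right they appear explicitly through $\tau_l = p_T(x_l \mid x_{l-1}, \Policy{l-1})$ — and making sure the conditioning events line up so that Bayes' rule can be applied cleanly at each time step. A secondary subtlety is justifying the interchange of integration over $z_{t+1:i}$ and $x_{t:i}$; this is handled by Assumption~\ref{assum:BoundedReward} together with Fubini's theorem \cite[1.7]{Durrett19book}, exactly as the excerpt already invokes it for the analogous rearrangements. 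Since this is a restatement of a standard equivalence, I would keep the argument short and mostly point to the definitions.
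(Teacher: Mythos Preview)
Your proposal is correct and follows essentially the same approach as the paper: expand the left-hand side, replace the belief reward by $\int b_i(x_i)\,r_i(x_i,\Policy{i})\,\dif x_i$, then use Bayes' rule and the chain rule for the propagated belief to peel back $b_i$ one time step at a time until the integrand becomes $b_t(x_t)\,\tau_{t+1:i}\,[\zeta/\xi]_{t+1:i}$, with Fubini justifying the interchanges. The only cosmetic difference is that the paper carries out the recursion explicitly (alternating a Bayes step $b_l(x_l)=\frac{[\zeta/\xi]_l}{[\zeta/\xi]_l^H}\,b_l^-(x_l)$, which cancels the $[\zeta/\xi]_l^H$ factor, with a marginalization step $b_l^-(x_l)=\int b_{l-1}(x_{l-1})\,\tau_l\,\dif x_{l-1}$) rather than summarizing it in one sentence, but the argument is identical.
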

\begin{innerproof}
    We refer to the supplementary material for all proofs\ifthenelse{\value{arxiv}>0}{}{ \cite{LevYehudi2023arxiv}}.
\end{innerproof}
We define the following state dependent total variation distance (TV-distance) function,
\begin{gather}
    \textstyle \DZ(x)\bydef 
    \int_{\mathcal{Z}}\absvalflat{p_{Z}(z\mid x)-q_{Z}(z\mid x)}\dif z. \label{eq:def_delta}
\end{gather}
There are several motivations for considering $\DZ(x)$.
First, we can estimate it via samples for any density from which we can sample and evaluate for its PDF.
It does not require any parametric form of the density, which is useful when considering general learned models.
Second, it is state-dependent, hence can be computed locally as we show later for a given belief or trajectory in the belief tree.
Different actions might result in different bounds and the locality helps differentiate actions that result in tighter or looser bounds.
Third, as given by Pinsker's lemma, the TV-distance is bounded from above by the KL-divergence \cite{tsybakov2009book}.
Thus, current approaches that learn probabilistic models with ELBO or directly minimize empirical KL-divergence \cite{Kingma14iclr, Sohn2015nips, Rezende15icml, Winkler2019arxiv} also indirectly minimize the TV-distance, therefore it is appropriate to assume that models trained with these objectives will also indirectly minimize $\DZ$.

\begin{theorem}
\label{thm:AltModelThBound}
Assume current belief is $b_{t}$ and a given policy is $\pi$.
Denote with $b_{i}^{p_Z}$, $b_{i}^{q_Z}$ the future belief at time step $i$ updated with either $p_Z$ or $q_Z$, respectively. Then it holds that
\begin{gather}
    \textstyle \absvalflat{\ExptFlatOrigHistory{i}{r_{i}(b_{i}^{p_Z},\Policy{i})}-\ExptFlatSimpHistory{i}{r_{i}(b_{i}^{q_Z},\Policy{i})}} \nonumber \\
    \textstyle \leq \Rmax{i}\sum_{l=t+1}^{i}\ExptFlatSimpState{l^-}{\DZ(x_{l})},
    \label{eq:state_bound_reward}
\end{gather}
i.e. the difference between the expected reward at future time $i$ for the original and simplified POMDP is bounded by the maximum reward, times the sum of the expected state-dependent TV-distances between the observation models.
\end{theorem}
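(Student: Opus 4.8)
I would prove this by passing from belief rewards to state rewards via Lemma~\ref{lem:StateExptEquiv}, writing both sides as iterated integrals over the planning states and observations, and bounding their difference through a telescoping decomposition of the product of observation likelihoods, one factor per time step. \textbf{Step 1 (state-reward form).} Applying Lemma~\ref{lem:StateExptEquiv} (eq.~\eqref{eq:state_action_reward}) to each term, the left-hand side equals $\absvalflat{\ExptFlatOrigState{i}{r_i(x_i,\Policy{i})}-\ExptFlatSimpState{i}{r_i(x_i,\Policy{i})}}$. Unrolling the definitions of the expectation operators, both are integrals over $x_{t:i}$ and $z_{t+1:i}$ of $b_t(x_t)\,\big(\prod_{l=t+1}^{i}\tau_l\big)\,r_i(x_i,\Policy{i})$, weighted by $\prod_{l=t+1}^{i}\zeta_l$ in the first case and by $\prod_{l=t+1}^{i}\xi_l$ in the second. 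The crucial observation is that the transition factors $\tau_l$ and all policy evaluations $\Policy{l}$ are \emph{identical} in the two integrals, since a policy maps the observation--action history, not the belief; so the whole difference is carried by the likelihood products, and $b_t(x_t)$, $\prod_l\tau_l$ and $r_i$ factor out.

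\textbf{Step 2 (telescope and marginalize).} Subtracting, the integrand difference is $b_t(x_t)\big(\prod_{l}\tau_l\big)r_i(x_i,\Policy{i})\big(\prod_{l=t+1}^{i}\zeta_l-\prod_{l=t+1}^{i}\xi_l\big)$. I would expand $\prod_{l=t+1}^{i}\zeta_l-\prod_{l=t+1}^{i}\xi_l=\sum_{l=t+1}^{i}\big(\prod_{m=t+1}^{l-1}\xi_m\big)(\zeta_l-\xi_l)\big(\prod_{m=l+1}^{i}\zeta_m\big)$, push the absolute value into the finite sum and the integral, and bound $\absvalflat{r_i(x_i,\Policy{i})}\leq\Rmax{i}$ by Assumption~\ref{assum:BoundedReward}. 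In the $l$-th summand, integrating out the high-index variables in the order $z_i,x_i,z_{i-1},x_{i-1},\dots,z_{l+1},x_{l+1}$ collapses each factor: $\int_{\mathcal{Z}}p_Z(z_m\mid x_m)\dif z_m=1$ eliminates $\zeta_m$ and $\int_{\mathcal{X}}p_T(x_m\mid x_{m-1},\Policy{m-1})\dif x_m=1$ eliminates $\tau_m$. What remains is the integral over $x_{t:l}$ and $z_{t+1:l}$ of $b_t(x_t)\big(\prod_{m=t+1}^{l}\tau_m\big)\big(\prod_{m=t+1}^{l-1}\xi_m\big)\absvalflat{\zeta_l-\xi_l}$; since $\int_{\mathcal{Z}}\absvalflat{p_Z(z_l\mid x_l)-q_Z(z_l\mid x_l)}\dif z_l=\DZ(x_l)$ by~\eqref{eq:def_delta}, integrating out $z_l$ leaves exactly $\ExptFlatSimpState{l^-}{\DZ(x_l)}$. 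Summing over $l=t+1,\dots,i$ yields the claimed bound.

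\textbf{Main obstacle.} The only delicate point is the bookkeeping in Step~2's marginalization: one must check that, once $r_i$ has been bounded away, no surviving factor depends on a \emph{future} variable being integrated out --- in particular $\Policy{m}$ enters only through $\tau_{m+1}$, which is already eliminated by the time $z_m$ is integrated --- so that the nested integrals of the observation and transition densities genuinely telescope to $1$. The rest is routine Fubini-type reordering, already justified by the identities stated in the preliminaries.
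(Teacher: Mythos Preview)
Your proof is correct and follows essentially the same route as the paper: reduce to state rewards via Lemma~\ref{lem:StateExptEquiv}, then bound the difference of the two likelihood products $\prod\zeta_l-\prod\xi_l$ by a telescoping decomposition, with each summand collapsing to a single $\DZ(x_l)$ term after marginalizing out the higher-index states and observations. The only cosmetic difference is that the paper presents the telescope recursively (add--subtract one mixed term, split by the triangle inequality, integrate out $z_i,x_i$, and recurse on the shorter product), whereas you write the full telescoping sum $\sum_l(\prod_{m<l}\xi_m)(\zeta_l-\xi_l)(\prod_{m>l}\zeta_m)$ at once and handle each summand directly---the underlying computations and the resulting terms are identical.
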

By applying to the value function with the triangle inequality, and changing the order of summation, we arrive at the following result.
\begin{corollary} 
    \label{crl:TheoreticalBound}
    The difference between the original and simplified value functions can be bounded by the following sum of scaled expected TV-distance terms:
    \begin{gather}
        \textstyle \absvalflat{V_{t}^{\pi,p_Z}(b_{t})-V_{t}^{\pi,q_Z}(b_{t})}\nonumber\\
        \textstyle \leq\sum_{i=t+1}^{L}\ExptFlatSimpState{i^-}{\DZ(x_{i})}\sum_{l=i}^{L}\gamma^{l-t}\Rmax{l} \\
        \textstyle =\sum_{i=t+1}^{L}\Vmax{i}\cdotp\ExptFlatSimpState{i^-}{\DZ(x_{i})}. 
        \label{eq:bound_value}
    \end{gather}
\end{corollary}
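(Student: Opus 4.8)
The plan is to obtain the corollary by a term-by-term application of Theorem~\ref{thm:AltModelThBound} along the planning horizon, followed by an interchange of two finite summations.

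First I would write out both value functions from their definition as an expectation of the discounted reward-to-go over the future observation sequence, and push the outer expectation inside the finite sum over $i$ by linearity; since the $i$-th summand $r_i(b_i^{\cdot},\Policy{i})$ depends on the observations only through $z_{t+1:i}$, its expectation is exactly $\ExptFlatOrigHistory{i}{r_i(b_i^{p_Z},\Policy{i})}$ for the original model and $\ExptFlatSimpHistory{i}{r_i(b_i^{q_Z},\Policy{i})}$ for the simplified one. The $i=t$ term equals $r_t(b_t,\Policy{t})$ and is identical for both models, since $b_t$ and the history at time $t$ are the common starting data, so it cancels in the difference. This gives
\[
    V_{t}^{\pi,p_Z}(b_{t})-V_{t}^{\pi,q_Z}(b_{t})=\sum_{i=t+1}^{L}\gamma^{i-t}\left(\ExptFlatOrigHistory{i}{r_i(b_i^{p_Z},\Policy{i})}-\ExptFlatSimpHistory{i}{r_i(b_i^{q_Z},\Policy{i})}\right).
\]

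Next I would take absolute values, pass the modulus inside the sum over $i$ via the triangle inequality, and bound each term using Theorem~\ref{thm:AltModelThBound}, which yields $\sum_{i=t+1}^{L}\gamma^{i-t}\Rmax{i}\sum_{l=t+1}^{i}\ExptFlatSimpState{l^-}{\DZ(x_{l})}$. The index set $\{(i,l):t+1\le l\le i\le L\}$ of this double sum can equivalently be swept as $\{(l,i):t+1\le l\le L,\ l\le i\le L\}$, so interchanging the two finite sums gives $\sum_{l=t+1}^{L}\ExptFlatSimpState{l^-}{\DZ(x_{l})}\sum_{i=l}^{L}\gamma^{i-t}\Rmax{i}$; relabeling the outer index $l\mapsto i$ and the inner index $i\mapsto l$ reproduces the first displayed inequality of the statement. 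Recognizing the inner sum $\sum_{l=i}^{L}\gamma^{l-t}\Rmax{l}$ as $\Vmax{i}$ then gives the closed form $\sum_{i=t+1}^{L}\Vmax{i}\cdot\ExptFlatSimpState{i^-}{\DZ(x_{i})}$.

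I do not expect a substantive obstacle: all the analytic content is in Theorem~\ref{thm:AltModelThBound}, and the remainder is the linearity/triangle-inequality decomposition plus a double-sum interchange that is unconditional because every sum is finite. The only place that warrants an explicit argument rather than a one-liner is the reduction step — verifying that the $i$-th reward depends on the observation sequence only through $z_{t+1:i}$ (so that the theorem applies with the correct inner range $l=t+1,\dots,i$) and that the $i=t$ term drops out of the difference.
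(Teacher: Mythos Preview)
Your proposal is correct and follows essentially the same route as the paper's proof: expand the value functions, cancel the $i=t$ term, apply the triangle inequality and Theorem~\ref{thm:AltModelThBound} term by term, then interchange the two finite sums and identify the inner sum with $\Vmax{i}$. The only cosmetic difference is that the paper invokes Lemma~\ref{lem:StateExptEquiv} to pass to state rewards before applying Theorem~\ref{thm:AltModelThBound}, whereas you apply the theorem directly at the belief-reward level in which it is stated; this saves a step and changes nothing substantive.
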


\subsection{Equivalence to State-Action Local Bound}
Corollary \ref{crl:TheoreticalBound} requires computing the theoretical expectations $\ExptFlatSimpState{i^-}{\DZ(x_{i})}$, which is not trivial with continuous states and observations.
Our key insight is that the bound can be viewed as the expected sum of a state-action function.
This serves two purposes.
The first is that the bounds become more easily estimated, as we can integrate our calculations in existing POMDP planners by adding a secondary reward-like function to compute.
The second is that we can extend convergence results of POMDP algorithms to the empirical estimate of the bound.

First we define the following time dependent state-action function, and its extension to a belief function,
\begin{gather}
    m_{i}(x_i,\Policy{i})\bydef\Vmax{i+1}\cdotp\ExptFlat{i+1}{p_T}{\DZ(x_{i+1})},\\
    m_{i}(b_{i},\Policy{i})\bydef\ExptFlat{b_i}{}{m(x_i,\Policy{i})}.
\end{gather}
Intuitively speaking, $m_i$ bounds the loss of the value function at time $i$ when considering the action of $\Policy{i}$ from state $x_i$ or belief $b_i$, based on $\DZ$.
It is then natural to extend this definition with the following:
\begin{enumerate}
    \item Cumulative bound for a given policy and initial belief, $M_{t}^{\pi}(b_{t})\bydef\ExptFlatSimpHistory{L-1}{\sum_{i=t}^{L-1}m_{i}(b_{i},\Policy{i})}$, analogous to $V_{t}^{\pi}(b_{t})$.
    Note that the time horizon has decreased by $1$, and there is no discount factor.
    \item Action cumulative bound for a given policy and initial belief, $\Phi_{t}^{\pi}(b_{t},a)\bydef m_{t}(b_{t},a)+\ExptFlatSimpHistory{}{M_{t+1}^{\pi}(b_{t+1})}$, analogous to $Q_{t}^{\pi}(b_{t},a)$.
\end{enumerate}
\begin{theorem}
    \label{thm:LocalActionBound}
    Under the conditions of Theorem \ref{thm:AltModelThBound}, the difference between the original and simplified value function can be bounded by
    \begin{gather}
        \textstyle \absvalflat{V_{t}^{\pi,p_Z}(b_{t})-V_{t}^{\pi,q_Z}(b_{t})}\leq M_{t}^{\pi}(b_{t}). 
        \label{eq:MBoundV}
    \end{gather}
    In addition, the respective difference in action value function can be bounded by the action cumulative bound,
    \begin{gather}
        \textstyle \absvalflat{Q_{t}^{\pi,p_Z}(b_{t},a)-Q_{t}^{\pi,q_Z}(b_{t},a)}\leq \Phi_{t}^{\pi}(b_{t},a). 
        \label{eq:PhiBoundQ}
    \end{gather}
\end{theorem}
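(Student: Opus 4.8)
The plan is to deduce both inequalities from Corollary \ref{crl:TheoreticalBound}. For the value-function claim \eqref{eq:MBoundV} I would show that $M_{t}^{\pi}(b_{t})$ equals \emph{exactly} the right-hand side of Corollary \ref{crl:TheoreticalBound}; for the action-value claim \eqref{eq:PhiBoundQ} I would reduce to \eqref{eq:MBoundV} applied to a modified policy whose first action is forced to be $a$.

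For \eqref{eq:MBoundV}, start by expanding $M_{t}^{\pi}(b_{t})=\sum_{i=t}^{L-1}\ExptFlatSimpHistory{L-1}{m_{i}(b_{i},\Policy{i})}$ by linearity. Since each term $m_{i}(b_{i},\Policy{i})$ is determined by $b_{i}$, hence by $z_{t+1:i}$ alone, the later observations $z_{i+1:L-1}$ can be marginalized out (the corresponding conditional densities $q_{Z}(z_{l}\mid H_{l}^{-})$ integrate to one), so the $i$-th summand collapses to $\ExptFlatSimpHistory{i}{m_{i}(b_{i},\Policy{i})}$ (a trivial expectation when $i=t$). Next apply Lemma \ref{lem:StateExptEquiv} with $m_{i}$ playing the role of the belief reward — legitimate because $m_{i}(b_{i},\Policy{i})=\ExptFlat{b_{i}}{}{m_{i}(x_{i},\Policy{i})}$ by definition — to rewrite this as $\ExptFlatSimpState{i}{m_{i}(x_{i},\Policy{i})}=\Vmax{i+1}\cdot\ExptFlatSimpState{i}{\ExptFlat{i+1}{p_T}{\DZ(x_{i+1})}}$. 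Applying the Fubini identity $\ExptFlat{i:j}{p_T,q_Z}{\ExptFlat{j+1}{p_T}{\star}}=\ExptFlat{i:{j+1}^{-}}{p_T,q_Z}{\star}$ from the preliminaries, with the substitutions $i\mapsto t+1$ and $j\mapsto i$, the inner double expectation collapses to $\ExptFlat{t+1:{i+1}^{-}}{p_T,q_Z}{\DZ(x_{i+1})}$. Re-indexing the sum with $j=i+1$ then gives $M_{t}^{\pi}(b_{t})=\sum_{j=t+1}^{L}\Vmax{j}\cdot\ExptFlatSimpState{j^-}{\DZ(x_{j})}$, which is precisely the bound in Corollary \ref{crl:TheoreticalBound}; combining the two yields \eqref{eq:MBoundV}.

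For \eqref{eq:PhiBoundQ}, fix $a$ and let $\pi'\bydef(a,\Policy{t+1},\dots,\Policy{L-1})$ be the policy that plays the constant action $a$ at time $t$ and coincides with $\pi$ afterwards. Since $\pi'$ agrees with $\pi$ from time $t+1$ onward we have $V_{t+1}^{\pi',p_Z}=V_{t+1}^{\pi,p_Z}$, and as the time-$t$ reward $r(b_{t},a)$ is common to both, unrolling the Bellman equation one step gives $Q_{t}^{\pi,p_Z}(b_{t},a)=V_{t}^{\pi',p_Z}(b_{t})$; the identical argument with $q_{Z}$ in place of $p_Z$ gives $Q_{t}^{\pi,q_Z}(b_{t},a)=V_{t}^{\pi',q_Z}(b_{t})$. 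Having already established \eqref{eq:MBoundV} for every policy, apply it to $\pi'$ to obtain $\absvalflat{Q_{t}^{\pi,p_Z}(b_{t},a)-Q_{t}^{\pi,q_Z}(b_{t},a)}\le M_{t}^{\pi'}(b_{t})$. Finally, peeling off the $i=t$ summand, $M_{t}^{\pi'}(b_{t})=m_{t}(b_{t},a)+\ExptFlatSimpHistory{L-1}{\sum_{i=t+1}^{L-1}m_{i}(b_{i},\Policy{i})}$; by the chain rule of the conditional densities $q_{Z}(z_{l}\mid H_{l}^{-})$ the remaining expectation over $z_{t+1:L-1}$ factors as an expectation over $z_{t+1}$ of $M_{t+1}^{\pi}(b_{t+1})$, i.e. as $\ExptFlatSimpHistory{}{M_{t+1}^{\pi}(b_{t+1})}$, so $M_{t}^{\pi'}(b_{t})=\Phi_{t}^{\pi}(b_{t},a)$, which gives \eqref{eq:PhiBoundQ}.

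I expect the main obstacle to be bookkeeping rather than anything conceptual: checking that the marginalization of the "future" observations and the Fubini-type exchanges are valid for the nested expectation operators exactly as they are defined, and verifying that the one-step unrolling $Q_{t}^{\pi,\bullet}(b_{t},a)=V_{t}^{\pi',\bullet}(b_{t})$ holds for $\bullet\in\{p_Z,q_Z\}$ with the belief update carried out by the model named in the superscript. Both reduce to the law of total probability and the compatibility of the $\Expt_{i:j}$ notation with composition, which are already used implicitly in the preliminaries, so no ingredient beyond Corollary \ref{crl:TheoreticalBound} should be needed.
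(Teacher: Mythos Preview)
Your proposal is correct and follows essentially the same approach as the paper: you show that $M_{t}^{\pi}(b_{t})$ coincides with the right-hand side of Corollary~\ref{crl:TheoreticalBound} via Lemma~\ref{lem:StateExptEquiv} and the Fubini identity, and then obtain \eqref{eq:PhiBoundQ} by applying \eqref{eq:MBoundV} to the modified policy that plays $a$ at time $t$ and follows $\pi$ thereafter. Your write-up is in fact slightly more explicit than the paper's in spelling out the marginalization of future observations and the identification $M_{t}^{\pi'}(b_{t})=\Phi_{t}^{\pi}(b_{t},a)$.
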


Computing $M_{t}^{\pi}(b_{t})$ would be useful when trying to arrive at a bound for a specific policy, usually as post-guarantees after a planning session.
On the other hand, the computation of $\Phi_{t}^{\pi}(b_{t},a)$ can be defined recursively over an entire belief tree, meaning it extends the original definition of the bound to all extractable policies from a given tree.
Hence, the action cumulative bound can be used during planning, when computing a policy or pruning low value branches like in \cite{Sztyglic22iros}.

\subsection{Online Estimator of Local Bound}

We now show how we estimate $m_i$ during online planning without requiring access to $p_Z$.
We do this by offline pre-sampling state samples $\{x_{n}^{\Delta}\} _{n=1}^{N_{\Delta}}\sim Q_{0}(x)$, named delta states, and evaluating $\DZ(x_{n}^{\Delta})$ for each.
During online planning, we simply reweight $\DZ$ using the importance sampling formalism for state sample $x_{i}^{j}$.
We use this to redefine $m_i$ and also to define the empirical estimate $\RewardEstim{m}_{i}$.
\begin{gather}
    \textstyle m_{i}(x_{i}^{j},a)=\Vmax{i+1}\cdotp\ExptFlat{x^{\prime}\sim Q_{0}}{}{\frac{p_{T}(x^{\prime}\mid x_{i}^{j},a)}{Q_{0}(x^{\prime})}\DZ(x^{\prime})} \label{eq:StateBoundImpSamp},\\
    \textstyle \MImSa_{i}(x_{i}^{j},a)\bydef\Vmax{i+1}\cdotp \frac{1}{N_{\Delta}}\sum_{i=1}^{N_{\Delta}}
    \frac{p_{T}(x_{n}^{\Delta}\mid x_{i}^{j},a)}{Q_{0}(x_{n}^{\Delta})}\DZ(x_{n}^{\Delta}). \label{eq:StateBoundApprox}
\end{gather}
Based on this, we define $\PhiImSa_{t}^{\pi}(b_{t},a)$ when computed with $\RewardEstim{m}_i$ instead of $m_i$. We explicitly define this notation to differentiate $\PhiImSa$ from $\PhiImSaEstOrig$;
The former is defined with a theoretical expectation over the observation space, while the latter is an empirical estimate with sampled belief trajectories. For notational brevity, in the rest of the paper we denote $\PhiImSaEst\bydef \PhiImSaEstOrig$.

The support of $Q_{0}$ must cover all of $\mathcal{X}$ for the estimator in \eqref{eq:StateBoundApprox} to be consistent \cite[1.3.2]{Doucet00}.
Although it is possible to construct such densities over unbounded state spaces, we find that in practice it is often not required as $\mathcal{X}$ is either naturally bounded, such as the configuration space of a manipulator, or can be defined to be large enough but bounded to cover the relevant domain of the planning problem.
Hence, it could be sufficient to choose $Q_{0}$ with a finite support.
To the end of simplifying the discussion, we took assumption \ref{assum:TotallyBounded}

Under this condition, we can establish with Hoeffding's inequality the following simple probabilistic bound:
\begin{gather}
    \textstyle \OpProbFlat{\absvalflat{m_{i}(x,a)-\RewardEstim{m}_{i}(x,a)}\geq\nu}\leq2\exp(-2N_{\Delta}\frac{\nu^{2}}{B_{i}^{2}}),
\end{gather}
where $B_{i}=2\cdotp\Vmax{i}\cdotp\max_{x,x^{\prime},a}\frac{p_{T}(x^{\prime}\mid x,a)}{Q_{0}(x^{\prime})}$ since $0\leq\DZ(x)\leq2$. We assume $B_{i}$ to be computable offline.
This is sufficient for obtaining the concentration bounds of Theorem \ref{thm:ProbConverge}, we approximate this even further $N_{\Delta}$ may be very large, as we describe in section \ref{sec:comp_dz}.

\section{Convergence Guarantees}

Our major convergence result is based on an adaptation of Lemma 1 of \cite{Lim23jair}. 
Lemma 1 shows that the theoretical optimal POMDP action value $Q_{\OrigPOMDP}^{*}$ is bounded with high probability from the theoretical optimal PB-MDP action value $Q_{\PBMDP}^{*}$.
We extend their result in two ways:
First, proving the bound for general policies, rather than only the optimal one, allows for proving convergence of $\Phi_{t}^{\pi}$ w.r.t. a policy that is chosen to maximize $V_{t}^{\pi,q_Z}$, or any other objective.
Second, by introducing an additional assumption of a probabilistically bounded reward, denoted with $\RewardEstim{r}$, we are able to apply Theorem \ref{thm:ProbConverge} to $\RewardEstim{m}_i$.

\begin{theorem}[Generalized PB-MDP Convergence]
\label{thm:ProbConverge}
    Assume that the immediate state reward estimate is probabilistically bounded such that
    $\OpProbFlat{\absvalflat{r_{i}^{j}-\tilde{r}_{i}^{j}}\geq\nu}\leq\delta_{r}(\nu,N_r)$, for a number of reward samples $N_r$ and state sample $x_{i}^{j}$. 
    Assume that $\delta_{r}(\nu,N_r)\to0$ as $N_r\to\infty$.
    For all policies $\pi$, $t=0,\dots,L$ and $a\in\mathcal{A}$, the following bounds hold with probability of at least $1-5(4C)^{L+1}(\exp(-C\cdot\acute{k}^{2})+\delta_{r}(\nu,N_r))$:
    \begin{gather}
        \textstyle
        \absvalflat{Q_{\OrigPOMDP,t}^{\pi,[{p_Z}\slash{q_Z}]}(b_{t},a)-Q_{\PBMDP,t}^{\pi,[{p_Z}\slash{q_Z}]}(\PB{t}, a)}\leq \alpha_{t}+\beta_{t},
    \end{gather}
    where,
    \begin{gather}
        \textstyle
        \alpha_{t}=(1+\gamma)\lambda+\gamma \alpha_{t+1},\ \alpha_{L}=\lambda\geq0, \\
        \beta_{t}=2\nu+\gamma \beta_{t+1},\ \beta_{L}=2\nu\geq 0, \\
        k_{\max}(\lambda, C)=\frac{\lambda}{4V_{\max}d_{\infty}^{\max}}-\frac{1}{\sqrt{C}}>0,\\
        \acute{k}=\min\{k_{\max},\lambda\slash4\sqrt{2}V_{\max}\}.
    \end{gather}
    If we require the bound to hold for all possible policies that can be extracted from a given belief tree simultaneously, then under the assumption of a finite action space, the probability is at least $1-5(4\absvalflat{\mathcal{A}}C)^{L+1}(\exp(-C\cdot\acute{k}^{2})+\delta_{r}(\nu,N_r))$.
\end{theorem}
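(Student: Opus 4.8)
The plan is to prove the statement by backward induction on the time index $t$, from $t=L$ down to $t=0$, generalizing the argument behind Lemma~1 of \cite{Lim23jair}. The conceptual point is that the gap $\absvalflat{Q_{\OrigPOMDP,t}^{\pi,[{p_Z}\slash{q_Z}]}(b_t,a)-Q_{\PBMDP,t}^{\pi,[{p_Z}\slash{q_Z}]}(\PB{t},a)}$ splits into two error channels that are tracked separately: (i) the particle-filter sampling error, i.e.\ approximating the belief $b_t$ and the observation expectation by a $C$-particle self-normalized empirical average, which is exactly the quantity controlled in \cite{Lim23jair} and accumulates into the $\alpha_t$ recursion; and (ii) the immediate state-reward estimation error, controlled through the assumed tail $\delta_r(\nu,N_r)$, which accumulates into the $\beta_t$ recursion. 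Since $\alpha_t$ and $\beta_t$ satisfy the stated linear recursions with base cases $\alpha_L=\lambda$, $\beta_L=2\nu$, it suffices to prove a one-step version of each inequality and then compose.

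For the base case $t=L$ the action value functions reduce to a single reward: $Q_{\OrigPOMDP,L}^{\pi,[{p_Z}\slash{q_Z}]}(b_L,a)=\ExptFlat{x\sim b_L}{}{r_L(x,a)}$, while on the PB-MDP side it is the self-normalized weighted particle average of the estimated reward $\RewardEstim{r}_L$. Inserting the weighted average of the exact reward $r_L$ and applying the triangle inequality, the first part is bounded via the self-normalized importance-sampling and Hoeffding estimate of \cite{Lim23jair} (which uses Assumptions~\ref{assum:PFSIS} and \ref{assum:dMax}, and produces the constants $k_{\max}(\lambda,C)$ and $\acute{k}$), yielding the $\lambda$ contribution, and the second part is bounded through the reward tail $\delta_r(\nu,N_r)$, yielding the reward-estimation contribution (the $2\nu$ in $\beta_L$). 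For the inductive step, write the Bellman recursion $Q_t^{\pi,[{p_Z}\slash{q_Z}]}=r(b_t,a)+\gamma\,\ExptFlat{z_{t+1}}{}{V_{t+1}^{\pi,[{p_Z}\slash{q_Z}]}}$ on both models and decompose the difference into: (a) the immediate reward gap, handled as in the base case, contributing $\lambda$ plus the reward term; (b) the gap between the true observation expectation and its $C$-particle self-normalized empirical counterpart on the PB-MDP side, again handled by the concentration machinery of \cite{Lim23jair} via $d_\infty^{\max}$, contributing $\gamma\lambda$; and (c) the recursive term $\gamma\,\absvalflat{V_{t+1}^{\pi,[{p_Z}\slash{q_Z}]}-\RewardEstim{V}_{t+1}^{\pi,[{p_Z}\slash{q_Z}]}}$, bounded by $\gamma(\alpha_{t+1}+\beta_{t+1})$ via the induction hypothesis. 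Adding these gives $(1+\gamma)\lambda+\gamma\alpha_{t+1}+2\nu+\gamma\beta_{t+1}=\alpha_t+\beta_t$, matching the recursions. Each application introduces at most five bad concentration events per particle-belief node, each of probability at most $\exp(-C\acute{k}^2)+\delta_r(\nu,N_r)$, and a union bound over the reachable particle-belief subtree, of size $O((4C)^{L+1})$ as in \cite{Lim23jair}, yields the stated overall probability $1-5(4C)^{L+1}(\exp(-C\acute{k}^2)+\delta_r(\nu,N_r))$.

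For the uniform-over-policies variant, observe that with a finite action space the policies extractable from a fixed belief tree are in bijection with choices of one action per reachable node; hence it suffices to union-bound the one-step estimates above over the nodes of the full tree rather than over a single policy's subtree. Its branching factor becomes $\absvalflat{\mathcal{A}}C$, so $(4C)^{L+1}$ is replaced by $(4\absvalflat{\mathcal{A}}C)^{L+1}$, giving the second probability.

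The main obstacle is not the reward channel, which contributes the $\beta_t$ recursion cleanly, but faithfully reusing the self-normalized importance-sampling concentration of \cite{Lim23jair} after (a) stripping out their reliance on the optimal policy so the bound holds for an arbitrary and then every extractable policy, and (b) checking that introducing $\RewardEstim{r}$ does not perturb the weight concentration — which it does not, since the SIS weights (Assumption~\ref{assum:PFSIS}) depend only on observation likelihoods, not on rewards. Getting the union-bound bookkeeping over the particle-belief tree to line up exactly with the $5(4\absvalflat{\mathcal{A}}C)^{L+1}$ prefactor is the most delicate part of the argument.
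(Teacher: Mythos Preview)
Your high-level picture is right: the final recursions do separate cleanly into a ``sampling'' channel $\alpha_t$ and a ``reward-estimation'' channel $\beta_t$. But the proof path you sketch has a structural gap that the paper's argument avoids by going through an intermediate sample-based estimator.

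Specifically, the paper does \emph{not} do a direct backward induction on $\absvalflat{Q_{\OrigPOMDP,t}^{\pi}(b_t,a)-Q_{\PBMDP,t}^{\pi}(\PB{t},a)}$. Instead it introduces a Sparse Sampling-$\omega$-$\pi$ estimator $\hat{\tilde Q}_{\omega,t}^{\pi}(\PB{t},a)$ (a concrete random algorithm that expands a particle-belief tree and uses the estimated reward $\tilde r$), proves two separate coupled bounds
\[
\absvalflat{Q_{\OrigPOMDP,t}^{\pi}(b_t,a)-\hat{\tilde Q}_{\omega,t}^{\pi}(\PB{t},a)}\le\alpha_t',\qquad
\absvalflat{Q_{\PBMDP,t}^{\pi}(\PB{t},a)-\hat{\tilde Q}_{\omega,t}^{\pi}(\PB{t},a)}\le\beta_t',
\]
with $\alpha_t'=\lambda+\nu+\gamma\alpha_{t+1}'$, $\beta_t'=\nu+\gamma(\lambda+\beta_{t+1}')$, and then combines them by the triangle inequality. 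Only afterwards is the sum $\alpha_t'+\beta_t'$ re-grouped into the theorem's $\alpha_t+\beta_t$. Your direct induction skips this intermediate object, and that is where steps (b) and (c) break down: $Q_{\PBMDP,t}^{\pi}(\PB{t},a)$ is a \emph{theoretical} expectation over the PB-MDP transition, not a $C$-sample self-normalized average, so (b) is comparing two different kinds of objects; and (c) applies the induction hypothesis to a pair $(b_{t+1},\PB{t+1})$ whose coupling you never construct. The Sparse Sampling-$\omega$-$\pi$ estimator is exactly the device that manufactures that coupling, by generating common observations $z^{[I_i]}$ and updating both the theoretical belief and the particle belief along the same branches.

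A second, smaller point: the $2\nu$ in $\beta_t$ is not a ``reward-estimation gap'' between $Q_{\OrigPOMDP}$ and $Q_{\PBMDP}$, because both of those use the exact reward $r$. It appears only because the intermediate $\hat{\tilde Q}_{\omega}^{\pi}$ uses $\tilde r$, so each leg of the triangle inequality picks up one $\nu$. Your base-case explanation (``the second part is bounded through the reward tail, yielding the $2\nu$ in $\beta_L$'') therefore misidentifies the source of this term. If you want the bookkeeping to come out as $5(4C)^{L+1}$, you need the full decomposition on the POMDP leg (two SN bounds and two Hoeffding bounds, hence the factor $4$, plus one reward event) and a separate, simpler Hoeffding plus reward event on the PB-MDP leg; these are then merged under a common worst-case tail $\exp(-C\acute{k}^2)+\delta_r(\nu,N_r)$.
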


\begin{corollary}
    \label{crl:ArbitraryPrecisionBounds}
    For arbitrary precision $\varepsilon$ and accuracy $\delta$ we can choose constants $\lambda, \nu, C, N_r$ such that the following holds with probability of at least $1-\delta$:
    \begin{gather}
        \absvalflat{Q_{\OrigPOMDP,t}^{\Policy{},[{p_Z}\slash{q_Z}]}(\PB{t},a)-Q_{\PBMDP,t}^{\Policy{},[{p_Z}\slash{q_Z}]}(\PB{t}, a)}\leq \varepsilon.
    \end{gather}
\end{corollary}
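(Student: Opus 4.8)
The plan is to use Theorem~\ref{thm:ProbConverge} as a black box: for \emph{any} choice of constants $\lambda,\nu,C,N_r$ it supplies a deterministic error bound $\alpha_t+\beta_t$ that holds with probability at least $1-5(4\absvalflat{\mathcal{A}}C)^{L+1}\bigl(\exp(-C\acute{k}^{2})+\delta_{r}(\nu,N_r)\bigr)$ (and without the $\absvalflat{\mathcal{A}}$ factor if only a single policy is required). It then suffices to tune these four constants, in a carefully chosen order, so that simultaneously $\alpha_t+\beta_t\le\varepsilon$ and the failure probability is at most $\delta$.

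First I would solve the two linear recursions in closed form. Unrolling $\alpha_t=(1+\gamma)\lambda+\gamma\alpha_{t+1}$ with $\alpha_L=\lambda$ gives $\alpha_t=\lambda\bigl(\gamma^{L-t}+(1+\gamma)\sum_{j=0}^{L-t-1}\gamma^{j}\bigr)\le\lambda\cdot c_\alpha(L,\gamma)$ for an explicit constant $c_\alpha$ (at most $1+(1+\gamma)L$, and at most $1+(1+\gamma)/(1-\gamma)$ when $\gamma<1$); similarly $\beta_t\le 2\nu\cdot c_\beta(L,\gamma)$. Hence picking $\lambda\le\varepsilon/(2c_\alpha)$ and $\nu\le\varepsilon/(4c_\beta)$ already forces $\alpha_t+\beta_t\le\varepsilon$ for all $t$, irrespective of how $C$ and $N_r$ are chosen afterwards.

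With $\lambda$ and $\nu$ now frozen, I would bound the two summands of the failure probability one at a time. Since $\lambda$ is fixed, $k_{\max}(\lambda,C)=\frac{\lambda}{4V_{\max}d_{\infty}^{\max}}-\frac{1}{\sqrt C}\to\frac{\lambda}{4V_{\max}d_{\infty}^{\max}}>0$, so for all sufficiently large $C$ we have $k_{\max}>0$ and $\acute{k}=\min\{k_{\max},\lambda/4\sqrt2 V_{\max}\}\ge\acute{k}_0$ for a strictly positive constant $\acute{k}_0$ independent of $C$; therefore $5(4\absvalflat{\mathcal{A}}C)^{L+1}\exp(-C\acute{k}^{2})\le 5(4\absvalflat{\mathcal{A}})^{L+1}C^{L+1}\exp(-C\acute{k}_0^{2})\to0$ as $C\to\infty$, the exponential decay dominating the polynomial factor, so some finite $C$ makes this term at most $\delta/2$. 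Then, with $\nu$ and this $C$ fixed, the hypothesis $\delta_r(\nu,N_r)\to0$ as $N_r\to\infty$ lets me choose $N_r$ so that $5(4\absvalflat{\mathcal{A}}C)^{L+1}\delta_r(\nu,N_r)\le\delta/2$. Summing, the total failure probability is at most $\delta$, and on the complementary event Theorem~\ref{thm:ProbConverge} gives $\absvalflat{Q_{\OrigPOMDP,t}^{\pi,[{p_Z}\slash{q_Z}]}(\PB{t},a)-Q_{\PBMDP,t}^{\pi,[{p_Z}\slash{q_Z}]}(\PB{t},a)}\le\alpha_t+\beta_t\le\varepsilon$, which is the claim.

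The only genuinely delicate point is the order of the choices and the coupling between $\lambda$ and $C$ inside $\acute{k}$: one must fix $\lambda$ (hence the limiting value of $\acute{k}$) \emph{before} sending $C\to\infty$, for otherwise shrinking $\lambda$ to tighten $\varepsilon$ would simultaneously drive $k_{\max}\to0$ and annihilate the exponential decay, leaving the polynomial factor $(4\absvalflat{\mathcal{A}}C)^{L+1}$ uncontrolled. Everything else is routine manipulation of geometric sums and of the limit ``exponential beats polynomial''.
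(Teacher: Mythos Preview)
Your proposal is correct and follows essentially the same approach as the paper's own proof: fix $\lambda,\nu$ first so that the closed-form solutions of the recursions give $\alpha_t+\beta_t\le\varepsilon$, then with $\lambda$ frozen choose $C$ large enough via the ``exponential beats polynomial'' argument (exploiting that $\acute{k}$ is bounded below by a positive constant once $\lambda$ is fixed), and finally choose $N_r$ so the $\delta_r$ term is at most $\delta/2$. Your identification of the delicate ordering constraint---that $\lambda$ must be fixed before sending $C\to\infty$---is exactly the crux, and your presentation is if anything cleaner than the paper's, which carries along more explicit auxiliary constants to the same end.
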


By applying Corollary \ref{crl:ArbitraryPrecisionBounds} twice, once to the original reward function then once to the action cumulative bound, we can conclude that the estimated simplified PB-MDP value is probabilistically bounded from the original theoretical POMDP value.

We now arrive at our key theoretical result.
\begin{corollary} 
    \label{crl:JointApproximationBound}
    Assuming that $\mathcal{P}$ is an MDP planner that can approximate $Q$-values with arbitrary precision $\varepsilon^{\mathcal{P}}$ at an accuracy $1-\delta^{\mathcal{P}}$, we denote the precision and accuracy of the action value and action cumulative bound functions:
    \begin{gather}
        \OpProbFlat{\absvalflat{Q_{\PBMDP,t}^{\pi,q_Z}(\bar{b}_{t},a)-\Estim{Q}_{\PBMDP,t}^{\pi,q_Z}(\bar{b}_{t},a)}\leq\varepsilon^{\mathcal{P}}_{Q}}\geq 1-\delta^{\mathcal{P}}_{Q} \label{eq:PlannerBoundsQ}\\
        \OpProbFlat{\absvalflat{\Phi_{\PBMDP,t}^{\pi}(\bar{b}_{t},a)-\PhiImSaEst_{\PBMDP,t}^{\pi}(\bar{b}_{t},a)}\leq\varepsilon^{\mathcal{P}}_{\Phi}}\geq 1-\delta^{\mathcal{P}}_{\Phi} \label{eq:PlannerBoundsPhi}
    \end{gather}
    From Corollary \ref{crl:ArbitraryPrecisionBounds} it holds that we can choose constants $\lambda, \nu, C, N_r$ such that the following holds,
    \begin{gather}
        \OpProbFlat{\absvalflat{Q_{\OrigPOMDP,t}^{\pi,q_Z}(b_{t},a)-Q_{\PBMDP,t}^{\pi,q_Z}(\PB{t}, a)}\leq \varepsilon_{Q}}\geq 1-\delta_{Q} \label{eq:PBMDPBoundsQ}, \\
        \OpProbFlat{\absvalflat{\Phi_{\OrigPOMDP,t}^{\pi}(b_{t},a)-\RewardEstim{\Phi}_{\PBMDP,d}^{\pi}(\PB{t}, a)}\leq \varepsilon_{\Phi}}\geq 1-\delta_{\Phi}. \label{eq:PBMDPBoundsPhi}
    \end{gather}
    Then with probability of at least $1-(\delta_{Q}+\delta^{\mathcal{P}}_{Q}+\delta_{\Phi}+\delta^{\mathcal{P}}_{\Phi})$
    \begin{gather}
        \absvalflat{Q_{\OrigPOMDP,t}^{\pi,p_Z}(b_{t},a)-\Estim{Q}_{\PBMDP,t}^{\pi,q_Z}(\bar{b}_{t},a)}\leq \\
        \PhiImSaEst_{\PBMDP,t}^{\pi}(\bar{b}_{t},a) + \varepsilon_{Q} + \varepsilon^{\mathcal{P}}_{Q} + \varepsilon_{\Phi} + \varepsilon^{\mathcal{P}}_{\Phi}.
    \end{gather}
\end{corollary}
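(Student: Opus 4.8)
The plan is to chain together the four probabilistic bounds in the hypotheses via a triangle inequality, and then apply a union bound to aggregate the failure probabilities. The key observation is that the desired quantity $\absvalflat{Q_{\OrigPOMDP,t}^{\pi,p_Z}(b_{t},a)-\Estim{Q}_{\PBMDP,t}^{\pi,q_Z}(\bar{b}_{t},a)}$ decomposes along the path visible in Figure \ref{fig:inkscape_figure}: we go from the theoretical POMDP value with the original model $p_Z$, to the theoretical POMDP value with the simplified model $q_Z$ (edge A, controlled by Theorem \ref{thm:LocalActionBound}), then to its PB-MDP counterpart (edge B, controlled by Corollary \ref{crl:ArbitraryPrecisionBounds} / \eqref{eq:PBMDPBoundsQ}), and finally to the planner's empirical estimate (edge C, controlled by the planner guarantee \eqref{eq:PlannerBoundsQ}). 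In parallel, the cumulative bound $\Phi$ needs to be transported from its theoretical POMDP form to the empirical estimate $\PhiImSaEst_{\PBMDP,t}^{\pi}$ via \eqref{eq:PBMDPBoundsPhi} and \eqref{eq:PlannerBoundsPhi}.

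Concretely, first I would write the deterministic triangle-inequality skeleton, inserting the PB-MDP theoretical value $Q_{\PBMDP,t}^{\pi,q_Z}(\PB{t},a)$ and the PB-MDP theoretical value-with-simplified-model term as intermediate waypoints:
\begin{align}
    \absvalflat{Q_{\OrigPOMDP,t}^{\pi,p_Z}(b_{t},a)-\Estim{Q}_{\PBMDP,t}^{\pi,q_Z}(\bar{b}_{t},a)}
    &\leq \absvalflat{Q_{\OrigPOMDP,t}^{\pi,p_Z}(b_{t},a)-Q_{\OrigPOMDP,t}^{\pi,q_Z}(b_{t},a)} \nonumber \\
    &\quad + \absvalflat{Q_{\OrigPOMDP,t}^{\pi,q_Z}(b_{t},a)-Q_{\PBMDP,t}^{\pi,q_Z}(\PB{t},a)} \nonumber \\
    &\quad + \absvalflat{Q_{\PBMDP,t}^{\pi,q_Z}(\PB{t},a)-\Estim{Q}_{\PBMDP,t}^{\pi,q_Z}(\bar{b}_{t},a)}. \nonumber
\end{align}
The first term on the right is bounded deterministically by $\Phi_{\OrigPOMDP,t}^{\pi}(b_t,a)$ using \eqref{eq:PhiBoundQ} of Theorem \ref{thm:LocalActionBound}; the second by $\varepsilon_Q$ on an event of probability $\geq 1-\delta_Q$ from \eqref{eq:PBMDPBoundsQ}; the third by $\varepsilon^{\mathcal{P}}_Q$ on an event of probability $\geq 1-\delta^{\mathcal{P}}_Q$ from \eqref{eq:PlannerBoundsQ}. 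Next I would bound $\Phi_{\OrigPOMDP,t}^{\pi}(b_t,a)$ itself: by \eqref{eq:PBMDPBoundsPhi} it is at most $\PhiImSa_{\PBMDP,d}^{\pi}(\PB{t},a)+\varepsilon_\Phi$ on an event of probability $\geq 1-\delta_\Phi$, and by \eqref{eq:PlannerBoundsPhi} that is in turn at most $\PhiImSaEst_{\PBMDP,t}^{\pi}(\bar b_t,a)+\varepsilon^{\mathcal{P}}_\Phi$ on an event of probability $\geq 1-\delta^{\mathcal{P}}_\Phi$. Substituting back and collecting constants yields the claimed right-hand side $\PhiImSaEst_{\PBMDP,t}^{\pi}(\bar b_t,a)+\varepsilon_Q+\varepsilon^{\mathcal{P}}_Q+\varepsilon_\Phi+\varepsilon^{\mathcal{P}}_\Phi$.

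Finally I would handle the probability accounting: the bound holds on the intersection of the four good events, and by the union bound the complement has probability at most $\delta_Q+\delta^{\mathcal{P}}_Q+\delta_\Phi+\delta^{\mathcal{P}}_\Phi$, giving success probability at least $1-(\delta_Q+\delta^{\mathcal{P}}_Q+\delta_\Phi+\delta^{\mathcal{P}}_\Phi)$ as stated. Most of this is routine bookkeeping; the only genuinely delicate point is ensuring the intermediate PB-MDP quantities are the \emph{same} objects across the two applications of Corollary \ref{crl:ArbitraryPrecisionBounds} and the planner guarantee — in particular that the policy $\pi$, the particle belief $\PB{t}$, and the depth/time index match up — and that the notational conflation $\PhiImSaEst\bydef\PhiImSaEstOrig$ introduced just before the statement is used consistently, so that edges B and C for the cumulative bound compose cleanly. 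I would also double-check that Theorem \ref{thm:LocalActionBound}'s bound \eqref{eq:PhiBoundQ} is stated for the theoretical $\Phi_{\OrigPOMDP,t}^{\pi}$ (not its estimator), which is indeed the quantity fed into \eqref{eq:PBMDPBoundsPhi}; this alignment is the main thing that could go wrong if the definitions were not set up carefully.
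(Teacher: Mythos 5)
Your proposal matches the paper's proof essentially verbatim: the same three-term triangle-inequality decomposition through $Q_{\OrigPOMDP,t}^{\pi,q_Z}(b_{t},a)$ and $Q_{\PBMDP,t}^{\pi,q_Z}(\PB{t},a)$, the deterministic bound on the first term by $\Phi_{\OrigPOMDP,t}^{\pi}(b_{t},a)$ via Theorem \ref{thm:LocalActionBound}, the subsequent chaining of \eqref{eq:PBMDPBoundsPhi} and \eqref{eq:PlannerBoundsPhi} to replace $\Phi_{\OrigPOMDP,t}^{\pi}$ by $\PhiImSaEst_{\PBMDP,t}^{\pi}+\varepsilon_{\Phi}+\varepsilon^{\mathcal{P}}_{\Phi}$, and the union bound over the four failure events. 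Your closing caveat about aligning the intermediate $\Phi$ quantities across \eqref{eq:PBMDPBoundsPhi} and \eqref{eq:PlannerBoundsPhi} is a fair observation about the paper's notation, but it does not change the argument.
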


In summary, any planner that can approximate the PB-MDP values $Q_{\PBMDP,t}^{\pi,q_Z}(\PB{t},a)$ with high probability can also approximate the cumulative bound $\Phi_{\PBMDP, t}^{\Policy{}}(\PB{t}, a)$ with high probability, since it is mathematically formulated like a state reward.
Therefore, we can construct a bound for the theoretical value $Q_{\OrigPOMDP,t}^{\Policy{},p_Z}(b_{t},a)$ with high probability from online calculations that do not involve access to $p_Z$.

An important observation is that the probabilistic bound obtained by Theorem \ref{thm:ProbConverge} is independent of the chosen policy, i.e. constant w.r.t. the actions. 
Therefore, when using the bound in Corollary \ref{crl:JointApproximationBound} for decision-making, if our goal is to distinguish between actions that result in maximal lower or upper bound, i.e. $\max_a [\mathcal{LB}\slash\mathcal{UB}](Q_{\OrigPOMDP,t}^{\pi,p_Z}(b_{t},a)) $, we will obtain the same action choice by computation of $\max_a {\Estim{Q}_{\PBMDP,t}^{\pi,q_Z}(\bar{b}_{t},a) [- \slash +] \PhiImSaEst_{\PBMDP,t}^{\pi}(\bar{b}_{t},a)}$.

\section{Implementation}

\begin{figure*}[t]
    \centering
    \begin{subfigure}[b]{.495\textwidth}
      \centering
      \includegraphics[width=\linewidth]{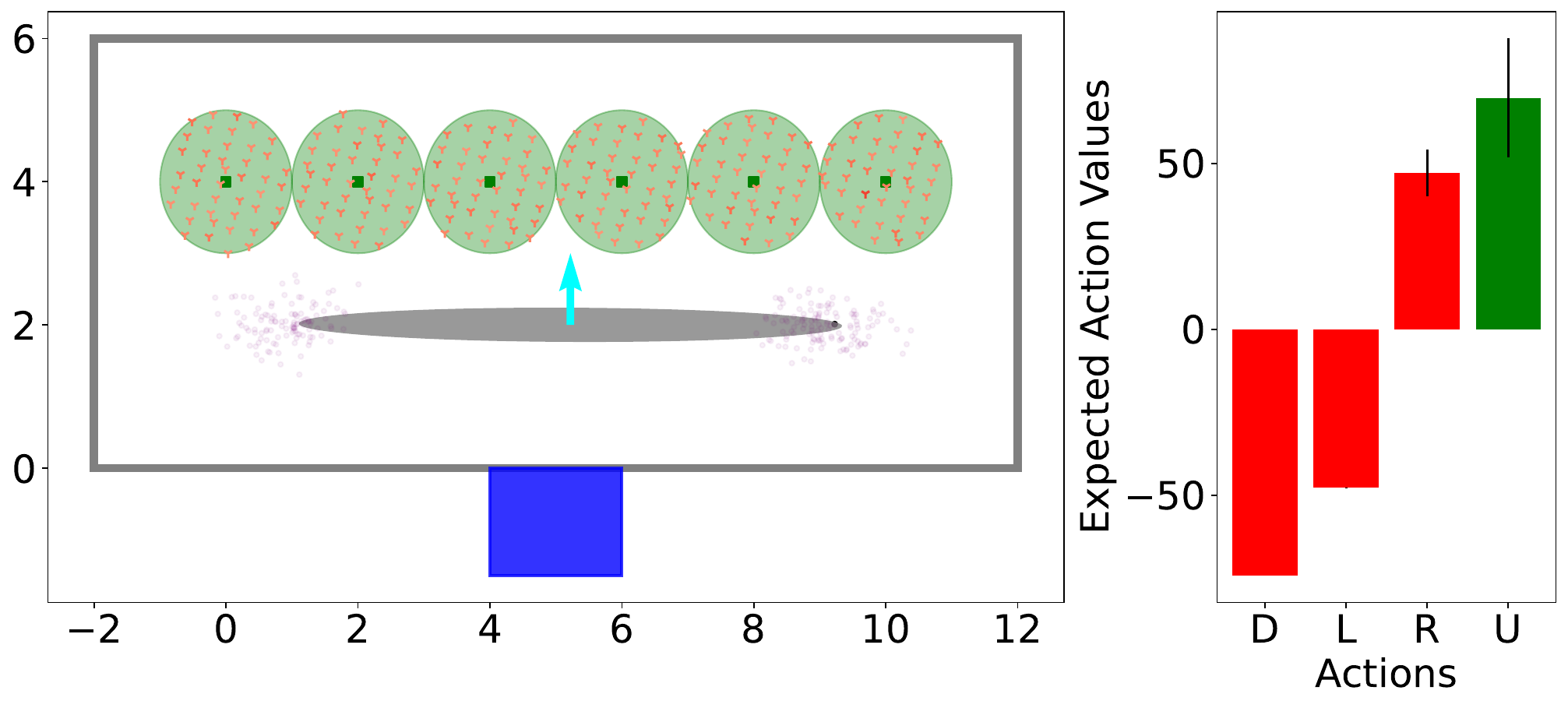}
      \caption{Start of scenario, $t=0$.}
      \label{fig:sub1}
    \end{subfigure}
    \begin{subfigure}[b]{.495\textwidth}
      \centering
      \includegraphics[width=\linewidth]{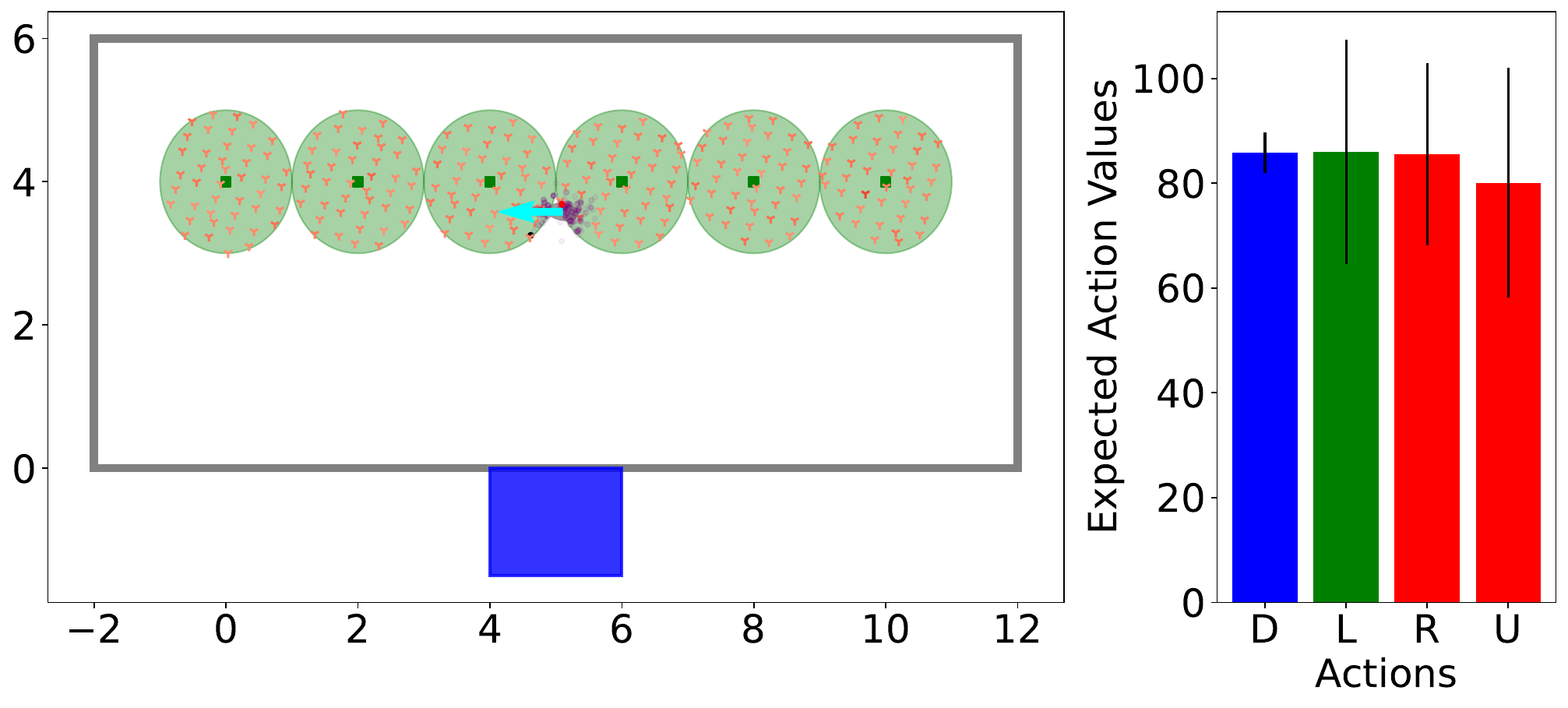}
      \caption{Middle of scenario, $t=4$.}
      \label{fig:sub2}
    \end{subfigure}
    \caption{
        The results of two planning sessions in 2D beacons.
        The goal is indicated by the blue rectangle, the beacons and their radii by the green squares and circles, and the outer walls by the grey outer rectangle.
        The filtered delta states $\{x_n^{\Delta}\}_{n=1}^{N_{\Delta}^{\textit{kept}}}$ are indicated by the tri-downs, with color relative to estimated TV-distance of the simplified observation model $7.06\leq\DZEst \cdotp10^{2}\leq 12.08$.
        The colored dots indicate: the true state in black, the observation in red and the particle belief in purple.
        The belief empirical mean and covariance are the grey ellipse.
        The bars on the right depict $\Estim{Q}_t^{q_Z}$ for all actions, and $\PhiImSaEst_t$ as symmetric error bars.
        The action chosen by the simplified value policy $\pi^{q_Z}_t$ is colored in green, and the lower bound policy $\PiLB_t$ in blue if different.
        At $t=4$ we can see an inconsistency of action order between $\pi^{q_Z}_t$ that chooses left, whereas $\PiLB_t$ chooses down.
    }
    \label{fig:environment}
\end{figure*}

We verify the computability of our approach in a 2D beacons POMDP. \ifthenelse{\value{arxiv}>0} {\footnote[1]{Our code is publicly available at \url{https://github.com/IdanLevYehudi/SimplifyingObsPOMDP}}} {}
We integrate the computation of $\MImSa_{i}$ and $\PhiImSaEst_{\PBMDP,t}$ into PFT-DPW, and showcase an example of where the bounds could affect a policy's decision-making. 

\subsection{Simulative Setting}

Our experimental setting is a 2D light-dark inspired simulation, shown in figure \ref{fig:environment}.

An agent is in a wall-surrounded arena, with a gate at the bottom to a goal region. The agent's starting location is either to the left or to the right of the goal region.
The agent's task is to enter the goal region without colliding with a wall.
The observations are noisy measurements of the agent's location, being more certain when in the "light" region, and less when in the "dark" region.

The light region $\mathcal{X}_{\text{light}}$ is defined by circles centered at each of the $6$ beacons located at the top of the arena, and $\mathcal{X}_{\text{dark}}=\mathcal{X}\setminus \mathcal{X}_{\text{light}}$.
The observation model in the light region $p_{Z}(z\mid x\in\mathcal{X}_{\text{light}})$ is defined as a Gaussian mixture model (GMM) with 1126 components arranged such that they approximately form a truncated Gaussian distribution centered at $x$.
The simplified observation model differs only in the light region, and it approximates $p_Z$ with a single Gaussian: $q_{Z}(z\mid x\in \mathcal{X}_{\text{light}})\sim\mathcal{N}(x,\Sigma_{\text{light}}^{q_Z})$.
We refer readers to \cite[1.4.16]{BarShalom04book} on approximating GMMs with a single Gaussian.
In the dark region, $p_Z(z\mid x\in \mathcal{X}_{\text{dark}}) = q_Z(z\mid x \in \mathcal{X}_{\text{dark}}) = \mathcal{N}(x, \Sigma_\textit{dark})$

This setting demonstrates a case where the original observation model is an overly parameterized model, like a complex neural network, and one would like to replace it with a less parameterized albeit similar model.

The action space is $\mathcal{A}=\{(\pm1,0),(0,\pm1)\}$. The transition model $p_T(x^\prime \mid x,a)$ is a Gaussian centered at the agent's location plus the action.
The horizon is $L=15$, and the POMDP will terminate early if the agent enters the goal region or collides with a wall.

The reward is only state and time dependent, and is a sum of three indicators: $r_{t}(x)=R_{\textit{hit}}\cdot\boldsymbol{1}_{x\in\mathcal{X}_{\textit{goal}}}+R_{\textit{miss}}\cdot\boldsymbol{1}_{x\notin\mathcal{X}_{\textit{goal}}}+R_{\textit{collide}}\cdot\boldsymbol{1}_{x\in\mathcal{X}_{\textit{collision}}}$.
In all time steps, $R_{\textit{hit}}=100$, $R_{\textit{collide}}=-50$.
The miss reward is $R_{\textit{miss}}=-50$ if $t=L$ and is $-1$ otherwise. The discount factor is $\gamma=1$.

Further details of the experimental setup can be found in the supplementary material.

\subsection{Implementation of Bounds} \label{sec:comp_dz}

\begin{figure}[t]
    \centering
    \includegraphics[width=0.8\columnwidth]{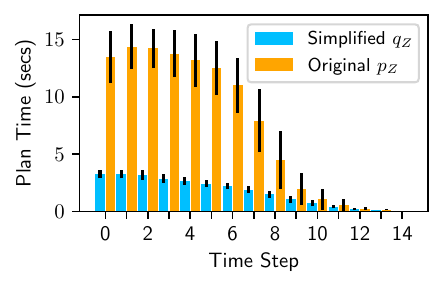}
    \caption{
        Mean and standard deviation of planning duration over 100 scenarios vs. scenario time step, with the original observation model $p_Z$ or simplified model $q_Z$.
        }
    \label{fig:results_times}
\end{figure}

When sampling delta states $\{x_{n}^{\Delta}\} _{n=1}^{N_{\Delta}}$ offline, without prior knowledge of which states are more likely, we choose a uniform $Q_{0}$.
In order to assure even coverage of the state space, we chose to sample them from the quasi-random sequence $R_2$, which has been shown empirically to minimize the discrepancy \cite{Martin18el}.
Hence, we perform in practice quasi Monte Carlo method (QMCM) for the computation of $\MImSa_i$ \cite{Caflisch1998an}.
It has been shown in many empirical examples that QMCM obtains faster convergence in practice than regular MC with an equivalent number of samples.
In theory, it is possible to provide a deterministic upper bound to the QMCM integration error via the Koksma-Hlwaka inequality \cite[5.6]{Lemieux2009book}.
However, it is generally hard to compute, and infinite for several very simple functions.

For each sampled $x_{n}^{\Delta}$ we estimate $\DZ$ via observation samples, based on assumption \ref{assum:ModelsSampleQuery}
We perform the following importance sampling estimation w.r.t. $(p_Z+q_Z)\slash 2$:
\begin{gather}
    \DZEst(x_{n}^{\Delta}) = \sum_{j=1}^{N_Z}{2\cdot\frac{\lvert p_Z(z_{j}^{n}\mid x_{n}^{\Delta}) - q_Z(z_{j}^{n}\mid x_{n}^{\Delta}) \rvert}{p_Z(z_{j}^{n}\mid x_{n}^{\Delta}) + q_Z(z_{j}^{n}\mid x_{n}^{\Delta})}}
\end{gather}
where $\{ z_{j}^{n} \}_{n=1}^{N_Z} \overset{\textit{i.i.d.}}{\sim}(p_Z+q_Z)\slash 2$.
It is possible to quantify the MC estimation error from this step, however we assume that with enough offline compute power it could be made negligible, such that $\DZ\approx\DZEst$.

We did several optimizations in order to compute $\MImSa_i$ in a time efficient manner.
The first is by pre-filtering to only keep $x_{n}^{\Delta}$ for which $\DZ(x_{n}^{\Delta}) > \Delta_{\textit{Thresh}}$.
The second optimization is to only consider sampled states within a truncation distance of $d_T$ from state particle $x_{i}^{j}$ in \eqref{eq:StateBoundApprox}.
We implemented this by keeping all $x_{n}^{\Delta}$ in a KD-tree for efficient radius queries \cite{maneewongvatana1999wcg}.
We chose $\Delta_{\textit{Thresh}}$ and $d_T$ such that the error in computing $\MImSa_i$ is at most $V^{\max}\cdotp 10^{-4}$.
Lastly, since the runtime complexity of $\MImSa_i$ grows linearly with $C$, the number of particles in the belief $\bar{b}_i$, we limit the number of particles used to $N_x$ by performing MC estimation w.r.t. the particle belief: $\Estim{\MImSa}_{i}(\bar{b}_i,a)=\frac{1}{N_x}\sum_{j=1}^{N_X}{\MImSa_{i}(x_{i}^{j}, a)}$ where $\{ x_{i}^{j} \}_{j=1}^{N_X} \overset{\textit{i.i.d.}}{\sim} \bar{b}_{i}$.

\subsection{Empirical Bound Evaluation}

\begin{figure}[t]
    \centering
    \includegraphics[width=0.8\columnwidth]{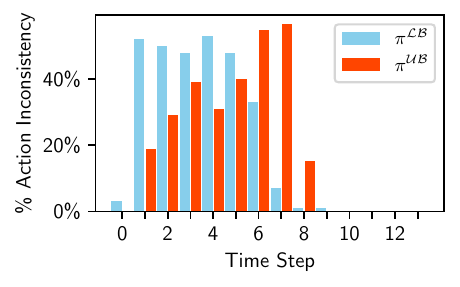}
    \caption{
    Percentage of scenarios in each time step in which the lower or upper bound policies, $\PiLB$ or $\PiUB$, chose an action different from the simplified value policy $\pi^{q_Z}$.
    }
    \label{fig:results_order_inversion}
\end{figure}

In all scenarios, a planning session is fixed at 500 simulations of PFT-DPW with the same parameters, as described in the supplementary material.

We record for each time step the estimated expected value $\Estim{Q}_{t}^{[p_Z\slash q_Z]}$ and the plan session duration.
In simplified planning, we also record the expected action cumulative bound $\PhiImSaEst_{\PBMDP,t}$.
The policy is maximization of the estimated action value function based on the original or simplified model, i.e. $\pi^{[p_Z\slash q_Z]}_{t}\bydef \max_a \Estim{Q}_{t}^{[p_Z\slash q_Z]}(\bar{b}_{t},a)$, and we respectively name them as the original or simplified value policy.
After each planning session, we apply the selected action, update the particle belief according to the observation received, and continue the scenario until the POMDP terminates.

In our first test, we run 100 scenarios of each planning scheme - once with the original observation model, and once with the simplified with the additional computation of $\PhiImSaEst_{\PBMDP,t}$.
As shown in figure \ref{fig:results_times}, the simplified planning time is greatly reduced for the same number of simulations compared to planning with the original model.
These results were expected because of increased overhead for sampling and evaluating the PDF of the original observation model, leading to longer planning times even when additionally computing $\PhiImSaEst$ in the simplified planning.

In our second test, we quantified how often the simplified policy is different from the lower or upper bound policies.
We denote the lower bound policy as $\PiLB_t \bydef \argmax_a \{\Estim{Q}_t^{q_Z}(\PB{t},a) - \PhiImSaEst_t(\PB{t},a)\}$ and the upper bound policy as $\PiUB_t \bydef \argmax_a \{\Estim{Q}_t^{q_Z}(\PB{t},a) + \PhiImSaEst_t(\PB{t},a)\}$.
In our results in figure \ref{fig:results_order_inversion}, we can see that there is a great difference between the policies, in particular in time steps 1-7, which is when the agent is mostly around the light region.
$\PiLB$ tends to steer away from the light region, hence it differs mostly in the earlier time steps, whereas $\PiUB$ prefers the light region, hence it chooses to stay there during the descent of the agent towards the goal.
These results indicate that the bounds are non-trivial, and corresponding policies do point towards different objectives as the scenario progresses.

\section{Conclusion}

This paper builds upon the paradigm of solving POMDP planning problems with simplification for adhering to computational limitations. 
We suggest a planning framework with a simplified observation model, to the end of practically solving POMDPs with complex high dimensional observations like visual observations, with finite time performance guarantees.
We formulate a novel bound based on local reweighting of pre-calculated TV-distances at pre-sampled states, and show that its estimator bounds with high probability the theoretical value function of the original problem.
Finally, an example showcasing how the bounds can influence the decision-making process is presented for both the lower and upper bound policies.
In future research, we envision the use of our bounds during the planning process itself, for pruning of action branches to the end of runtime improvement, or for certifying performance when they're computed explicitly.

\section*{Acknowledgements}
This work was supported by the Israel Science Foundation (ISF) and by US NSF/US-Israel BSF.


\bibliography{refs}

\begin{thebibliography}{37}
\providecommand{\natexlab}[1]{#1}

\bibitem[{Bar-Shalom, Li, and Kirubarajan(2004)}]{BarShalom04book}
Bar-Shalom, Y.; Li, X.~R.; and Kirubarajan, T. 2004.
\newblock \emph{Estimation with applications to tracking and navigation: theory
  algorithms and software}.
\newblock John Wiley \& Sons.

\bibitem[{Barenboim, Lev-Yehudi, and Indelman(2023)}]{Barenboim23ral2}
Barenboim, M.; Lev-Yehudi, I.; and Indelman, V. 2023.
\newblock Data Association Aware POMDP Planning with Hypothesis Pruning
  Performance Guarantees.
\newblock \emph{{IEEE} Robotics and Automation Letters (RA-L)}.

\bibitem[{Caflisch(1998)}]{Caflisch1998an}
Caflisch, R.~E. 1998.
\newblock Monte carlo and quasi-monte carlo methods.
\newblock \emph{Acta numerica}, 7: 1--49.

\bibitem[{Deglurkar et~al.(2023)Deglurkar, Lim, Tucker, Sunberg, Faust, and
  Tomlin}]{Deglurkar23l4dc}
Deglurkar, S.; Lim, M.~H.; Tucker, J.; Sunberg, Z.~N.; Faust, A.; and Tomlin,
  C. 2023.
\newblock Compositional Learning-based Planning for Vision POMDPs.
\newblock In \emph{Learning for Dynamics and Control Conference}, 469--482.
  PMLR.

\bibitem[{Doucet, {de Freitas}, and Gordon(2001)}]{Doucet00}
Doucet, A.; {de Freitas}, N.; and Gordon, N., eds. 2001.
\newblock \emph{Sequential {M}onte {C}arlo Methods In Practice}.
\newblock New York: Springer-Verlag.

\bibitem[{Durrett(2019)}]{Durrett19book}
Durrett, R. 2019.
\newblock \emph{Probability: theory and examples}, volume~49.
\newblock Cambridge university press.

\bibitem[{Eslami et~al.(2018)Eslami, Rezende, Besse, Viola, Morcos, Garnelo,
  Ruderman, Rusu, Danihelka, Gregor et~al.}]{Eslami18neural}
Eslami, S.~A.; Rezende, D.~J.; Besse, F.; Viola, F.; Morcos, A.~S.; Garnelo,
  M.; Ruderman, A.; Rusu, A.~A.; Danihelka, I.; Gregor, K.; et~al. 2018.
\newblock Neural scene representation and rendering.
\newblock \emph{Science}, 360(6394): 1204--1210.

\bibitem[{Ha and Schmidhuber(2018)}]{Ha18nips}
Ha, D.; and Schmidhuber, J. 2018.
\newblock Recurrent world models facilitate policy evolution.
\newblock \emph{Advances in Neural Information Processing Systems (NIPS)}, 31.

\bibitem[{Hoerger and Kurniawati(2021)}]{Hoerger21icra}
Hoerger, M.; and Kurniawati, H. 2021.
\newblock An On-Line POMDP Solver for Continuous Observation Spaces.
\newblock In \emph{IEEE Intl. Conf. on Robotics and Automation (ICRA)},
  7643--7649. IEEE.

\bibitem[{Hoerger et~al.(2020)Hoerger, Kurniawati, Bandyopadhyay, and
  Elfes}]{hoerger20wafr}
Hoerger, M.; Kurniawati, H.; Bandyopadhyay, T.; and Elfes, A. 2020.
\newblock Linearization in motion planning under uncertainty.
\newblock In \emph{Intl. Workshop on the Algorithmic Foundations of Robotics
  (WAFR)}, 272--287. Springer.

\bibitem[{Hoerger, Kurniawati, and Elfes(2023)}]{Hoerger23ijrr}
Hoerger, M.; Kurniawati, H.; and Elfes, A. 2023.
\newblock Multilevel Monte Carlo for solving POMDPs on-line.
\newblock In \emph{Intl. J. of Robotics Research}, volume~42, 196--213. Sage
  Publications Sage UK: London, England.

\bibitem[{Igl et~al.(2018)Igl, Zintgraf, Le, Wood, and Whiteson}]{Igl18pmlr}
Igl, M.; Zintgraf, L.; Le, T.~A.; Wood, F.; and Whiteson, S. 2018.
\newblock Deep variational reinforcement learning for POMDPs.
\newblock In \emph{International Conference on Machine Learning}, 2117--2126.
  PMLR.

\bibitem[{Jonschkowski, Rastogi, and Brock(2018)}]{Jonschkowski18rss}
Jonschkowski, R.; Rastogi, D.; and Brock, O. 2018.
\newblock {Differentiable Particle Filters: End-to-End Learning with
  Algorithmic Priors}.
\newblock In \emph{Robotics: Science and Systems (RSS)}.

\bibitem[{Kaelbling, Littman, and Cassandra(1998)}]{Kaelbling98ai}
Kaelbling, L.~P.; Littman, M.~L.; and Cassandra, A.~R. 1998.
\newblock Planning and acting in partially observable stochastic domains.
\newblock \emph{Artificial intelligence}, 101(1): 99--134.

\bibitem[{Karkus, Hsu, and Lee(2017)}]{Karkus17nips}
Karkus, P.; Hsu, D.; and Lee, W.~S. 2017.
\newblock Qmdp-net: Deep learning for planning under partial observability.
\newblock In \emph{Advances in Neural Information Processing Systems (NIPS)},
  4694--4704.

\bibitem[{Karkus, Hsu, and Lee(2018)}]{Karkus18corl}
Karkus, P.; Hsu, D.; and Lee, W.~S. 2018.
\newblock Particle Filter Networks with Application to Visual Localization.
\newblock In \emph{Conference on Robot Learning}.

\bibitem[{Kearns, Mansour, and Ng(2002)}]{Kearns02jml}
Kearns, M.; Mansour, Y.; and Ng, A.~Y. 2002.
\newblock A sparse sampling algorithm for near-optimal planning in large Markov
  decision processes.
\newblock \emph{Machine learning}, 49(2): 193--208.

\bibitem[{Kingma and Welling(2014)}]{Kingma14iclr}
Kingma, D.~P.; and Welling, M. 2014.
\newblock Auto-encoding Variational Bayes.
\newblock In \emph{International Conference on Learning Representations
  (ICLR)}.

\bibitem[{Kong, Liu, and Wong(1994)}]{Kong94}
Kong, A.; Liu, J.~S.; and Wong, W.~H. 1994.
\newblock Sequential imputations and {Bayesian} missing data problems.
\newblock \emph{Journal of the American Statistical Association}, 89(425):
  278--288.

\bibitem[{Lemieux(2009)}]{Lemieux2009book}
Lemieux, C. 2009.
\newblock \emph{Monte Carlo and Quasi-Monte Carlo Sampling}.
\newblock Springer.

\bibitem[{Lim et~al.(2022)Lim, Becker, Kochenderfer, Tomlin, and
  Sunberg}]{Lim22arxiv}
Lim, M.~H.; Becker, T.~J.; Kochenderfer, M.~J.; Tomlin, C.~J.; and Sunberg,
  Z.~N. 2022.
\newblock Generalized optimality guarantees for solving continuous observation
  POMDPs through particle belief MDP approximation.
\newblock \emph{arXiv preprint arXiv:2210.05015}.

\bibitem[{Lim et~al.(2023)Lim, Becker, Kochenderfer, Tomlin, and
  Sunberg}]{Lim23jair}
Lim, M.~H.; Becker, T.~J.; Kochenderfer, M.~J.; Tomlin, C.~J.; and Sunberg,
  Z.~N. 2023.
\newblock Optimality guarantees for particle belief approximation of POMDPs.
\newblock \emph{Journal of Artificial Intelligence Research}, 77: 1591--1636.

\bibitem[{Lim, Tomlin, and Sunberg(2020)}]{Lim20ijcai}
Lim, M.~H.; Tomlin, C.; and Sunberg, Z.~N. 2020.
\newblock Sparse Tree Search Optimality Guarantees in POMDPs with Continuous
  Observation Spaces.
\newblock In \emph{Intl. Joint Conf. on AI (IJCAI)}, 4135--4142.

\bibitem[{Lim, Tomlin, and Sunberg(2021)}]{Lim21cdc}
Lim, M.~H.; Tomlin, C.~J.; and Sunberg, Z.~N. 2021.
\newblock Voronoi progressive widening: efficient online solvers for continuous
  state, action, and observation POMDPs.
\newblock In \emph{2021 60th IEEE conference on decision and control (CDC)},
  4493--4500. IEEE.

\bibitem[{Maneewongvatana and Mount(1999)}]{maneewongvatana1999wcg}
Maneewongvatana, S.; and Mount, D.~M. 1999.
\newblock It’s okay to be skinny, if your friends are fat.
\newblock In \emph{Center for geometric computing 4th annual workshop on
  computational geometry}, volume~2, 1--8. Citeseer.

\bibitem[{Papadimitriou and Tsitsiklis(1987)}]{Papadimitriou87math}
Papadimitriou, C.; and Tsitsiklis, J. 1987.
\newblock The complexity of {Markov} decision processes.
\newblock \emph{Mathematics of operations research}, 12(3): 441--450.

\bibitem[{Rezende and Mohamed(2015)}]{Rezende15icml}
Rezende, D.; and Mohamed, S. 2015.
\newblock Variational inference with normalizing flows.
\newblock In \emph{Intl. Conf. on Machine Learning (ICML)}, 1530--1538. PMLR.

\bibitem[{Roberts(2018)}]{Martin18el}
Roberts, M. 2018.
\newblock The Unreasonable Effectiveness of Quasirandom Sequences.
\newblock
  \url{https://extremelearning.com.au/unreasonable-effectiveness-of-quasirandom-sequences/}.
\newblock Accessed: 2023-08-12.

\bibitem[{Shienman and Indelman(2022)}]{Shienman22icra}
Shienman, M.; and Indelman, V. 2022.
\newblock D2A-BSP: Distilled Data Association Belief Space Planning with
  Performance Guarantees Under Budget Constraints.
\newblock In \emph{IEEE Intl. Conf. on Robotics and Automation (ICRA)}.

\bibitem[{Silver and Veness(2010)}]{Silver10nips}
Silver, D.; and Veness, J. 2010.
\newblock Monte-Carlo planning in large POMDPs.
\newblock In \emph{Advances in Neural Information Processing Systems (NIPS)},
  2164--2172.

\bibitem[{Sohn, Lee, and Yan(2015)}]{Sohn2015nips}
Sohn, K.; Lee, H.; and Yan, X. 2015.
\newblock Learning structured output representation using deep conditional
  generative models.
\newblock In \emph{Advances in neural information processing systems},
  3483--3491.

\bibitem[{Sunberg and Kochenderfer(2018)}]{Sunberg18icaps}
Sunberg, Z.; and Kochenderfer, M. 2018.
\newblock Online algorithms for POMDPs with continuous state, action, and
  observation spaces.
\newblock In \emph{Proceedings of the International Conference on Automated
  Planning and Scheduling}, volume~28.

\bibitem[{Sztyglic and Indelman(2022)}]{Sztyglic22iros}
Sztyglic, O.; and Indelman, V. 2022.
\newblock Speeding up Online POMDP Planning via Simplification.
\newblock In \emph{IEEE/RSJ Intl. Conf. on Intelligent Robots and Systems
  (IROS)}.

\bibitem[{Tsybakov(2009)}]{tsybakov2009book}
Tsybakov, A.~B. 2009.
\newblock \emph{Introduction to nonparametric estimation}.
\newblock New York ; London : Springer.

\bibitem[{Wang et~al.(2020)Wang, Liu, Wu, Zhu, Du, Fei-Fei, and
  Tenenbaum}]{wang20ijcai}
Wang, Y.; Liu, B.; Wu, J.; Zhu, Y.; Du, S.~S.; Fei-Fei, L.; and Tenenbaum,
  J.~B. 2020.
\newblock DualSMC: Tunneling Differentiable Filtering and Planning under
  Continuous POMDPs.
\newblock In Bessiere, C., ed., \emph{Proceedings of the Twenty-Ninth
  International Joint Conference on Artificial Intelligence, {IJCAI-20}},
  4190--4198. International Joint Conferences on Artificial Intelligence
  Organization.
\newblock Main track.

\bibitem[{Winkler et~al.(2019)Winkler, Worrall, Hoogeboom, and
  Welling}]{Winkler2019arxiv}
Winkler, C.; Worrall, D.; Hoogeboom, E.; and Welling, M. 2019.
\newblock Learning likelihoods with conditional normalizing flows.
\newblock \emph{arXiv preprint arXiv:1912.00042}.

\bibitem[{Ye et~al.(2017)Ye, Somani, Hsu, and Lee}]{Ye17jair}
Ye, N.; Somani, A.; Hsu, D.; and Lee, W.~S. 2017.
\newblock DESPOT: Online POMDP planning with regularization.
\newblock \emph{JAIR}, 58: 231--266.

\end{thebibliography}


\begin{thebibliography}{2}
\providecommand{\natexlab}[1]{#1}

\bibitem[{Lim et~al.(2022)Lim, Becker, Kochenderfer, Tomlin, and
  Sunberg}]{Lim22arxiv}
Lim, M.~H.; Becker, T.~J.; Kochenderfer, M.~J.; Tomlin, C.~J.; and Sunberg,
  Z.~N. 2022.
\newblock Generalized optimality guarantees for solving continuous observation
  POMDPs through particle belief MDP approximation.
\newblock \emph{arXiv preprint arXiv:2210.05015}.

\bibitem[{Sunberg and Kochenderfer(2018)}]{Sunberg18icaps}
Sunberg, Z.; and Kochenderfer, M. 2018.
\newblock Online algorithms for POMDPs with continuous state, action, and
  observation spaces.
\newblock In \emph{Proceedings of the International Conference on Automated
  Planning and Scheduling}, volume~28.

\end{thebibliography}

\ifthenelse{\value{arxiv}>0} 
{
    \appendix
    \makeatletter
\newcommand{\labelrpt}[1]{
    \@ifundefined{r@#1}{\label{#1}}{}
}
\makeatother

\section{Proofs}

\subsection{Lemma \ref{lem:StateExptEquivRpt}}

\begin{lemmarpt}
    \label{lem:StateExptEquivRpt}
    The expected belief-dependent reward w.r.t. histories, is equivalent to the expected state-dependent reward w.r.t. the joint distribution of states and observations.
    \begin{gather}
        \ExptFlat{t+1:i}{[{p_Z}\slash{q_Z}]}{r_i(b_{i},\Policy{i})}=\ExptFlat{t+1:i}{p_T,[{p_Z}\slash{q_Z}]}{r_i(x_{i},\Policy{i})}. 
    \label{eq:state_action_reward_rpt}
    \end{gather}
\end{lemmarpt}

\begin{proof}
Without loss of generality we prove for $p_Z$.

From the definition of the expectation, we write it as the following explicit integral:
\begin{gather}
    \ExptFlat{t+1:i}{p_Z}{r_i(b_{i},\Policy{i})} \\
    =\intop_{z_{t+1:i}}p_Z\left(z_{t+1}\mid H_{t+1}^{-}\right)\cdots p_Z\left(z_{i}\mid H_{i}^{-}\right) \nonumber \\ 
    r_i\left(b_{i},\pi_{i}(H_{i})\right) \dif z_{t+1:i}
\end{gather}
From the assumption of a state reward and Fubini's theorem, we replace the belief reward with the expected state reward:
\begin{gather}
    \ExptFlat{t+1:i}{p_Z}{r_i(b_{i},\Policy{i})} \nonumber \\
    =\intop_{z_{t+1:i}}\intop_{x_{i}}p_Z\left(z_{t+1}\mid H_{t+1}^{-}\right)\cdots p_Z\left(z_{i}\mid H_{i}^{-}\right) \nonumber \\
    b_i\left(x_{i}\right)r_i\left(x_{i},\pi_{i}(H_{i})\right)\dif x_{i}\dif z_{t+1:i}
\end{gather}
We apply the following steps of Bayes' rule, and marginalization and chain rule repetitively from time $i$ until time $t$.
\begin{enumerate}
    \item We apply Bayes' rule to the belief $b_i(x_i)$. By definition, $b_i(x_i) \bydef \probd (x_{i}\mid H_{i-1}, \Policy{i-1}(H_{i-1}), z_{i})$, and therefore $b_i(x_i)=\frac{p_Z\left(z_{i}\mid x_{i}\right)}{p_Z\left(z_{i}\mid H_{i}^{-}\right)}b_i^{-}\left(x_{i}\right)$.
    \begin{gather}
        \ExptFlat{t+1:i}{p_Z}{r_i(b_{i},\Policy{i})} \nonumber \\
        =\intop_{z_{t+1:i}}\intop_{x_{i}}p_Z\left(z_{t+1}\mid H_{t+1}^{-}\right)\cdots \cancel{p_Z\left(z_{i}\mid H_{i}^{-}\right)} \nonumber \\
        \frac{p_Z\left(z_{i}\mid x_{i}\right)b_i^{-}\left(x_{i}\right)}{\cancel{p_Z\left(z_{i}\mid H_{i}^{-}\right)}}r_i\left(x_{i},\pi_{i}(H_{i})\right)\dif x_{i}\dif z_{t+1:i}
    \end{gather}
    \item We marginalize over the state $x_{i-1}$ and then apply the chain rule to the propagated belief $b_i^{-}(x_i)$. By definition the propagated belief satisfies $b_i^{-}(x_i)\bydef \probd (x_i \mid H_{i-1}, \Policy{i-1}(H_{i-1}))$, and therefore it holds that $b_i^{-}(x_i)=\int_{x_{i-1}} b_{i-1}\left(x_{i-1}\right)p_T\left(x_{i}\mid x_{i-1},\pi_{i-1}\left(H_{i-1}\right)\right)$.
    \begin{gather}
        \ExptFlat{t+1:i}{p_Z}{r_i(b_{i},\Policy{i})} \nonumber \\
        =\intop_{z_{t+1:i}}\intop_{x_{i-1:i}}p_Z\left(z_{t+1}\mid H_{t+1}^{-}\right)\cdots p_Z\left(z_{i-1}\mid H_{i-1}^{-}\right) \nonumber \\
        b_{i-1}\left(x_{i-1}\right)p_T\left(x_{i}\mid x_{i-1},\pi_{i-1}\left(H_{i-1}\right)\right)p_Z\left(z_{i}\mid x_{i}\right) \nonumber \\
        r_i\left(x_{i},\pi_{i}(H_{i})\right)\dif x_{i-1:i}\dif z_{t+1:i}
    \end{gather}
\end{enumerate}
By repeating steps 1 and 2 until marginalizing over the state $x_t$, we conclude that:
\begin{gather}
    \ExptFlat{t+1:i}{p_Z}{r_i(b_{i},\Policy{i})} \nonumber \\
    =\intop_{z_{t+1:i}}\intop_{x_{t:i}}b_t\left(x_{t}\right)\prod_{j=t+1}^{i}p_T\left(x_{j}\mid x_{j-1},\pi_{j-1}\left(H_{j-1}\right)\right) \nonumber \\
    \cdot\prod_{k=t+1}^{i}p_Z\left(z_{k}\mid x_{k}\right)\cdot r_i\left(x_{i},\pi_{i}(H_{i})\right)\dif x_{t:i}\dif z_{t+1:i} \nonumber \\
    =\ExptFlat{t+1:i}{p_T,p_Z}{r_i(x_{i},\Policy{i})}
\end{gather}
\end{proof}

\subsection{Theorem \ref{thm:AltModelThBoundRpt}}

\begin{theoremrpt}
    \label{thm:AltModelThBoundRpt}
    Assume current belief is $b_{t}$ and a given policy is $\pi$.
    Denote with $b_{i}^{p_Z}$, $b_{i}^{q_Z}$ the future belief at time step $i$ updated with either $p_Z$ or $q_Z$, respectively. Then it holds that
    \begin{gather}
        \textstyle \absvalflat{\ExptFlatOrigHistory{i}{r_{i}(b_{i}^{p_Z},\Policy{i})}-\ExptFlatSimpHistory{i}{r_{i}(b_{i}^{q_Z},\Policy{i})}} \nonumber \\
        \textstyle \leq \Rmax{i}\sum_{l=t+1}^{i}\ExptFlatSimpState{l^-}{\DZ(x_{l})},
        \label{eq:state_bound_reward_rpt}
    \end{gather}
    i.e. the difference between the expected reward at future time $i$ for the original and simplified POMDP is bounded by the maximum reward, times the sum of the expected state-dependent TV-distances between the observation models.
\end{theoremrpt}

\begin{proof}
We note that the two expectations share integration domain. By the linearity of the integral, we can combine integrands that are equal across the domains, and use the distributive property of the integral. We then apply the integral triangle inequality to the difference term of the two different observation models.
\begin{gather}
    \absvalflat{\ExptFlatOrigHistory{i}{r_{i}(b_{i}^{p_Z},\Policy{i})}-\ExptFlatSimpHistory{i}{r_{i}(b_{i}^{q_Z},\Policy{i})}}\\
    =\absvalflat{\ExptFlatOrigState{i}{r_{i}(x_{i},\Policy{i})}-\ExptFlatSimpState{i}{r_{i}(x_{i},\Policy{i})}}\\
    =\left|\intop_{z_{t+1:i}}\intop_{x_{t:i}} b_{t}(x_{t})\prod_{j=t+1}^{i}p_{T}\left(x_{j}\mid x_{j-1},\pi_{j-1}\left(H_{j-1}\right)\right) \nonumber \right. \\
    \left(\prod_{j=t+1}^{i}p_{Z}\left(z_{j}\mid x_{j}\right)-\prod_{j=t+1}^{i}q_{Z}\left(z_{j}\mid x_{j}\right)\right) \nonumber \\
    \cdot r(x_{i},\pi_i(H_{i}))\dif x_{t:i}\dif z_{t+1:i}\Bigg|\\
    \leq\intop_{z_{t+1:i}}\intop_{x_{t:i}}b_{t}(x_{t}) \prod_{j=t+1}^{i}p_{T}\left(x_{j}\mid x_{j-1},\pi_{j-1}\left(H_{j-1}\right)\right) \nonumber \\
    \left|\prod_{j=t+1}^{i}p_{Z}\left(z_{j}\mid x_{j}\right)-\prod_{j=t+1}^{i}q_{Z}\left(z_{j}\mid x_{j}\right)\right| \nonumber \\ 
    \cdot\left|r(x_{i},\pi_i(H_{i}))\right|\dif x_{t:i}\dif z_{t+1:i}\
\end{gather}
We utilize the assumption of a bounded state reward, of the form $\left|r(x_{i},\pi_i(H_{i}))\right|\leq \Rmax{i}$.
\begin{gather}
    \leq \Rmax{i} \cdot\intop_{z_{t+1:i}}\intop_{x_{t:i}}b_{t}(x_{t}) \nonumber \\
    \prod_{j=t+1}^{i}p_T\left(x_{j}\mid x_{j-1},\pi_{j-1}\left(H_{j-1}\right)\right) \nonumber \\
    \cdot\left|\prod_{j=t+1}^{i}p_{Z}\left(z_{j}\mid x_{j}\right)-\prod_{j=t+1}^{i}q_{Z}\left(z_{j}\mid x_{j}\right)\right|\dif x_{t:i}\dif z_{t+1:i}
\end{gather}
Next, we add and subtract a mixed term of $i-t-1$ simplified observation factors $q_Z$ and a single original observation factor $p_Z$. This factor is carefully chosen to pick together common factors such that we'll be left with our desired expectations - computed over the simplified observation model.
\begin{gather}
    =\Rmax{i}\cdot\intop_{z_{t+1:i}}\intop_{x_{t:i}}b_{t}(x_{t}) \nonumber \\
    \prod_{j=t+1}^{i}p_{T}\left(x_{j}\mid x_{j-1},\pi_{j-1}\left(H_{j-1}\right)\right)\\ 
    \cdot \left|\prod_{j=t+1}^{i}p_{Z}\left(z_{j}\mid x_{j}\right)-\prod_{j=t+1}^{i-1}q_{Z}\left(z_{j}\mid x_{j}\right)p_{Z}\left(z_{i}\mid x_{i}\right) \right. \nonumber \\
    \left. +\prod_{j=t+1}^{i-1}q_{Z}\left(z_{j}\mid x_{j}\right)p_{Z}\left(z_{i}\mid x_{i}\right)-\prod_{j=t+1}^{i}q_{Z}\left(z_{j}\mid x_{j}\right)\right| \nonumber \\
    \dif x_{t:i}\dif z_{t+1:i}\\
    =\Rmax{i}\cdot\intop_{z_{t+1:i}}\intop_{x_{t:i}}b_{t}(x_{t}) \nonumber \\
    \prod_{j=t+1}^{i}p_{T}\left(x_{j}\mid x_{j-1},\pi_{j-1}\left(H_{j-1}\right)\right) \\
    \cdot \left|\prod_{j=t+1}^{i-1}q_{Z}\left(z_{j}\mid x_{j}\right)\left(p_{Z}\left(z_{i}\mid x_{i}\right)-q_{Z}\left(z_{i}\mid x_{i}\right)\right)+ \right. \nonumber \\
    \left. p_{Z}\left(z_{i}\mid x_{i}\right)\left(\prod_{j=t+1}^{i-1}p_{Z}\left(z_{j}\mid x_{j}\right)-\prod_{j=t+1}^{i-1}q_{Z}\left(z_{j}\mid x_{j}\right)\right)\right| \nonumber \\
    \dif x_{t:i}\dif z_{t+1:i}
\end{gather}
We once more use the triangle inequality and the linearity of integral to separate the obtained integral into two. We denote these terms as $(1)$ and $(2)$ and then handle each separately.
\begin{gather}
    \leq \Rmax{i} \nonumber \\
    \cdotp (1)\bydef \left\{
        \begin{aligned}
        \intop_{z_{t+1:i}}\intop_{x_{t:i}}b_{t}(x_{t})\prod_{j=t+1}^{i}p_{T}\left(x_{j}\mid x_{j-1},\pi_{j-1}\left(H_{j-1}\right)\right) \\
        \cdot\prod_{j=t+1}^{i-1}q_{Z}\left(z_{j}\mid x_{j}\right)\left|p_{Z}\left(z_{i}\mid x_{i}\right)-q_{Z}\left(z_{i}\mid x_{i}\right)\right| \\
        \dif x_{t:i}\dif z_{t+1:i}
        \end{aligned}
        \right. \\
    +\Rmax{i} \nonumber \\
    \cdot (2) \bydef \left\{
        \begin{aligned}
        \intop_{z_{t+1:i}}\intop_{x_{t:i}}b_{t}(x_{t})\prod_{j=t+1}^{i}p_{T}\left(x_{j}\mid x_{j-1},\pi_{j-1}\left(H_{j-1}\right)\right) \\
        \cdot p_{Z}\left(z_{i}\mid x_{i}\right)\left|\prod_{j=t+1}^{i-1}p_{Z}\left(z_{j}\mid x_{j}\right)-\prod_{j=t+1}^{i-1}q_{Z}\left(z_{j}\mid x_{j}\right)\right| \\
        \dif x_{t:i}\dif z_{t+1:i}
        \end{aligned}
        \right.
\end{gather}
The first term turns out to become the expected TV-distance w.r.t. trajectories taken with the simplified observation model.
\begin{gather}
    (1)=\intop_{z_{t+1:i-1}}\intop_{x_{t:i}}b_{t}(x_{t})\prod_{j=t+1}^{i}p_{T}\left(x_{j}\mid x_{j-1},\pi_{j-1}\left(H_{j-1}\right)\right) \nonumber \\
    \cdot \prod_{j=t+1}^{i-1}p_{Z}\left(z_{j}\mid x_{j}\right)\left(\intop_{z_{i}}\left|p_{Z}\left(z_{i}\mid x_{i}\right)-q_{Z}\left(z_{i}\mid x_{i}\right)\right|\dif z_{i}\right)\nonumber \\
    \dif x_{t:i}\dif z_{t+1:i} \nonumber \\
    =\Rmax{i}\cdot \ExptFlatSimpState{i^-}{\DZ(x_{i})}
\end{gather}
The second term is the recursive term, in which we can integrate out first the last observation factor, and then the last transition factor.
\begin{gather}
    (2)=\intop_{z_{t+1:i-1}}\intop_{x_{t:i}}b_{t}(x_{t})\prod_{j=t+1}^{i}p_{T}\left(x_{j}\mid x_{j-1},\pi_{j-1}\left(H_{j-1}\right)\right) \nonumber \\
    \cdot \left|\prod_{j=t+1}^{i-1}p_{Z}\left(z_{j}\mid x_{j}\right)-\prod_{j=t+1}^{i-1}q_{Z}\left(z_{j}\mid x_{j}\right)\right| \nonumber \\
    \cancel{\left(\intop_{z_{i}}p_{Z}\left(z_{i}\mid x_{i}\right)\dif z_{i-1}\right)}\dif x_{t:i}\dif z_{t+1:i}\\
    =\intop_{z_{t+1:i-1}}\intop_{x_{t:i}}b_{t}(x_{t})\prod_{j=t+1}^{i-1}p_{T}\left(x_{j}\mid x_{j-1},\pi_{j-1}\left(H_{j-1}\right)\right) \nonumber \\
    \cdot \left|\prod_{j=t+1}^{i-1}p_{Z}\left(z_{j}\mid x_{j}\right)-\prod_{j=t+1}^{i-1}q_{Z}\left(z_{j}\mid x_{j}\right)\right| \nonumber \\
    \cancel{\left(\int_{x_{i}}p\left(x_{i}\mid x_{i-1},\pi_{i-1}\left(z_{i-1}\right)\right)\dif x_{i}\right)}\dif x_{t:i-1}\dif z_{t+1:i}\\
    =\intop_{z_{t+1:i}}\intop_{x_{t:i-1}}b_{t}(x_{t})\prod_{j=t+1}^{i-1}p\left(x_{j}\mid x_{j-1},\pi_{j-1}\left(H_{j-1}\right)\right) \nonumber \\
    \cdot\left|\prod_{j=t+1}^{i-1}p_{Z}\left(z_{j}\mid x_{j}\right)-\prod_{j=t+1}^{i-1}q_{Z}\left(z_{j}\mid x_{j}\right)\right|\dif x_{t:i-1}\dif z_{t+1:i-1}
\end{gather}
Notice the recursive relationship:
\begin{gather}
    \intop_{z_{t+1:i}}\intop_{x_{t:i}}b_{t}(x_{t})\prod_{j=t+1}^{i}p\left(x_{j}\mid x_{j-1},\pi_{j-1}\left(H_{j-1}\right)\right) \nonumber \\
    \cdot\left|\prod_{j=t+1}^{i}p_{Z}\left(z_{j}\mid x_{j}\right)-\prod_{j=t+1}^{i}q_{Z}\left(z_{j}\mid x_{j}\right)\right|\dif x_{t:i}\dif z_{t+1:i}\\
    \leq\ExptFlatSimpState{i^-}{\DZ(x_{i})} \nonumber \\
    +\intop_{z_{t+1:i}}\intop_{x_{t:i-1}}b_{t}(x_{t})\prod_{j=t+1}^{i-1}p\left(x_{j}\mid x_{j-1},\pi_{j-1}\left(H_{j-1}\right)\right) \nonumber \\
    \cdot\left|\prod_{j=t+1}^{i-1}p_{Z}\left(z_{j}\mid x_{j}\right)-\prod_{j=t+1}^{i-1}q_{Z}\left(z_{j}\mid x_{j}\right)\right|\dif x_{t:i-1}\dif z_{t+1:i-1}
\end{gather}

In the base case of $i=t+1$ we get directly
\begin{gather}
\intop_{z_{t+1:i}}\intop_{x_{t:i}}b_{t}(x_{t})\prod_{j=t+1}^{i}p\left(x_{j}\mid x_{j-1},\pi_{j-1}\left(H_{j-1}\right)\right)\nonumber \\
\cdot\left|\prod_{j=t+1}^{i}p_{Z}\left(z_{j}\mid x_{j}\right)-\prod_{j=t+1}^{i}q_{Z}\left(z_{j}\mid x_{j}\right)\right|\dif x_{t:i}\dif z_{t+1:i}\\
=\ExptFlatSimpState{i^-}{\DZ(x_{i})} =\ExptFlat{b_{t}}{}{\ExptFlat{t+1}{p_T}{\DZ(x_{t+1})}}
\end{gather}
Hence we can see that the sum of the recursion turns out to be
\begin{gather}
    \absvalflat{\ExptFlatOrigHistory{i}{r_{i}(b_{i}^{p_Z},\Policy{i})}-\ExptFlatSimpHistory{i}{r_{i}(b_{i}^{q_Z},\Policy{i})}} \\
    \leq \Rmax{i}\sum_{l=t+1}^{i}\ExptFlatSimpState{l^-}{\DZ(x_{l})}
\end{gather}
\end{proof}

\subsection{Corollary \ref{crl:TheoreticalBoundRpt}}

\begin{corollaryrpt}
    \label{crl:TheoreticalBoundRpt}
    The difference between the original and simplified value functions can be bounded by the following sum of scaled expected TV-distance terms:
    \begin{gather}
        \textstyle \absvalflat{V_{t}^{\pi,p_Z}(b_{t})-V_{t}^{\pi,q_Z}(b_{t})}\nonumber\\
        \textstyle \leq\sum_{i=t+1}^{L}\ExptFlatSimpState{i^-}{\DZ(x_{i})}\sum_{l=i}^{L}\gamma^{l-t}\Rmax{l} \\
        \textstyle =\sum_{i=t+1}^{L}\Vmax{i}\cdotp\ExptFlatSimpState{i^-}{\DZ(x_{i})}. 
        \label{eq:bound_value_rpt}
    \end{gather}
\end{corollaryrpt}

\begin{proof}
We substitute for the definition of $V_{t}^\pi$, and we can see that the immediate reward at time $i$ cancels out.
\begin{gather}
    \absval{V_{t}^{\pi,p_Z}(b_{t})-V_{t}^{\pi,q_Z}(b_{t})} \\
    = \left| \ExptFlatOrigHistory{L}{\sum_{i=t}^{L}\gamma^{i-t}r_{i}(b_{i},\Policy{i})} \right. \nonumber \\
    \left. - \ExptFlatSimpHistory{L}{\sum_{i=t}^{L}\gamma^{i-t}r_{i}(b_{i},\Policy{i})} \right| \\
    = \left| \cancel{r_{t}(b_{t},\Policy{t})} + \ExptFlatOrigHistory{L}{\sum_{i=t+1}^{L}\gamma^{i-t}r_{i}(b_{i},\Policy{i})} \right. \nonumber \\
    \left. - \cancel{r_{t}(b_{t},\Policy{t})} - \ExptFlatSimpHistory{L}{\sum_{i=t+1}^{L}\gamma^{i-t}r_{i}(b_{i},\Policy{i})} \right|
\end{gather}
We apply Lemma \ref{lem:StateExptEquivRpt} to substitute expectation over belief rewards to expectation over state rewards, and we rearrange terms so that we get summation over difference terms of expected state rewards.
\begin{gather}
    = \left| \sum_{i=t+1}^{L}\gamma^{i-t}\ExptFlatOrigState{i}{r_{i}(x_{i},\Policy{i})} \right. \nonumber \\ 
    \left. - \sum_{i=t+1}^{L}\gamma^{i-t}\ExptFlatSimpState{i}{r_{i}(x_{i},\Policy{i})} \right| \\
    = \left| \sum_{i=t+1}^{L}\gamma^{i-t}\left(\ExptFlatOrigState{i}{r_{i}(x_{i},\Policy{i})} - \ExptFlatSimpState{i}{r_{i}(x_{i},\Policy{i})}\right) \right|
\end{gather}
We now use the triangle inequality to separate into $L-t$ summands on which we can apply Theorem \ref{thm:AltModelThBoundRpt}:
\begin{gather}
    \leq \sum_{i=t+1}^{L}\gamma^{i-t} \left| \ExptFlatOrigState{i}{r_{i}(x_{i},\Policy{i})} - \ExptFlatSimpState{i}{r_{i}(x_{i},\Policy{i})} \right| \\
    \stackrel{\textit{Thm. \ref{thm:AltModelThBoundRpt}}}{\leq} \sum_{i=t+1}^{L}\gamma^{i-t} \Rmax{i}\sum_{l=t+1}^{i}\ExptFlatSimpState{l^-}{\DZ(x_{l})}
\end{gather}
We switch the order of summation such that for every expected-$\DZ$ term we sum all of its $\Rmax{}$ terms:
\begin{gather}
    = \gamma \Rmax{t+1} \left( \ExptFlatSimpState{{t+1}^-}{\DZ(x_{t+1})} \right) \nonumber \\
    + \gamma^2 \Rmax{t+2} \left( \ExptFlatSimpState{{t+1}^-}{\DZ(x_{t+1})} 
    \right. \nonumber \\
    \left. 
    + \ExptFlatSimpState{{t+2}^-}{\DZ(x_{t+2})} \right) \nonumber \\
    + \dots \nonumber \\
    \begin{multlined}
        + \gamma^{L-t} \Rmax{L} \left( \ExptFlatSimpState{{t+1}^-}{\DZ(x_{t+1})} \right. \\
        \left. + \dots + \ExptFlatSimpState{{t+L}^-}{\DZ(x_{t+L})} \right)
    \end{multlined} \\
    \begin{multlined}
    = \ExptFlatSimpState{{t+1}^-}{\DZ(x_{t+1})} \left( \gamma \Rmax{t+1} + \dots + \gamma^{L-t} \Rmax{L} \right) \\
    + \ExptFlatSimpState{{t+2}^-}{\DZ(x_{t+2})} \left( \gamma^2 \Rmax{t+2} + \dots + \gamma^{L-t} \Rmax{L} \right) \\
    + \dots \\
    + \ExptFlatSimpState{{t+L}^-}{\DZ(x_{t+L})} \gamma^{L-t} \Rmax{L}
    \end{multlined} \\
    =\sum_{i=t+1}^{L}\ExptFlatSimpState{i^-}{\DZ(x_{i})}\sum_{l=i}^{L}\gamma^{l-t}\Rmax{l}
\end{gather}
By identifying the sum of discounted maximum rewards as the maximum value, we arrive at the required:
\begin{equation}
    =\sum_{i=t+1}^{L}\Vmax{i}\cdotp\ExptFlatSimpState{i^-}{\DZ(x_{i})}.
\end{equation}

\end{proof}

\subsection{Theorem \ref{thm:LocalActionBoundRpt}}

\begin{theoremrpt}
    \label{thm:LocalActionBoundRpt}
    Under the conditions of Theorem \ref{thm:AltModelThBound}, the difference between the original and simplified value function can be bounded by
    \begin{gather}
        \textstyle \absvalflat{V_{t}^{\pi,p_Z}(b_{t})-V_{t}^{\pi,q_Z}(b_{t})}\leq M_{t}^{\pi}(b_{t}). 
        \label{eq:MBoundV_rpt}
    \end{gather}
    In addition, the respective difference in action value function can be bounded by the action cumulative bound,
    \begin{gather}
        \textstyle \absvalflat{Q_{t}^{\pi,p_Z}(b_{t},a)-Q_{t}^{\pi,q_Z}(b_{t},a)}\leq \Phi_{t}^{\pi}(b_{t},a). 
        \label{eq:PhiBoundQ_rpt}
    \end{gather}
\end{theoremrpt}

\begin{proof}
First we prove for $\eqref{eq:MBoundV_rpt}$:
\begin{gather}
    \textstyle M_{t}^{\Policy{}}(b_t) = \ExptFlatSimpHistory{L-1}{\sum_{i=t}^{L-1}m_{i}(b_{i},\Policy{i})} \\
    =\textstyle \sum_{i=t+1}^{L} \ExptFlatSimpHistory{i-1}{m_{i-1}(b_{i-1},\Policy{i-1})} \\
    \textstyle \overset{\textit{Lem. \ref{lem:StateExptEquivRpt}}}{=}\sum_{i=t+1}^{L} \ExptFlatSimpState{i-1}{m_{i-1}(x_{i-1},\Policy{i-1})} \\
    \textstyle =\sum_{i=t+1}^{L}\Vmax{i}\cdot\ExptFlatSimpState{i-1}{\ExptFlat{i}{p_T}{\DZ(x_{i})}}  \\
    \textstyle =\sum_{i=t+1}^{L} \Vmax{i}\cdot\ExptFlatSimpState{i^{-}}{\DZ(x_{i})}
\end{gather}
And the proof is clear by seeing that this is the exact term that bounds $\eqref{eq:bound_value_rpt}$.
For proving $\eqref{eq:PhiBoundQ_rpt}$, note that for every action $a$ we can define a policy $\Policy{}^{a}$ by performing $a$ and then continuing with $\Policy{}$.
For every policy $\Policy{}$ it holds that $Q_{t}^{\Policy{}}(b_t, \Policy{t}) = V_{t}^{\Policy{}}(b_t)$, and specifically for $\Policy{}^{a}$.
Therefore:
\begin{gather}
    \textstyle \absval{Q_{t}^{\pi,p_Z}(b_{t},a)-Q_{t}^{\pi,q_Z}(b_{t},a)}= \\ 
    \textstyle \absval{V_{t}^{\Policy{}^{a},p_Z}(b_{t})-V_{t}^{\Policy{}^{a},q_Z}(b_{t})} \\
    \leq M_{t}^{\Policy{}^{a}}(b_t) = \Phi_{t}^{\pi}(b_{t},a).
\end{gather}
\end{proof}

\subsection{Theorem \ref{thm:ProbConvergeRpt}}

The proof of Theorem \ref{thm:ProbConvergeRpt} is an adaptation of Lemma 2 from \cite{Lim22arxiv}, with relatively small modifications.

We first rely on the Particle Likelihood SN Estimator Convergence lemma from \cite{Lim22arxiv} (originally Lemma 1):

\begin{lemmarpt}[Particle Likelihood SN Estimator Convergence] 
    \label{lem:SNBounds}
     Suppose a function f is bounded by a finite constant $\left\Vert f\right\Vert _{\infty}\leq f_{\max}$, and a particle belief state $\PB{t}=\{x_{t}^{i}, w_{t}^{i}\}_{i=1}^{C}$ at depth $t$ that represents $\PB{t}$ with particle likelihood weighting that is recursively updated as $w_{t}^{i}=w_{t-1}^{i}\cdot p_{Z}\left(z\mid x_{t+1}\right)$ for an observation sequence $\{z_{n}^{i}\}_{n=1}^{t}$.
     Then, for all $t=0,\dots,L$, the following weighted average is the SN estimator of $f$ under the belief $b_{t}$ corresponding to observation sequence $\{ z_{n}\} _{n=1}^{t}:\tilde{\mu}_{\PB{t}}[f]=\frac{\sum_{i=1}^{C}w_{t}^{i}f(x_{t}^{i})}{\sum_{i=1}^{C}w_{t}^{i}}$ and the following concentration bound holds with probability at least $1-3\exp\left(-C\cdot k_{\max}^{2}\left(\lambda,C\right)\right)$
     \begin{gather}
        \left|\mathbb{E}_{s\sim b_{t}}\left[f\left(x\right)\right]-\tilde{\mu}_{\bar{b}_{t}}\left[f\right]\right|\leq\lambda \\
        k_{\max}\left(\lambda,C\right)\triangleq\frac{\lambda}{f_{\max}d_{\infty}^{\max}}-\frac{1}{\sqrt{C}} \\
        d_{\infty}\left(\mathcal{P}^{t}||\mathcal{Q}^{t}\right)=\esssup_{x\sim\mathcal{Q}^{t}} w_{\mathcal{P}^{t}\slash\mathcal{Q}^{t}}\left(x\right)\leq d_{\infty}^{\max}
     \end{gather}
     where $\mathcal{P}^{t}$ is the target distribution and $\mathcal{Q}^{t}$ is the distribution of the particle filter.
\end{lemmarpt}

For the next lemma, we define a new theoretical algorithm Sparse Sampling-$\omega$-$\pi$, shown in Algorithm \ref{alg:SparseSamplingOmegaPi}. The algorithm estimates the value function of a policy $\pi$ according to Sparse Sampling-$\omega$ \cite{Lim23jair}.

\begin{algorithm}[tb]
    \caption{Sparse Sampling-$\omega$-$\pi$}
    \label{alg:SparseSamplingOmegaPi}
    \textbf{Global Variables}: $\gamma,C,L,\pi$\\
    \textbf{Algorithm:} $\text{Estimate}V^{\pi}(\bar{b},t)$.\\
    \textbf{Input}: Particle belief set $\bar{b}=\left\{ \left(x_{i},w_{i}\right)\right\}$, depth $t$, policy $\pi$. \\
    \textbf{Output}: A scalar $\hat{V}_{\omega,t}^{\pi}\left(\bar{b}_{t}\right)$ that is an estimate of $V_{t}^{\pi}\left(\bar{b}_{t}\right)$.
    \begin{algorithmic}[1]
    \IF {$t\geq L$}
    \STATE \textbf{return} 0
    \ENDIF
    \begingroup
    \color{blue}
    \STATE $\hat{Q}_{t}^{\pi}(\bar{b},\pi(\bar{b}))\leftarrow\text{Estimate}Q^{\pi}(\bar{b},\pi(\bar{b}),t)$ \label{alg:BlueVariation}
    \endgroup
    \begingroup
    \color{red}
    \FORALL {$a\in\mathcal{A}$} \label{alg:RedVariationStart}
    \STATE $\hat{Q}_{t}^{\pi}(\bar{b},a)\leftarrow\text{Estimate}Q^{\pi}(\bar{b},a,t)$ \label{alg:RedVariationEnd}
    \ENDFOR
    \endgroup
    \STATE \textbf{return} $ \hat{V}_{t}^{\pi}(\bar{b})\leftarrow\hat{Q}_{t}^{\pi}(\bar{t},\pi(\bar{b}))$
    \end{algorithmic}
    \textbf{Algorithm:} $\text{Estimate}Q^{\pi}\left(\bar{b},a,t\right)$.\\
    \textbf{Input}: Particle belief set $\bar{b}=\left\{ \left(x_{i},w_{i}\right)\right\}$, action $a$, depth $t$, policy $\pi$. \\
    \textbf{Output}: A scalar $\hat{Q}_{\omega,t}^{\pi}\left(\bar{b},a\right)$ that is an estimate of $Q_{t}^{\pi}\left(b,a\right)$.
    \begin{algorithmic}[1]
    \FORALL {$i=1,\dots,C$}
    \STATE $\bar{b}_{i}^{\prime},\rho\leftarrow\text{GenPF}(\bar{b},a)$
    \STATE $\hat{V}_{t+1}^{\pi}(\bar{b}_{i}^{\prime})\leftarrow\text{Estimate}V^{\pi}(\bar{b}_{i}^{\prime},t+1)$
    \ENDFOR
    \STATE \textbf{return} $\hat{Q}_{t}^{\pi}(\bar{b},a)\leftarrow\rho+\frac{1}{C}\sum_{i=1}^{C}\gamma\cdot\hat{V}_{t+1}^{\pi}(\bar{b}_{i}^{\prime})$
    \end{algorithmic}
\end{algorithm}

The algorithm has two variations, indicated with the blue line in \ref{alg:BlueVariation}, or with the red lines in \ref{alg:RedVariationStart}-\ref{alg:RedVariationEnd}. The blue variation is for estimating the value of a single policy $\pi$, whereas the red variation expands the entire action space (if it is finite), to estimate the values of each possible policy via the $Q$-value.

\begin{lemmarpt}[Sparse Sampling-$\omega$-$\pi$ $Q$-Value Coupled Convergence] 
    \label{lem:SparseSamplingOmegaPiConvergence}
For a given policy $\pi$, for all $t=0,\dots,L$ and actions $a$, the following bounds hold with probability at least $1-5\left(4C\right)^{L+1}\left(\exp\left(-C\cdot\acute{k}^{2}\right)+\delta_r\left(\nu,N_r\right)\right)$:
\begin{gather}
    \left|Q_{\mathbf{P},t}^{\pi}\left(b_{t},a\right)-\hat{\tilde{Q}}_{\omega,t}^{\pi}\left(\bar{b}_{t},a\right)\right|\leq\alpha_{t}, \\
    \alpha_{t}=\lambda+\nu+\gamma\alpha_{t+1},\ \alpha_{L}=\lambda+\nu, \\
    \left|Q_{\mathbf{M_{P}},t}^{\pi}\left(\bar{b}_{t},a\right)-\hat{\tilde{Q}}_{\omega,t}^{\pi}\left(\bar{b}_{t},a\right)\right|\leq\beta_{t},\\
    \beta_{t}=\nu+\gamma\left(\lambda+\beta_{t+1}\right),\beta_{L}=\nu, \\
    k_{\max}\left(\lambda,C\right)=\frac{\lambda}{4V_{\max}d_{\infty}^{\max}}-\frac{1}{\sqrt{C}},\\
    \acute{k}=\min\left\{ k_{\max},\lambda\slash4\sqrt{2}V_{\max}\right\} 
\end{gather}

Under the assumption that the immediate reward estimate is probabilistically bounded such that $\probd(\left|r_{t}^{i}-\tilde{r}_{t}^{i}\right|\geq\nu)\leq\delta_r(\nu,N_r)$, for a number of samples parameter $N_r$.

If we require the bound to hold for all possible policies that can be extracted from a given belief tree simultaneously, then the probability becomes at least $1-5\left(4\left|A\right|C\right)^{L+1}\left(\exp\left(-C\cdot\acute{k}^{2}\right)+\delta_r\left(\nu,N_r\right)\right)$.
\end{lemmarpt}

\begin{proof}
We prove for observation model $p_Z$ without loss of generality.
\paragraph{POMDP Value Convergence}

We split the difference between the SN estimator and $Q_{\mathbf{P},t}^{\pi}$
into two terms, the reward estimation error $(A)$ and the next-step
value estimation error $(B)$:

\begin{gather}
\left|Q_{\mathbf{P},t}^{\pi}\left(b_{t},a\right)-\hat{\tilde{Q}}_{\omega,t}^{\pi}\left(\bar{b}_{t},a\right)\right|\\
\leq\underbrace{\left|\mathbb{E}_{\mathbf{P}}\left[R\left(x_{t},a\right)\mid b_{t}\right]-\frac{\sum_{i=1}^{C}w_{t}^{i}\tilde{r}_{t}^{i}}{\sum_{i=1}^{C}w_{t}^{i}}\right|}_{(A)}\\
+\gamma\underbrace{\left|\mathbb{E}_{\mathbf{P}}\left[V_{\mathbf{P},t+1}^{\pi}\left(b_{t}az\right)\mid b_{t}\right]-\frac{1}{C}\sum_{i=1}^{C}\hat{\tilde{V}}_{\omega,t+1}^{\pi}\left(\bar{b}_{t+1}^{\prime\left[I_{i}\right]}\right)\right|}_{(\mathrm{B})}
\end{gather}

Where $I_{i}$ is a RV sampled from the probability mass $p_{w,t}\left(I=i\right)=\left(w_{t}^{i}\slash\sum_{j}w_{t}^{j}\right)$, 
and the particle belief $\bar{b}_{t+1}^{\prime\left[I_{i}\right]}$
is updated with an observation generated from $x_{t}^{I_{i}}$.

To prove the base case $t=L$, note that only term $(A)$ is needed
to be bounded, since $t=L$ corresponds to the leaf node of Sparse
Sampling-$\omega$-$\pi$ and no further next step value estimation
is performed.

We split term $(A)$ into two terms:

\begin{gather}
\underbrace{\left|\mathbb{E}_{\mathbf{P}}\left[R\left(x_{t},a\right)\mid b_{t}\right]-\frac{\sum_{i=1}^{C}w_{t}^{i}\tilde{r}_{t}^{i}}{\sum_{i=1}^{C}w_{t}^{i}}\right|}_{(A)}\\
\leq\underbrace{\left|\mathbb{E}_{\mathbf{P}}\left[R\left(x_{t},a\right)\mid b_{t}\right]-\frac{\sum_{i=1}^{C}w_{t}^{i}r_{t}^{i}}{\sum_{i=1}^{C}w_{t}^{i}}\right|}_{(1)\text{Importance sampling error}}\\
+\underbrace{\left|\frac{\sum_{i=1}^{C}w_{t}^{i}r_{t}^{i}}{\sum_{i=1}^{C}w_{t}^{i}}-\frac{\sum_{i=1}^{C}w_{t}^{i}\tilde{r}_{t}^{i}}{\sum_{i=1}^{C}w_{t}^{i}}\right|}_{(2)\text{Reward approximation error}}
\end{gather}

Term $(1)$ is a particle likelihood weighted average of $R\left(\cdotp,a\right)$,
and we will use the SN concentration bounds from Lemma \ref{lem:SNBounds}.
We bound $R$ with $R_{\max}$ and augment $\lambda$ to $\frac{R_{\max}}{4V_{\max}}\lambda$,
in order to obtain the same uniform $t_{\max}$ factor with the following
terms. This also covers the base case since $\frac{R_{\max}}{4V_{\max}}\lambda\leq\lambda=\alpha_{L}$.
Hence the bound will hold with probability at least $1-3\exp\left(-C\cdot t_{\max}^{2}\left(\lambda,C\right)\right)$.

For term $(2)$, we use the triangle inequality to bound the apply
the assumption of a probabilistic bound on the state reward to a probabilistic
bound on the belief reward. Next we use the monotonicity of the weighted
mean, in the context that if all terms have an upper bound $\nu$,
then follows that the average itself is upper bounded by $\nu$. Finally
we use the inverse of the union bound (Boole's inequality) to lower
bound with assumed bound for each individual reward term. 
\begin{gather}
\probd\left(\left|\frac{\sum_{i=1}^{C}w_{t}^{i}r_{t}^{i}}{\sum_{i=1}^{C}w_{t}^{i}}-\frac{\sum_{i=1}^{C}w_{t}^{i}\tilde{r}_{t}^{i}}{\sum_{i=1}^{C}w_{t}^{i}}\right|\leq\nu\right)\\
\overset{\text{Triangle inequality}}{\geq}\\
\probd\left(\left(\sum_{i=1}^{C}w_{t}^{i}\right)^{-1}\sum_{i=1}^{C}w_{t}^{i}\left|\left(r_{t}^{i}-\tilde{r}_{t}^{i}\right)\right|\leq\nu\right)\\
\overset{\text{Weighted mean monotonicity}}{\geq}\probd\left(\bigcap_{i=1}^{C}\left|\left(r_{t}^{i}-\tilde{r}_{t}^{i}\right)\right|\leq\nu\right)\\
\overset{\text{Union bound}}{\geq}1-\sum_{i=1}^{C}\probd\left(\left|\left(r_{t}^{i}-\tilde{r}_{t}^{i}\right)\right|\geq\nu\right)\\
\overset{\text{Assumption}}{\geq}1-\sum_{i=1}^{C}\delta_r\left(\nu,N_r\right)=1-C\delta_r\left(\nu,N_r\right)
\end{gather}

Term $(B)$ is repeatedly separated into four terms:
\begin{gather}
\underbrace{\left|\mathbb{E}_{\mathbf{P}}\left[V_{t+1}^{\pi}\left(b_{t}az\right)\mid b_{t}\right]-\frac{1}{C}\sum_{i=1}^{C}\hat{\tilde{V}}_{\omega,t+1}^{\pi}\left(\bar{b}_{t+1}^{\prime\left[I_{i}\right]}\right)\right|}_{(B)}\\
\leq\underbrace{\left|\mathbb{E}_{\mathbf{P}}\left[V_{t+1}^{\pi}\left(b_{t}az\right)\mid b_{t}\right]-\frac{\sum_{i=1}^{C}w_{t}^{i}\boldsymbol{V}_{t+1}^{\pi}\left(b_{t},a\right)^{\left[i\right]}}{\sum_{i=1}^{C}w_{t}^{i}}\right|}_{(1)\text{ Importance sampling error}}\\
+\underbrace{\left|\frac{\sum_{i=1}^{C}w_{t}^{i}\boldsymbol{V}_{t+1}^{\pi}\left(b_{t},a\right)^{\left[i\right]}}{\sum_{i=1}^{C}w_{t}^{i}}-\frac{1}{C}\sum_{i=1}^{C}\boldsymbol{V}_{t+1}^{\pi}\left(b_{t},a\right)^{\left[I_{i}\right]}\right|}_{(2)\text{ MC weighted sum approximation error}}\\
+\underbrace{\left|\frac{1}{C}\sum_{i=1}^{C}\boldsymbol{V}_{t+1}^{\pi}\left(b_{t},a\right)^{\left[I_{i}\right]}-\frac{1}{C}\sum_{i=1}^{C}V_{t+1}^{\pi}\left(b_{t}az^{\left[I_{i}\right]}\right)\right|}_{(3)\text{ MC next-step integral approximation error}}\\
+\underbrace{\left|\frac{1}{C}\sum_{i=1}^{C}V_{t+1}^{\pi}\left(b_{t}az^{\left[I_{i}\right]}\right)-\frac{1}{C}\sum_{i=1}^{C}\hat{\tilde{V}}_{\omega,t+1}^{\pi}\left(\bar{b}_{t+1}^{\prime\left[I_{i}\right]}\right)\right|}_{(4)\text{Function estimation error}}\\
\leq\frac{1}{4}\lambda+\frac{1}{4}\lambda+\frac{1}{2}\lambda+\alpha_{t+1}
\end{gather}

Note the notations used:
\begin{gather}
Z_{t+1}\triangleq p_{Z}\left(z\mid x_{t+1}\right)\\
T_{t,t+1}\triangleq p_{T}\left(x_{t+1}\mid x_{t},a\right)\\
T_{t,t+1}^{\left[i\right]}\triangleq p_{T}\left(x_{t+1}\mid x_{t}^{i},a\right)
\end{gather}
For the rest of this proof, we define $V_{t}^{\pi}\left(b_{t}\right)$ to be the value function attained 
from time $t$ given initial belief $b_{t}$ and by following policy 
$\pi$. Additionally, we define the following:
\begin{gather}
\boldsymbol{V}_{t+1}^{\pi}\left(b_{t},a,x_{t}\right)\bydef \\
\intop_{\mathcal{X}}\intop_{\mathcal{Z}}V_{t+1}^{\pi}\left(b_{t}az\right)\left(Z_{t+1}\right)\left(T_{t,t+1}\right)\dif x_{t+1}\dif z\\
\boldsymbol{V}_{t+1}^{\pi}\left(b_{t},a\right)^{\left[i\right]}\triangleq\boldsymbol{V}_{t+1}^{\pi}\left(b_{t},a,x_{t,i}\right)=\\
\intop_{\mathcal{X}}\intop_{\mathcal{Z}}V_{t+1}^{\pi}\left(b_{t}az\right)Z_{t+1}T_{t,t+1}^{\left[i\right]}\dif x_{t+1}\dif z \\
\mathbb{E}_{\mathbf{P}}\left[V_{t+1}^{\pi}\left(b_{t}az\right)\mid b_{t}\right]\bydef\\
\intop_{\mathcal{X}}\intop_{\mathcal{X}}\intop_{\mathcal{Z}}V_{t+1}^{\pi}\left(b_{t}az\right)\left(Z_{t+1}\right)\left(T_{t,t+1}\right)p\left(x_{t}\mid b_{t}\right)\dif x_{t:t+1}\dif z\\
=\intop_{\mathcal{X}}\boldsymbol{V}_{t+1}^{\pi}\left(b_{t},a\right)p\left(x_{t}\mid b_{t}\right)\dif x_{t}\\
=\frac{\intop_{\mathcal{X}^{t+1}}\boldsymbol{V}_{t+1}^{\pi}\left(b_{t},a\right)\left(Z_{1:t}\right)\left(T_{1:t}\right)p\left(x_{0}\mid b_{0}\right)\dif x_{0:t}}{\intop_{\mathcal{X}^{t+1}}\left(Z_{1:t}\right)\left(T_{1:t}\right)p\left(x_{0}\mid b_{0}\right)\dif x_{0:t}}
\end{gather}
\paragraph{(1) Importance sampling error}

This term is the difference between the conditional expectation $\mathbb{E}_{\mathbf{P}}\left[V_{t+1}^{\pi}\left(b_{t}az\right)\mid b_{t}\right]$
and its SN estimator. We have $\left\Vert \boldsymbol{V}_{t+1}^{\pi}\right\Vert _{\infty}\leq V_{\max}$,
therefore we can apply the SN concentration inequality from Lemma \ref{lem:SNBounds} to
bound it by the augmented $\lambda\slash4$:
\begin{gather}
    \begin{split}
        \probd\left(\left| 
            \vphantom{\frac{\sum_{i=1}^{C}w_{t}^{i}\boldsymbol{V}_{t+1}^{\pi}\left(b_{t},a\right)^{\left[i\right]}}{\sum_{i=1}^{C}w_{t}^{i}}} 
        \mathbb{E}_{\mathbf{P}}\left[V_{T+1}^{\pi}\left(b_{t}az\right)\mid b_{t}\right] \right. \right. \\
        \left. \left. -\frac{\sum_{i=1}^{C}w_{t}^{i}\boldsymbol{V}_{t+1}^{\pi}\left(b_{t},a\right)^{\left[i\right]}}{\sum_{i=1}^{C}w_{t}^{i}}\right|\leq\frac{\lambda}{4}\right)
    \end{split} \\
    \geq1-3\exp\left(-C\cdot k_{\max}^{2}\left(\lambda,C\right)\right)
\end{gather}

\paragraph{(2) Monte Carlo weighted sum approximation error}

First, we assume that all variables $\left\{ s_{t}^{i},w_{t}^{i}\right\} ,b_{t},a$
are given, and only $I$ is random. Note that $\left\Vert \boldsymbol{V}_{t+1}^{\pi}\left(b_{t},a,\cdot\right)\right\Vert _{\infty}\leq V_{\max}$.

We will define the discrete probability mass defined by the weights
at depth $t$: $p_{w,t}\left(I=i\right)\triangleq\left(w_{t}^{i}\slash\sum_{j}w_{t}^{j}\right)$,
and for convenience denote $\boldsymbol{V}\left(i\right)\triangleq\boldsymbol{V}_{t+1}^{\pi}\left(b_{t},a\right)^{\left[i\right]}$.
The term $\frac{\sum_{i=1}^{C}w_{t}^{i}\boldsymbol{V}_{t+1}^{\pi}\left(b_{t},a\right)^{\left[i\right]}}{\sum_{i=1}^{C}w_{t}^{i}}$
is equivalent to the expectation of $\boldsymbol{V}\left(I\right)$
w.r.t. $p_{w,t}\left(I=i\right)$. The term $\frac{1}{C}\sum_{i=1}^{C}\boldsymbol{V}_{t+1}^{\pi}\left(b_{t},a\right)^{\left[I_{i}\right]}$
is equivalent to a Monte Carlo average of the previous quantity with
$C$ samples. Therefore:
\begin{gather}
\left|\frac{\sum_{i=1}^{C}w_{t}^{i}\boldsymbol{V}_{t+1}^{\pi}\left(b_{t},a\right)^{\left[i\right]}}{\sum_{i=1}^{C}w_{t}^{i}}-\frac{1}{C}\sum_{i=1}^{C}\boldsymbol{V}_{t+1}^{\pi}\left(b_{t},a\right)^{\left[I_{i}\right]}\right|\\
\Rightarrow\left|\mathbb{E}_{p_{w,t}\left(I=i\right)}\left[\boldsymbol{V}\left(I\right)\right]-\frac{1}{C}\sum_{i=1}^{C}\boldsymbol{V}\left(I_{i}\right)\right|
\end{gather}

This is the form of the double-sided Hoeffding-type bound on the function
values $\boldsymbol{V}\left(I\right)$. Hence, we can choose $\lambda$
such that for an arbitrary fixed set $\left\{ s_{t}^{i},w_{t}^{i}\right\} ,b_{t},a$:
\begin{gather}
    \begin{split}
        \probd\left(\left|\mathbb{E}_{p_{w,t}\left(I=i\right)}\left[\boldsymbol{V}\left(I\right)\right]-\frac{1}{C}\sum_{i=1}^{C}\boldsymbol{V}\left(I_{i}\right)\right| \right. \\
        \left. \vphantom{\sum_{i=1}^{C}}
             \leq\lambda\mid\left\{ s_{t}^{i},w_{t}^{i}\right\} ,b_{t},a\right)
    \end{split} \\
    \geq1-2\exp\left(-C\lambda^{2}\slash 2V_{\max}^{2}\right)
\end{gather}

We will use the two following well-known facts:
\begin{enumerate}
\item The probability of an event $A$ is equal to the expectation of the
indicator $\boldsymbol{1}_{A}$, i.e. $\probd\left(A\right)=\mathbb{E}\left(\boldsymbol{1}_{A}\right)$.
\item The tower property, also known as the law of total expectation: for
any two random variables $X,Y$ defined on the same probability space,
holds that $\mathbb{E}\left[X\right]=\mathbb{E}\left[\mathbb{E}\left[X\mid Y\right]\right]$.
\end{enumerate}
Therefore
\begin{gather}
\probd\left(\left|\mathbb{E}_{p_{w,d}\left(I=i\right)}\left[\boldsymbol{V}\left(I\right)\right]-\frac{1}{C}\sum_{i=1}^{C}\boldsymbol{V}\left(I_{i}\right)\right|\leq\lambda\right)\\
\overset{\text{Expt. of indicator}}{=}\mathbb{E}\left[\boldsymbol{1}_{\probd\left(\left|\mathbb{E}_{p_{w,d}\left(I=i\right)}\left[\boldsymbol{V}\left(I\right)\right]-\frac{1}{C}\sum_{i=1}^{C}\boldsymbol{V}\left(I_{i}\right)\right|\leq\lambda\right)}\right]\\
\begin{split}
\overset{\text{Tower property}}{=}\mathbb{E}\left[\mathbb{E}\left[\boldsymbol{1}_{\probd\left(\left|\mathbb{E}_{p_{w,d}\left(I=i\right)}\left[\boldsymbol{V}\left(I\right)\right]-\frac{1}{C}\sum_{i=1}^{C}\boldsymbol{V}\left(I_{i}\right)\right|\leq\lambda\right)}\mid \right.\right.\\
\left\{ s_{d,i},w_{d,i}\right\} ,b_{d},a\bigg]\bigg]
\end{split} \\
\begin{split}
\overset{\text{Expt. of indicator}}{=}\mathbb{E}\left[\probd\left(\left|\mathbb{E}_{p_{w,d}\left(I=i\right)}\left[\boldsymbol{V}\left(I\right)\right]-\frac{1}{C}\sum_{i=1}^{C}\boldsymbol{V}\left(I_{i}\right)\right| \right. \right. \\
\leq\lambda\mid\left\{ s_{d,i},w_{d,i}\right\} ,b_{d},a\bigg)\bigg]
\end{split} \\
\geq\mathbb{E}\left[1-2\exp\left(-C\lambda^{2}\slash2V_{\max}^{2}\right)\right]\\
=1-2\exp\left(-C\lambda^{2}\slash2V_{\max}^{2}\right)
\end{gather}

We choose to bound term $(2)$ with augmented $\lambda\slash4$, and
this holds with probability at least $1-2\exp\left(-C\lambda^{2}\slash32V_{\max}^{2}\right)$:

\begin{gather}
    \begin{split}
    \probd\left(\left|\frac{\sum_{i=1}^{C}w_{t}^{i}\boldsymbol{V}_{t+1}^{\pi}\left(b_{t},a\right)^{\left[i\right]}}{\sum_{i=1}^{C}w_{t}^{i}}-\right.\right. \\
    \frac{1}{C}\sum_{i=1}^{C}\boldsymbol{V}_{t+1}^{\pi}\left(b_{t},a\right)^{\left[I_{i}\right]}\bigg|\leq\frac{\lambda}{4}\bigg)
    \end{split}\\
\geq1-2\exp\left(-C\lambda^{2}\slash32V_{\max}^{2}\right)
\end{gather}

\paragraph{(3) Monte Carlo next-step integral approximation error}

First we define $\Delta_{t+1}\left(b_{t},a\right)^{\left[I_{i}\right]}\triangleq\boldsymbol{V}_{t+1}^{\pi}\left(b_{t},a\right)^{\left[I_{i}\right]}-V_{t+1}^{\pi}\left(b_{t}az^{\left[I_{i}\right]}\right)$.
Note that by rearranging the summation we can write 
\begin{gather}
\left|\frac{1}{C}\sum_{i=1}^{C}\boldsymbol{V}_{t+1}^{\pi}\left(b_{t},a\right)^{\left[I_{i}\right]}-\frac{1}{C}\sum_{i=1}^{C}V_{t+1}^{\pi}\left(b_{t}az^{\left[I_{i}\right]}\right)\right|\\
=\left|\frac{1}{C}\sum_{i=1}^{C}\Delta_{t+1}\left(b_{t},a\right)^{\left[I_{i}\right]}\right|
\end{gather}

Note that $V_{t+1}^{\pi}\left(b_{t}az^{\left[I_{i}\right]}\right)$
is simply a single sample Monte Carlo approximation of $\boldsymbol{V}_{t+1}^{\pi}\left(b_{t},a\right)^{\left[I_{i}\right]}$,
as the random vector $\left(x_{t+1,I_{i}},z_{I_{i}}\right)$ is jointly
generated using the generative model according to the correct probability
$Z_{t+1}\cdot T_{t,t+1}$ given $x_{t}^{i}$. This can be seen by the
following:
\begin{gather}
\mathbb{E}_{\left(s_{d+1,I_{i}},o_{I_{i}}\right)}\left[V_{d+1}^{\pi}\left(b_{d}ao^{\left[I_{i}\right]}\right)\right]\\
=\intop_{\mathcal{X}}\intop_{\mathcal{O}}V_{d+1}^{\pi}\left(b_{d}ao\right)Z_{d+1}T_{d,d+1}^{\left[i\right]}\dif s_{d+1}\dif z\\
=\boldsymbol{V}_{d+1}^{\pi}\left(b_{d},a\right)^{\left[I_{i}\right]}
\end{gather}
Hence follows from the tower property that $\mathbb{E}\left[\Delta_{t+1}\right]=0$.
Define $\Delta_{t+1}\left(b_{t},a\right)^{\left[I_{i}\right]}\triangleq\boldsymbol{V}_{t+1}^{\pi}\left(b_{t},a\right)^{\left[I_{i}\right]}-V_{t+1}^{\pi}\left(b_{t}az^{\left[I_{i}\right]}\right)$.
Note from the triangle inequality that $\left\Vert \Delta_{t+1}\right\Vert _{\infty}\leq2V_{\max}$.
Since $I_{i}$ are i.i.d. follows that $\Delta_{t+1}\left(b_{t},a\right)^{\left[I_{i}\right]}$
are i.i.d. too, and we can directly use another Hoeffding's bound:
\begin{gather}
\probd\left(\left|\frac{1}{C}\sum_{i=1}^{C}\Delta_{t+1}\left(b_{t},a\right)^{\left[I_{i}\right]}\right|\leq\frac{\lambda}{2}\right)\\
=\probd\left(\left|\frac{1}{C}\sum_{i=1}^{C}\Delta_{t+1}\left(b_{t},a\right)^{\left[I_{i}\right]}-\mathbb{E}\left[\Delta_{t+1}\right]\right|\leq\frac{\lambda}{2}\right)\\
\geq 1 - 2\exp\left(-C\lambda^{2}\slash32V_{\max}^{2}\right)
\end{gather}

\paragraph{(4) Function estimation error}

From the inductive hypothesis, for each possible $az^{\left[I_{i}\right]}$,
we have the bound $\left|Q_{\mathbf{P},t}^{\pi}\left(b_{t}az^{\left[I_{i}\right]},a^{\prime}\right)-\hat{\tilde{Q}}_{\omega,t}^{\pi}\left(\bar{b}_{t}az^{\left[I_{i}\right]},a^{\prime}\right)\right|\leq\alpha_{t+1}$
holding with high probability for all actions $a^{\prime}$, hence
in particular for $a^{\prime}=\pi\left(b_{t}az^{\left[I_{i}\right]}\right)$.
Then follows:
\begin{gather}
\left|\frac{1}{C}\sum_{i=1}^{C}V_{t+1}^{\pi}\left(b_{t}az^{\left[I_{i}\right]}\right)-\frac{1}{C}\sum_{i=1}^{C}\hat{\tilde{V}}_{\omega,t+1}^{\pi}\left(\bar{b}_{t+1}^{\prime\left[I_{i}\right]}\right)\right|\\
\leq\frac{1}{C}\sum_{i=1}^{C}\left|V_{t+1}^{\pi}\left(b_{t}az^{\left[I_{i}\right]}\right)-\hat{\tilde{V}}_{\omega,t+1}^{\pi}\left(\bar{b}_{t+1}^{\prime\left[I_{i}\right]}\right)\right|\\
\leq\frac{1}{C}\sum_{i=1}^{C}\alpha_{t+1}=\alpha_{t+1}
\end{gather}

\paragraph{Combining the bounds}

Thus, each of the terms are bound by $(A)\leq\frac{R_{\max}}{4V_{\max}}\lambda$
and $(B)\leq\frac{1}{4}\lambda+\frac{1}{4}\lambda+\frac{1}{2}\lambda+\alpha_{t+1}$,
which uses the SN concentration bounds twice and Hoeffding's bound
twice. Combining together:
\begin{gather}
\left|Q_{\mathbf{P},t}^{\pi}\left(b_{t},a\right)-\hat{\tilde{Q}}_{\omega,t}^{\pi}\left(\bar{b}_{t},a\right)\right|\\
\leq\frac{R_{\max}}{4V_{\max}}\lambda+\nu+\gamma\left[\frac{1}{4}\lambda+\frac{1}{4}\lambda+\frac{1}{2}\lambda+\alpha_{t+1}\right]\\
\leq\frac{1-\gamma}{4}\lambda+\nu+\gamma\lambda+\gamma\alpha_{t+1}\\
\leq\lambda+\nu+\gamma\alpha_{t+1}=\alpha_{t}
\end{gather}

We obtain the worst case union bound on the probability that all inequalities
simultaneously hold. We used the SN concentration bound twice, Hoeffding's
bound twice, and assumed reward bound once. For the SN and Hoeffding's
bounds, we can bound the worst case probability of either by the following
\begin{gather}
\max\left\{ 3\exp\left(-C\cdot k_{\max}^{2}\left(\lambda,C\right)\right),2\exp\left(-C\lambda^{2}\slash32V_{\max}^{2}\right)\right\} \\
\leq3\exp\left(-C\cdot\acute{k}^{2}\right),
\end{gather}
which we multiply by the union factor bound $\left(4C\right)^{L+1}$
since we want the function estimates to be within the bounds for the
specific action chosen by the policy, and all child nodes (used $C$
times in the function estimation error), and we used either SN concentration
bound or Hoeffding's bound 4 times in total. For the reward approximation
bound, we multiply by a factor of $C^{L+1}$ to account for the branching
factor. Hence, in total, we have shown that for all levels $t$ the
worst case union bound probability of all bad events is bounded by
\begin{gather}
\probd\left(\left|Q_{\mathbf{P},t}^{\pi}\left(b_{t},a\right)-\hat{\tilde{Q}}_{\omega,t}^{\pi}\left(\bar{b}_{t},a\right)\right|\leq\alpha_{t}\right)\\
\geq1-3\left(4C\right)^{L+1}\left(\exp\left(-C\cdot\acute{k}^{2}\right)\right)-C^{L+1}\alpha\left(\nu,N\right)\\
\geq1-3\left(4C\right)^{L+1}\left(\exp\left(-C\cdot\acute{k}^{2}\right)+\alpha\left(\nu,N\right)\right)
\end{gather}

When giving a union bound for all possible actions as well (i.e. for
all policies) then the action branching factor becomes $\left|A\right|^{L+1}$,
once from bounding the root level, and another $L$ times for all
depth levels. Thus, in this case the probability becomes at least $1-3\left(4\left|A\right|C\right)^{L+1}\left(\exp\left(-C\cdot\acute{k}^{2}\right)+\alpha\left(\nu,N\right)\right)$.

\paragraph{PB-MDP Value Convergence}

Similarily to the previous convergence bound, we split the difference
into two terms, the reward estimation error $(A)$ and the next-step
value estimation error $(B)$:

\begin{gather}
\left|Q_{\mathbf{M_{P}},t}^{\pi}\left(\bar{b}_{t},a\right)-\hat{\tilde{Q}}_{\omega,t}^{\pi}\left(\bar{b}_{t},a\right)\right|\\
\leq\underbrace{\left|\rho\left(\bar{b}_{t},a\right)-\tilde{\rho}\left(\bar{b}_{t},a\right)\right|}_{(\mathrm{A})}\\
\underbrace{\begin{split}
    +\gamma \bigg|\mathbb{E}_{\mathbf{M}_{\mathbf{P}}}\left[V_{\mathbf{M}_{\mathbf{P}},t+1}^{\pi}\left(\bar{b}_{t+1}\right)\mid\bar{b}_{t},a\right] \\
    -\frac{1}{C}\sum_{i=1}^{C}\hat{\tilde{V}}_{\omega,t+1}^{\pi}\left(\bar{b}_{t+1}^{\prime\left[I_{i}\right]}\right)\bigg|.
\end{split}}_{(\mathrm{B})}
\end{gather}

Term $(A)$ can be bounded like the reward approximation error of
term $(A)$ in the previous case.
\begin{equation}
\probd\left(\left|\frac{\sum_{i=1}^{C}w_{t}^{i}r_{t}^{i}}{\sum_{i=1}^{C}w_{t}^{i}}-\frac{\sum_{i=1}^{C}w_{t}^{i}\tilde{r}_{t}^{i}}{\sum_{i=1}^{C}w_{t}^{i}}\right|\leq\nu\right)\geq1-C\delta_r\left(\nu,N_r\right)
\end{equation}

For the inductive step, we prove that the difference $(B)$ is bounded
for all $t=0,\dots,L$. We split it into two terms:
\begin{gather}
\left|\mathbb{E}_{\mathbf{M}_{\mathbf{P}}}\left[V_{\mathbf{M}_{\mathbf{P}},t+1}^{\pi}\left(\bar{b}_{t+1}\right)\mid\bar{b}_{d},a\right]-\frac{1}{C}\sum_{i=1}^{C}\hat{\tilde{V}}_{\omega,t+1}^{\pi}\left(\bar{b}_{t+1}^{\prime\left[I_{i}\right]}\right)\right|\\
\leq\underbrace{
    \begin{split}
        \bigg|\mathbb{E}_{\mathbf{M}_{\mathbf{P}}}\left[V_{\mathbf{M}_{\mathbf{P}},t+1}^{\pi}\left(\bar{b}_{t+1}\right)\mid\bar{b}_{t},a\right]- \\
        \left. \frac{1}{C}\sum_{i=1}^{C}V_{\mathbf{M}_{\mathbf{P}},t+1}^{\pi}\left(\bar{b}_{t+1}^{\prime\left[I_{i}\right]}\right)\right|
    \end{split}
    }_{(1)\text{MC transition approximation error}}\\
+\underbrace{\left|\frac{1}{C}\sum_{i=1}^{C}V_{\mathbf{M}_{\mathbf{P}},t+1}^{\pi}\left(\bar{b}_{t+1}^{\prime\left[I_{i}\right]}\right)-\frac{1}{C}\sum_{i=1}^{C}\hat{\tilde{V}}_{\omega,t+1}^{\pi}\left(\bar{b}_{t+1}^{\prime\left[I_{i}\right]}\right)\right|}_{(2)\text{ Function approximation error}}\\
\leq\underbrace{\lambda}_{(1)}+\underbrace{\beta_{t+1}}_{(2)}.
\end{gather}

\paragraph{(1) MC transition approximation error}

This term is a Monte Carlo estimate of the integration over the transition
estimate $\tau\left(\bar{b}_{t+1}\mid\bar{b}_{t},a\right)$. The value
function and its estimate are both bounded by $V_{\max}$, therefore
we can invoke Hoeffding's bound to obtain the following probabilistic
bound:
\begin{gather}
\begin{split}
    \probd\bigg(\bigg|\mathbb{E}_{\mathbf{M}_{\mathbf{P}}}\left[V_{\mathbf{M}_{\mathbf{P}},t+1}^{\pi}\left(\bar{b}_{t+1}\right)\mid\bar{b}_{t},a\right]- \\
    \left. \left. \frac{1}{C}\sum_{i=1}^{C}V_{\mathbf{M}_{\mathbf{P}},t+1}^{\pi}\left(\bar{b}_{t+1}^{\left[I_{i}\right]}\right)\right|\leq\lambda\right)\\
    \geq1-2\exp\left(-C\lambda^{2}\slash2V_{\max}^{2}\right). 
\end{split}
\end{gather}

\paragraph{(2) Function approximation error}

From the inductive hypothesis, for each $\bar{b}_{t+1}^{\prime\left[I_{i}\right]}$,
its PB-MDP $Q$-value it's sparse sampling-$\omega$-$\pi$ estimate
at step $t+1$ is bounded by $\beta_{t+1}$ for all actions. In particular,
this also applies for $a=\pi\left(\bar{b}_{t+1}^{\prime\left[I_{i}\right]}\right)$.
Thus follows 
\begin{gather}
    \left|\frac{1}{C}\sum_{i=1}^{C}V_{\mathbf{M}_{\mathbf{P}},t+1}^{\pi}\left(\bar{b}_{t+1}^{\prime\left[I_{i}\right]}\right)-\frac{1}{C}\sum_{i=1}^{C}\hat{\tilde{V}}_{\omega,t+1}^{\pi}\left(\bar{b}_{t+1}^{\prime\left[I_{i}\right]}\right)\right|\\
    \leq\frac{1}{C}\sum_{i=1}^{C}\left|V_{\mathbf{M}_{\mathbf{P}},t+1}^{\pi}\left(\bar{b}_{t+1}^{\prime\left[I_{i}\right]}\right)-\hat{\tilde{V}}_{\omega,t+1}^{\pi}\left(\bar{b}_{t+1}^{\prime\left[I_{i}\right]}\right)\right|\\
    \leq\frac{1}{C}\sum_{i=1}^{C}\beta_{d+1}=\beta_{t+1}
\end{gather}

\paragraph{Combining the bounds}

By applying similar logic of ensuring that every particle belief node
and satisfies the concentration inequalities, we combine one Hoeffding's
inequality with one immediate reward approximation error. The terms
are bounded by $(A)\leq\nu$ and $(B)\leq\lambda+\beta_{t+1}$, therefore:
\begin{gather}
\left|Q_{\mathbf{M_{P}},t}^{\pi}\left(\bar{b}_{t},a\right)-\hat{\tilde{Q}}_{\omega,t}^{\pi}\left(\bar{b}_{t},a\right)\right| \\
\leq\nu+\gamma\left(\lambda+\beta_{t+1}\right)=\beta_{t},
\end{gather}

and with the union bound, we get the following probabilistic bound
\begin{gather}
\probd\left(\left|Q_{\mathbf{M_{P}},t}^{\pi}\left(\bar{b}_{t},a\right)-\hat{\tilde{Q}}_{\omega,t}^{\pi}\left(\bar{b}_{t},a\right)\right|\leq\beta_{t}\right)\\
\geq1-2\cdotp C^{L+1}\left(\exp\left(-C\lambda^{2}\slash2V_{\max}^{2}\right)\right)-C^{L+1}\delta_r\left(\nu,N_r\right)\\
\geq1-2\cdotp C^{L+1}\left(\exp\left(-C\lambda^{2}\slash2V_{\max}^{2}\right)+\delta_r\left(\nu,N_r\right)\right).
\end{gather}
For the case of bounding for all policies simultaneously, the
probabilities become at least $1-2\cdotp\left(\left|A\right|C\right)^{L+1}\left(\exp\left(-C\lambda^{2}\slash2V_{\max}^{2}\right)+\delta_r\left(\nu,N_r\right)\right)$

\paragraph{Combining Both Concentration Bounds}

In order to enable the two concentration inequalities to simultaneously
hold, we bound the worst case union probability:
\begin{gather}
3\left(4C\right)^{L+1}\left(\exp\left(-C\cdot\acute{k}^{2}\right)+\delta_r\left(\nu,N_r\right)\right) \\+2\cdotp C^{L+1}\left(\exp\left(-C\lambda^{2}\slash2V_{\max}^{2}\right)+\delta_r\left(\nu,N_r\right)\right)\\
\leq3\left(4C\right)^{L+1}\left(\exp\left(-C\cdot\acute{k}^{2}\right)+\delta_r\left(\nu,N_r\right)\right)\\+2\left(4C\right)^{L+1}\left(\exp\left(-C\cdot\acute{k}^{2}\right)+\delta_r\left(\nu,N_r\right)\right)\\
=5\left(4C\right)^{L+1}\left(\exp\left(-C\cdot\acute{k}^{2}\right)+\delta_r\left(\nu,N_r\right)\right)
\end{gather}
Therefore, we conclude that the Sparse Sampling-$\omega$-$\pi$ $Q$-value
estimate concentration inequalities approximation error, for both
the original POMDP and its PB-MDP approximation, are bounded by $\alpha_{t}$,
$\beta_{t}$ at every belief node, respectively, with probability
at least $1-5\left(4C\right)^{L+1}\left(\exp\left(-C\cdot\acute{k}^{2}\right)+\delta_r\left(\nu,N_r\right)\right)$.
If we require the concentration inequality to simultaneously hold
for all policies, then the probability becomes $1-5\left(4\left|A\right|C\right)^{L+1}\left(\exp\left(-C\cdot\acute{k}^{2}\right)+\delta_r\left(\nu,N_r\right)\right)$.

\end{proof}

\begin{theoremrpt}[Generalized PB-MDP Convergence] 
    \label{thm:ProbConvergeRpt}
    Assume that the immediate state reward estimate is probabilistically bounded such that
    $\OpProbFlat{\absvalflat{r_{i}^{j}-\tilde{r}_{i}^{j}}\geq\nu}\leq\delta_{r}(\nu,N_r)$, for a number of reward samples $N_r$ and state sample $x_{i}^{j}$. 
    Assume that $\delta_{r}(\nu,N_r)\to0$ as $N_r\to\infty$.
    For all policies $\pi$, $t=0,\dots,L$ and $a\in\mathcal{A}$, the following bounds hold with probability of at least $1-5(4C)^{L+1}(\exp(-C\cdot\acute{k}^{2})+\delta_{r}(\nu,N_r))$:
    \begin{gather}
        \textstyle
        \absvalflat{Q_{\OrigPOMDP,t}^{\pi,[{p_Z}\slash{q_Z}]}(b_{t},a)-Q_{\PBMDP,t}^{\pi,[{p_Z}\slash{q_Z}]}(\PB{t}, a)}\leq \alpha_{t}+\beta_{t},
    \end{gather}
    where,
    \begin{gather}
        \textstyle
        \alpha_{t}=(1+\gamma)\lambda+\gamma \alpha_{t+1},\ \alpha_{L}=\lambda\geq0, \\
        \beta_{t}=2\nu+\gamma \beta_{t+1},\ \beta_{L}=2\nu\geq 0, \\
        k_{\max}(\lambda, C)=\frac{\lambda}{4V_{\max}d_{\infty}^{\max}}-\frac{1}{\sqrt{C}}>0,\\
        \acute{k}=\min\{k_{\max},\lambda\slash4\sqrt{2}V_{\max}\}.
    \end{gather}
    If we require the bound to hold for all possible policies that can be extracted from a given belief tree simultaneously, then under the assumption of a finite action space, the probability is at least $1-5(4\absvalflat{\mathcal{A}}C)^{L+1}(\exp(-C\cdot\acute{k}^{2})+\delta_{r}(\nu,N_r))$.
\end{theoremrpt}

\begin{proof}
    We prove for the observation model $p_Z$ without loss of generality.

    Under the same conditions and probability as Lemma \ref{lem:SparseSamplingOmegaPiConvergence}, we bound the difference directly between the theoretical action value function and the particle-belief approximation using the triangle inequality,
\begin{gather}
    \left|Q_{\mathbf{P},t}^{\pi}(b_{t},a)-Q_{\mathbf{M_{P}},t}^{\pi}(\PB{t},a)\right|\leq\alpha_{t}+\beta_{t}.
\end{gather}
We define the following recursive bounds
\begin{gather}
    A_{t}\bydef (1+\gamma)\lambda+\gamma A_{d+1},\ A_{L}\bydef \lambda \\
    B_{t}\bydef 2\nu+\gamma B_{t+1},\ B_{L}\bydef 2\nu
\end{gather}
and follows that $A_{t}+B_{t}=\alpha_{t}+\beta_{t}$ and $A_{L}+B_{L}=\alpha_{L}+\beta_{L}$, hence follows that $\left|Q_{\mathbf{P},t}^{\pi}(b_{t},a)-Q_{\mathbf{M_{P}},t}^{\pi}(\PB{t},a)\right|\leq A_{t}+B_{t}$. By renaming $\alpha_t \bydef A_t$ and $\beta_t \bydef B_t$ we get the required result.
\end{proof}
    
\subsection{Corollary \ref{crl:ArbitraryPrecisionBoundsRpt}}

\begin{corollaryrpt}
    \label{crl:ArbitraryPrecisionBoundsRpt}
    For arbitrary precision $\varepsilon$ and accuracy $\delta$ we can choose constants $\lambda, \nu, C, N_r$ such that the following holds with probability of at least $1-\delta$:
    \begin{gather}
        \absvalflat{Q_{\OrigPOMDP,t}^{\Policy{},[{p_Z}\slash{q_Z}]}(\PB{t},a)-Q_{\PBMDP,t}^{\Policy{},[{p_Z}\slash{q_Z}]}(\PB{t}, a)}\leq \varepsilon.
    \end{gather}
\end{corollaryrpt}

\begin{proof}
    We prove for the case of bounding for a belief tree with a finite action space. 
    The case of a single policy can be proven similarly by removing the factors related $\absval{\mathcal{A}}$.

    Let $\varepsilon>0$ and let $\lambda>0$.
    We denote $L_{+1} \bydef L+1$.

    Let $\nu,\lambda=\frac{1}{4L_{+1}}\varepsilon$.

    The conditions necessary for Theorem \ref{thm:ProbConvergeRpt} are the following:
    \begin{gather}
        k_{\max}(\lambda, C)=\frac{\lambda}{4V_{\max}d_{\infty}^{\max}}-\frac{1}{\sqrt{C}}>0, \label{eq:condition1} \\
        \delta \geq 5(4\absvalflat{\mathcal{A}}C)^{L_{+1}}(\exp(-C\cdot\acute{k}^{2})+\delta_{r}(\nu,N_r)) \label{eq:condition2} \\
        \acute{k}=\min\{k_{\max},\lambda\slash4\sqrt{2}V_{\max}\}.
    \end{gather}

    Denote $A_1\bydef \frac{\lambda}{4V_{\max}d_{\infty}^{\max}}$, $A_2\bydef \lambda\slash4\sqrt{2}V_{\max}$, and note $A_1, A_2>0$.
    We obtain $k_{\max}(\lambda, C) = A_1 - \frac{1}{\sqrt{C}}$.
    
    We would like to choose a particle count $C$ large enough such that $k_{\max}(\lambda, C)$ is larger than $A_1\slash 2$.
    Hence we denote the solution for the following equation with $C_{A_1\slash2}$:
    \begin{gather}
        (A_1 - \frac{1}{\sqrt{C_{A_1\slash2}}})^2\cdot C_{A_1\slash2} = \frac{A_1}{2} \\
        \Rightarrow C_{A_1\slash2}\bydef \frac{2\sqrt{2}\sqrt{\frac{1}{(A_1)^{3}}}(A_1)^{2}+ A_1 + 2}{2 (A_1)^2}. 
    \end{gather}
    Denote the following constants:
    \begin{gather}
        K_1 \bydef \max\{C_{A_1\slash2},\frac{1}{(A_1)^2}, \frac{2}{(A_2)^2}\} \\
        K_2 \bydef \min \{\frac{A_1}{2}, 2\} > 0, \\
        K_3 \bydef 5(4\absvalflat{\mathcal{A}})^{L_{+1}}
    \end{gather}

    We choose an auxiliary particle count $\tilde{C}\in \mathbb{N}$, such that the particle count $C$ satisfies
    \begin{gather}
        C>K_1 \cdot \tilde{C} \geq K_1.
    \end{gather}
    Condition \eqref{eq:condition1} is satisfied because 
    \begin{gather}
        k_{\max}(\lambda, C) = A_1 - \frac{1}{\sqrt{C}} > A_1 - \frac{1}{\sqrt{\frac{1}{(A_1)^2}}}, \\
        A_1 - \sqrt{(A_1)^2} = 0.
    \end{gather}
    Additionally, 
    \begin{gather}
        k_{\max}(\lambda, C)^2 \cdot C > \frac{A_1}{2}, \\
        (A_2)^2 \cdot C > (A_2)^2 \cdot \frac{2}{(A_2)^2} = 2.
    \end{gather}
    We obtain that
    \begin{gather}
        \acute{k}^2 \cdot C = \\
        \min \{k_{\max}(\lambda, C)^2, (A_2)^2\} \cdot C > \\
        \min \{\frac{A_1}{2}, 2\} \cdot \tilde{C} = K_2 \cdot \tilde{C}.
    \end{gather}
    Therefore, condition \eqref{eq:condition2} will be satisfied if the following is satisfied:
    \begin{gather}
        \delta \geq K_3 (K_1)^{L_{+1}} \cdot \tilde{C}^{L_{+1}} (\exp(-K_2 \cdot \tilde{C}) + \delta_{r}(\nu,N_r)).
    \end{gather}
    We perform the change of variables $X \bydef K_2\cdot \tilde{C}$, i.e. $\tilde{C} = \frac{X}{K_2}$:
    \begin{gather}
        K_3 (K_1)^{L_{+1}} \cdot \tilde{C}^{L_{+1}} \exp(-K_2 \cdot \tilde{C}) \\
        = K_3(\frac{K_1}{K_2})^{L_{+1}}\cdot X^{L_{+1}} \exp(-X).
    \end{gather}
    The exponential function grows faster than any polynomial, and specifically for $P(X)=K_3(\frac{K_1}{K_2})^{L_{+1}}\cdot X^{L_{+1}}$. Therefore, we can choose $X^\prime \in \mathbb{R}$ such that $\forall X\in\mathbb{N}>X^{\prime}$:
    \begin{gather}
        K_3(\frac{K_1}{K_2})^{L_{+1}}\cdot X^{L_{+1}} \exp(-X) \leq \frac{\delta}{2}. \label{eq:result_1}
    \end{gather}
    By choosing an auxiliary particle count $\tilde{C} > \frac{X^\prime}{K_2}$ we satisfy \eqref{eq:result_1}. 

    For the choice of $\lambda$ the following holds:
    \begin{gather}
        \alpha_0 \leq \sum_{k=0}^{L} 2\lambda \cdot \gamma^k \leq 2\lambda L_{+1} = \frac{2\cdotp \varepsilon(L_{+1})}{4L_{+1}}=\frac{\varepsilon}{2}.
    \end{gather}
    For the given choice of the particle count $C$, we remind the assumption in Theorem \ref{thm:ProbConvergeRpt} that for all $\nu>0$ it holds that $\delta_r(\nu,N_r)\to 0$ as $N_r\to\infty$. Therefore, we can choose $N_r^{\prime}$ such that the following holds:
    \begin{gather}
        K_3 C^{L_{+1}} \cdot \delta_r(\nu,N_r) \leq \frac{\delta}{2}
    \end{gather}
    For the choice of $\nu$ the following holds:
    \begin{gather}
        \beta_0 \leq \sum_{k=0}^{L} 2\nu \cdot \gamma^k \leq 2\nu L_{+1} = \frac{2\cdotp \varepsilon (L_{+1})}{4L_{+1}}=\frac{\varepsilon}{2}.
    \end{gather}
    In summary, for the choices of:
    \begin{gather}
        C>\max \{K_1, \frac{K_1}{K_2}X^\prime\} \\
        N_r > N_r^{\prime}
    \end{gather}
    we have that the following statements hold,
    \begin{gather}
        K_3 C^{L_{+1}}\cdot\exp(-C\cdot\acute{k}^{2}) \leq \frac{\delta}{2} \\
        K_3 C^{L_{+1}} \cdot \delta_r(\nu,N_r) \leq \frac{\delta}{2},
    \end{gather}
    And therefore from Theorem \ref{thm:ProbConvergeRpt}, with probability of at least $1-K_3C^{L_{+1}}(\exp(-C\cdot\acute{k}^{2}) + \delta_r(\nu,N_r))\geq 1-\delta$ the following bound holds:
    \begin{gather}
        \left|Q_{\mathbf{P},t}^{\pi}(b_{t},a)-Q_{\mathbf{M_{P}},t}^{\pi}(\PB{t},a)\right|\leq\alpha_{t}+\beta_{t} \\
        \leq \alpha_0 + \beta_0 \leq \frac{\varepsilon}{2} + \frac{\varepsilon}{2} = \varepsilon.
    \end{gather}
\end{proof}

\subsection{Corollary \ref{crl:JointApproximationBoundRpt}}

\begin{corollaryrpt} 
    \label{crl:JointApproximationBoundRpt}
    Assuming that $\mathcal{P}$ is an MDP planner that can approximate $Q$-values with arbitrary precision $\varepsilon^{\mathcal{P}}$ at an accuracy $1-\delta^{\mathcal{P}}$, we denote the precision and accuracy of the action value and action cumulative bound functions:
    \begin{gather}
        \OpProbFlat{\absvalflat{Q_{\PBMDP,t}^{\pi,q_Z}(\bar{b}_{t},a)-\Estim{Q}_{\PBMDP,t}^{\pi,q_Z}(\bar{b}_{t},a)}\leq\varepsilon^{\mathcal{P}}_{Q}}\geq 1-\delta^{\mathcal{P}}_{Q} \label{eq:PlannerBoundsQ_rpt}\\
        \OpProbFlat{\absvalflat{\Phi_{\PBMDP,t}^{\pi}(\bar{b}_{t},a)-\PhiImSaEst_{\PBMDP,t}^{\pi}(\bar{b}_{t},a)}\leq\varepsilon^{\mathcal{P}}_{\Phi}}\geq 1-\delta^{\mathcal{P}}_{\Phi} \label{eq:PlannerBoundsPhi_rpt}
    \end{gather}
    From Corollary \ref{crl:ArbitraryPrecisionBounds} it holds that we can choose constants $\lambda, \nu, C, N_r$ such that the following holds,
    \begin{gather}
        \OpProbFlat{\absvalflat{Q_{\OrigPOMDP,t}^{\pi,q_Z}(b_{t},a)-Q_{\PBMDP,t}^{\pi,q_Z}(\PB{t}, a)}\leq \varepsilon_{Q}}\geq 1-\delta_{Q} \label{eq:PBMDPBoundsQ_rpt}, \\
        \OpProbFlat{\absvalflat{\Phi_{\OrigPOMDP,t}^{\pi}(b_{t},a)-\RewardEstim{\Phi}_{\PBMDP,d}^{\pi}(\PB{t}, a)}\leq \varepsilon_{\Phi}}\geq 1-\delta_{\Phi}. \label{eq:PBMDPBoundsPhi_rpt}
    \end{gather}
    Then with probability of at least $1-(\delta_{Q}+\delta^{\mathcal{P}}_{Q}+\delta_{\Phi}+\delta^{\mathcal{P}}_{\Phi})$
    \begin{gather}
        \absvalflat{Q_{\OrigPOMDP,t}^{\pi,p_Z}(b_{t},a)-\Estim{Q}_{\PBMDP,t}^{\pi,q_Z}(\bar{b}_{t},a)}\leq \\
        \PhiImSaEst_{\PBMDP,t}^{\pi}(\bar{b}_{t},a) + \varepsilon_{Q} + \varepsilon^{\mathcal{P}}_{Q} + \varepsilon_{\Phi} + \varepsilon^{\mathcal{P}}_{\Phi}.
    \end{gather}
\end{corollaryrpt}

\begin{proof}
    We combine all probabilistic bounds with the triangle inequality and the union bound, to conclude that with probability of at least $1-(\delta_{Q}+\delta^{\mathcal{P}}_{Q}+\delta_{\Phi}+\delta^{\mathcal{P}}_{\Phi})$:
    \begin{gather}
        \absvalflat{Q_{\OrigPOMDP,t}^{\pi,p_Z}(b_{t},a)-\Estim{Q}_{\PBMDP,t}^{\pi,q_Z}(\PB{t},a)}\\
        \leq \absvalflat{Q_{\OrigPOMDP,t}^{\pi,p_Z}(b_{t},a) - Q_{\OrigPOMDP,t}^{\pi,q_Z}(b_{t},a)} \, (\textit{Thm. \ref{thm:LocalActionBoundRpt}}) \\
        + \absvalflat{Q_{\OrigPOMDP,t}^{\pi,q_Z}(b_{t},a) - Q_{\PBMDP,t}^{\pi,q_Z}(\PB{t},a)} \,(\textit{eq. \eqref{eq:PBMDPBoundsQ_rpt}}) \\
        + \absvalflat{Q_{\PBMDP,t}^{\pi,q_Z}(\PB{t},a) -\Estim{Q}_{\PBMDP,t}^{\pi,q_Z}(\PB{t},a)} \, (\textit{eq. \eqref{eq:PlannerBoundsQ_rpt}}) \\
        \leq \Phi_{\OrigPOMDP,t}^{\pi}(b_{t},a) + \varepsilon_{Q} + \varepsilon^{\mathcal{P}}_{Q} \quad(\textit{eq. \eqref{eq:PBMDPBoundsPhi_rpt} + eq. \eqref{eq:PlannerBoundsPhi_rpt}}) \\
        \leq \PhiImSaEst_{\PBMDP,t}^{\pi}(\PB{t},a) + \varepsilon_{Q} + \varepsilon^{\mathcal{P}}_{Q} + \varepsilon_{\Phi} + \varepsilon^{\mathcal{P}}_{\Phi}.
    \end{gather}
\end{proof}

\section{Further Implementation Details}

\subsection{Simulative Setting}

In the 2D beacons environment, the state and observation spaces are defined as the whole plane: $\mathcal{X}=\mathcal{Z}=\mathbb{R}^2$. The planning horizon is $L=15$.

We define the 2D rectangle $Rect((x_1,y_1),(x_2,y_2))$ as the axis-aligned rectangle starting from the bottom-left corner $(x_1,y_1)$ to the top-right corner $(x_2,y_2)$:
\begin{gather}
    Rect((x_1,y_1),(x_2,y_2)) \bydef \\
    \left\{ (x,y)\in\mathbb{R}^2 \mid x_1 \leq x \leq x_2, y_1 \leq y\leq y_2\right\}
\end{gather}
The outer walls are defined by $\mathcal{X}_\textit{collision}\bydef \neg Rect((-2, 0),(12,6))$.
The goal region is defined by $\mathcal{X}_\textit{goal}=Rect ((4,-1.5),(6, 0))$. 
The prior is the following Gaussian mixture with 2 components:
\begin{gather}
    b_0(x_0) = \frac{1}{2}\mathcal{N}(\mu_1, \Sigma_0) + \frac{1}{2}\mathcal{N}(\mu_2, \Sigma_0), \\
    \mu_1 = (1,2), \mu_2=(9,2), \\
    \Sigma_0 = \operatorname{diag}(\sigma_{x_0}^2,\sigma_{y_0}^2), \\
    \sigma_{x_0} = 0.5, \sigma_{y_0}=0.25.
\end{gather}
The action space is the discrete space of the following 4 actions: $\mathcal{A}=\{(0,1), (0,-1), (1,0), (-1,0)\}$.
The transition model is the following Gaussian model:
\begin{gather}
    p_T(x^\prime\mid x, a) = \mathcal{N}(x+a,\Sigma_T), \\
    \Sigma_T = \operatorname{diag}(\sigma_T^2, \sigma_T^2) , \\
    \sigma_T^2 = 0.15.
\end{gather}
In the arena the 6 beacons are arranged in a horizontal row in equal distances along the 2D line from $(0, 4)$ to $(10, 4)$, such that their locations are:
\begin{gather}
    (x_1^\textit{beacon}, y_1^\textit{beacon}) = (0,4), \\ 
    (x_2^\textit{beacon}, y_2^\textit{beacon}) = (2,4), \\
    (x_3^\textit{beacon}, y_3^\textit{beacon}) = (4,4), \\
    (x_4^\textit{beacon}, y_4^\textit{beacon}) = (6,4), \\
    (x_5^\textit{beacon}, y_5^\textit{beacon}) = (8,4), \\
    (x_6^\textit{beacon}, y_6^\textit{beacon}) = (10,4).
\end{gather}
The range of a beacon is $R^\textit{beacon}=1$.
The light and dark regions are defined as the following:
\begin{gather}
    \mathcal{X}_\textit{light} = \bigcup_{i=1}^6{B((x_i^\textit{beacon}, y_i^\textit{beacon}); R^\textit{beacon})} \\
    \mathcal{X}_\textit{dark} = \mathcal{X} \setminus \mathcal{X}_\textit{light}
\end{gather}

The original and simplified observation models in the dark region are the same Gaussian model:
\begin{gather}
    p_Z(z\mid x) = q_Z(z\mid x) = \mathcal{N}(x, \Sigma_\textit{dark}), \\
    \Sigma_\textit{dark} = \operatorname{diag}(\sigma_\textit{dark}^2, \sigma_\textit{dark}^2), \\
    \sigma_\textit{dark} = 5.
\end{gather}
The original model in the light region is defined as a Gaussian mixture model made of rings of components with increasing number of components but decaying weights, centered around a center component. Its definition is the following:
\begin{gather}
    N_\sigma = 3, k_r = 10, k_\theta = 25 \\
    \sigma_\textit{light} = 0.3, \\
    \Sigma_{p_Z} = \operatorname{diag}(\sigma_\textit{light}\frac{N_\sigma}{k_r}) \\
    N_i^\theta = \max\{1,i\cdotp k_\theta\}, \\
    r_i = N_\sigma\cdotp i, \\
    w_i = \exp(-\frac{(i \frac{N_\sigma}{k_r})^2}{2}), \\
    \tilde{w}_i = w_i \slash (\sum_{k=1}^{k_r} \sum_{j=1}^{N_i^\theta} w_k), \\
    v_{i, j}^{\theta} = i \frac{N_\sigma}{k_r} Rot(\frac{2\pi j}{N_i^\theta}) \begin{bmatrix}
        \sigma_\textit{light} \\
        \sigma_\textit{light}
    \end{bmatrix} \\
    p_Z(z\mid x \in \mathcal{X}_\textit{light}) = \sum_{i=1}^{k_r} \sum_{j=1}^{N_i^\theta} \tilde{w}_i \mathcal{N}(x+\cdotp v_{i, j}^{\theta}, \Sigma_{p_Z})
\end{gather}
where $Rot(\varphi)$ is the 2D rotation matrix corresponding to angle $\varphi$.
The simplified observation model is the following single component Gaussian:
\begin{gather}
    q_Z(z\mid x \in \mathcal{X}_\textit{light}) = \mathcal{N}(x, \Sigma_\textit{light}), \\
    \Sigma_\textit{light} = \operatorname{diag}(\sigma_\textit{light}^2, \sigma_\textit{light}^2), \\
    \sigma_\textit{light} = 0.3.
\end{gather}

The reward function is time and state dependent only, and defined as the sum of 3 indicators:
\begin{gather}
    r_{t}(x)=R_{\textit{hit}}\cdot\boldsymbol{1}_{x\in\mathcal{X}_{\textit{goal}}}\\
    + R_{\textit{miss}}\cdot\boldsymbol{1}_{x\notin\mathcal{X}_{\textit{goal}}} \\
    + R_{\textit{collide}}\cdot\boldsymbol{1}_{x\in\mathcal{X}_{\textit{collision}}}
\end{gather}
For all time steps $t > 0$ we set $R_{\textit{hit}}=100$, $R_{\textit{collide}}=-50$. We set $R_{\textit{miss}}=-50$ if $t=L$, and otherwise $-1$ if $t>0$. In the first time step the reward is 0, i.e. $r_0=0$. Additionally, there is no discount factor, i.e. $\gamma=1$.
From these definitions, it holds that $\Rmax{i}=100$ and $\Vmax{t}=100 + (15-t)$ for $1\leq t\leq15$.

\subsection{Bound Estimate}

\begin{algorithm}[tb]
    \caption{Empirical TV-Distance $\DZEst$}
    \label{alg:DZEst}
    \textbf{Algorithm:} $\text{Estimate }\DZ$.\\
    \textbf{Input}: Number of state samples $N_\Delta$, number of observation per state sample $N_Z$, total variation threshold $\Delta_\textit{Thresh}$, $Q_0$ state proposal distribution, $p_Z$ original measurement model, $q_Z$ simplified observation model. \\
    \textbf{Output}: Number of kept states $N_\Delta^{\textit{eff}}$, a set of states $\{x_n^{\Delta}\}_{n=1}^{N_\Delta^{\textit{eff}}}$, TV-distance estimates $\{ \DZEst(x_n^{\Delta}) \}_{n=1}^{N_\Delta^{\textit{eff}}}$.
    \begin{algorithmic}[1]
    \STATE $X=list(), D=list()$
    \FORALL {{$n=1,\dots, N_\Delta$}}
    \STATE $x_n^{\Delta} \sim Q_0$
    \FORALL {$j=1,\dots,N_Z$}
    \STATE $z_n^j \sim (p_Z + q_Z) \slash 2$
    \ENDFOR
    \STATE $\DZEst(x_{n}^{\Delta}) = \sum_{j=1}^{N_Z}{2\cdot\frac{\lvert p_Z(z_{j}^{n}\mid x_{n}^{\Delta}) - q_Z(z_{j}^{n}\mid x_{n}^{\Delta}) \rvert}{p_Z(z_{j}^{n}\mid x_{n}^{\Delta}) + q_Z(z_{j}^{n}\mid x_{n}^{\Delta})}}$
    \IF {$\DZEst(x_{n}^{\Delta}) > \Delta_\textit{Thresh}$}
    \STATE $X\leftarrow x_{n}^{\Delta}, D \leftarrow \DZEst(x_{n}^{\Delta})$
    \ENDIF
    \ENDFOR
    \STATE \textbf{return} $(\#X), X, D$
    \end{algorithmic}
\end{algorithm}

\begin{algorithm}[tb]
    \caption{Empirical Bounds $\MImSa_i$}
    \label{alg:EmpiricalBounds}
    \textbf{Algorithm:} $\text{Estimate }m_i\left(\bar{b},a,i\right)$.\\
    \textbf{Parameters}: Number of state samples $N_X$. \\
    \textbf{Input}: Particle belief set $\bar{b}_i=\left\{ \left(x_{i},w_{i}\right)\right\}$, action $a$, time step $i$. \\
    \textbf{Output}: A scalar $\Estim{\MImSa}_i(\bar{b}_i,a)$ that is an estimate of $m_i(\bar{b}_i,a)$.
    \begin{algorithmic}[1]
    \FORALL {$j=1,\dots,N_X$}
    \STATE $x_i^j \sim \bar{b}_i$
    \ENDFOR
    \STATE \textbf{return} $\frac{1}{N_x}\sum_{j=1}^{N_X}{\text{Estimate }m_i\left(x_i^j,a,i\right)}$
    \end{algorithmic}
    
    \textbf{Algorithm:} $\text{Estimate }m_i\left(x,a,i\right)$.\\
    \textbf{Parameters}: Transition model $p_T$, Distance threshold $d_T$, $Q_0$ state proposal distribution. \\
    \textbf{Input}: State $x_{i}$, action $a$, time step $i$. \\
    \textbf{Output}: A scalar $\MImSa_i(x_i,a)$ that is an estimate of $m_i(x_i,a)$.
    \begin{algorithmic}[1]
    \STATE $Neighborhood = list()$ \\
    \COMMENT{Neighborhood search can be implemented with KD-Tree}
    \FORALL {$n=1,\dots,N_{\Delta}^{\textit{eff}}$} 
    \IF {$\lVert x_i - x_n^\Delta \rVert \leq d_T$} 
    \STATE $Neighborhood \leftarrow x_n^\Delta$
    \ENDIF
    \ENDFOR
    \STATE $m\leftarrow 0$
    \FORALL {$x_n^\Delta \in Neighborhood$}
    \STATE $m \leftarrow m + \Vmax{i+1}\cdot \frac{p_T(x_n^\Delta \mid x_i, a)}{N_\Delta Q_0(x_n^\Delta)} \DZEst(x_n^\Delta)$
    \ENDFOR
    \STATE \textbf{return} $m$
    \end{algorithmic}
\end{algorithm}

We now describe the parameters describing the computation of $\DZEst$ and $\MImSa_i$.
In Algorithm \ref{alg:DZEst} we describe the process of sampling the delta states and estimating $\DZEst$. In Algorithm \ref{alg:EmpiricalBounds} we describe the procedures for computing $\MImSa_i$ for a particle belief and for a state sample.

We chose the number of delta states $N_\Delta=2048$, the number of observation samples for estimating $\DZEst$ is $N_{Z}=256$.
The number of particles in the belief $C=250$, and $N_X=30$ is the number of particles used for computing $\Estim{\MImSa}_{i}$.
We chose $\Delta_{\textit{Thresh}}=10^{-4}$, the threshold for filtering delta states with low delta. 
The threshold distance for the transition model is $d_T=0.6=4\cdotp\sigma_T$.
After pre-filtering of $\DZEst(x_{n}^{\Delta}) > \Delta_{\textit{Thresh}}$, we had $N_{\Delta}^{\textit{eff}}=251$, sample mean of $\Estim{\mathbb{E}}[\DZEst]=8.31\cdotp10^{-2}$, and sample variance $\Estim{\sigma}[\DZEst]=0.67\cdotp10^{-2}$.
Minimum and maximum values are $\min \DZEst=7.06\cdotp10^{-2}$, $\max \DZEst=12.08\cdotp10^{-2}$.

\subsection{Solver}

We now describe details regarding the implementation of the PFT-DPW solver \cite{Sunberg18icaps}.

We implement the particle filter belief with 2 key points to note. The first is that we use resampling for stability of the particle filter, as when the particle filter algorithm was running without resampling we ran into particle depletion issues in a few scenarios. The second is that during planning with a particle belief, since our scenario may end prematurely, we often get a situation where only some particles terminate and others do not. To solve this problem, we note that in general, we may describe the reward of a belief with the law of total expectation:
\begin{gather}
    Q_t^\pi(b_t, a) = r_t(b_t, a) + \gamma \ExptFlat{z_{t+1}}{}{V_{t+1}^\pi(b_{t+1})} \\
    r_t(b_t, a) \\
    + \gamma \ExptFlat{z_{t+1}}{}{V_{t+1}^\pi(b_{t+1})\mid T(b_{t+1})} \probd(T(b_{t+1})) \\
    + \gamma \ExptFlat{z_{t+1}}{}{V_{t+1}^\pi(b_{t+1})\mid \neg T(b_{t+1})} \probd(\neg T(b_{t+1}))
\end{gather}
where $T(b_{t+1})$ is a random variable indicating that a belief has terminated, whether because of states being in terminal states (due to entering goal or collision states), or due to reaching the time limit. The future value for all terminated states is $0$, therefore the action value is equal to:
\begin{gather}
    Q_t^\pi(b_t, a) = r_t(b_t, a) \\
    + \gamma \ExptFlat{z_{t+1}}{}{V_{t+1}^\pi(b_{t+1})\mid \neg T(b_{t+1})} \probd(\neg T(b_{t+1}))
\end{gather}
We implemented $\probd(\neg T(b_{t+1}))$ as summing the total particle weights that are in goal or collision states after sampling from the transition model.
To implement $\ExptFlat{z_{t+1}}{}{V_{t+1}^\pi(b_{t+1})\mid \neg T(b_{t+1})}$, we added a parameter controlling the normalized sum of the weights of a particle belief, which we multiply after each transition by $\probd(\neg T(b_{t+1}))$.
This conditions all future beliefs of the same branch by the same factor, and trickles down the belief tree recursively.

The rollout policy is based on maximum likelihood transitions and observations.
It steers the empirical mean of the belief towards the goal by executing each time step the action that has maximal inner product with the goal's center.

We implement in PFT-DPW a secondary reward that is stored in all posterior belief nodes.
We modified the posterior nodes in the planner's belief tree to include $\PhiImSaEst_{\PBMDP,t}$ values in addition to $\Estim{Q}$. 
In addition, we hooked on the generative model of the environment to additionally compute $m_i$ in the simplified planning, such that $(\PB{i+1},r_{i+1},m_i)\sim G(\PB{i},a_i)$.

The parameters of PFT-DPW were taken to be the following:
\begin{table}[H]
    \normalsize
    \centering
    \begin{tabularx}{0.95\columnwidth}
        {| >{\centering\arraybackslash}X 
        | >{\centering\arraybackslash}X 
        | >{\centering\arraybackslash}X | }
        \hline
        Parameter & Notation (PFT-DPW) & Value \\
        \hline
        No. of simulations & $n$ & 500 \\
        \hline
        Exploration bonus & $c$ & 50 \\
        \hline
        Action prog. widening mult. const. & $k_a$ & 1.1 \\
        \hline
        Action prog. widening exp. const. & $\alpha_a$ & 0.24 \\
        \hline
        Obs. prog. widening mult. const. & $k_o$ & 1.1 \\
        \hline
        Obs. prog. widening exp. const. & $\alpha_o$ & 0.19 \\
        \hline
    \end{tabularx}
\end{table}
Additionally, the number of particles in the particle filter was $C=250$.
} 
{}

\end{document}


\maketitle

\appendix

\makeatletter
\newcommand{\labelrpt}[1]{
    \@ifundefined{r@#1}{\label{#1}}{}
}
\makeatother

\section{Proofs}

\subsection{Lemma \ref{lem:StateExptEquivRpt}}

\begin{lemmarpt}
    \label{lem:StateExptEquivRpt}
    The expected belief-dependent reward w.r.t. histories, is equivalent to the expected state-dependent reward w.r.t. the joint distribution of states and observations.
    \begin{gather}
        \ExptFlat{t+1:i}{[{p_Z}\slash{q_Z}]}{r_i(b_{i},\Policy{i})}=\ExptFlat{t+1:i}{p_T,[{p_Z}\slash{q_Z}]}{r_i(x_{i},\Policy{i})}. 
    \label{eq:state_action_reward_rpt}
    \end{gather}
\end{lemmarpt}

\begin{proof}
Without loss of generality we prove for $p_Z$.

From the definition of the expectation, we write it as the following explicit integral:
\begin{gather}
    \ExptFlat{t+1:i}{p_Z}{r_i(b_{i},\Policy{i})} \\
    =\intop_{z_{t+1:i}}p_Z\left(z_{t+1}\mid H_{t+1}^{-}\right)\cdots p_Z\left(z_{i}\mid H_{i}^{-}\right) \nonumber \\ 
    r_i\left(b_{i},\pi_{i}(H_{i})\right) \dif z_{t+1:i}
\end{gather}
From the assumption of a state reward and Fubini's theorem, we replace the belief reward with the expected state reward:
\begin{gather}
    \ExptFlat{t+1:i}{p_Z}{r_i(b_{i},\Policy{i})} \nonumber \\
    =\intop_{z_{t+1:i}}\intop_{x_{i}}p_Z\left(z_{t+1}\mid H_{t+1}^{-}\right)\cdots p_Z\left(z_{i}\mid H_{i}^{-}\right) \nonumber \\
    b_i\left(x_{i}\right)r_i\left(x_{i},\pi_{i}(H_{i})\right)\dif x_{i}\dif z_{t+1:i}
\end{gather}
We apply the following steps of Bayes' rule, and marginalization and chain rule repetitively from time $i$ until time $t$.
\begin{enumerate}
    \item We apply Bayes' rule to the belief $b_i(x_i)$. By definition, $b_i(x_i) \bydef \probd (x_{i}\mid H_{i-1}, \Policy{i-1}(H_{i-1}), z_{i})$, and therefore $b_i(x_i)=\frac{p_Z\left(z_{i}\mid x_{i}\right)}{p_Z\left(z_{i}\mid H_{i}^{-}\right)}b_i^{-}\left(x_{i}\right)$.
    \begin{gather}
        \ExptFlat{t+1:i}{p_Z}{r_i(b_{i},\Policy{i})} \nonumber \\
        =\intop_{z_{t+1:i}}\intop_{x_{i}}p_Z\left(z_{t+1}\mid H_{t+1}^{-}\right)\cdots \cancel{p_Z\left(z_{i}\mid H_{i}^{-}\right)} \nonumber \\
        \frac{p_Z\left(z_{i}\mid x_{i}\right)b_i^{-}\left(x_{i}\right)}{\cancel{p_Z\left(z_{i}\mid H_{i}^{-}\right)}}r_i\left(x_{i},\pi_{i}(H_{i})\right)\dif x_{i}\dif z_{t+1:i}
    \end{gather}
    \item We marginalize over the state $x_{i-1}$ and then apply the chain rule to the propagated belief $b_i^{-}(x_i)$. By definition the propagated belief satisfies $b_i^{-}(x_i)\bydef \probd (x_i \mid H_{i-1}, \Policy{i-1}(H_{i-1}))$, and therefore it holds that $b_i^{-}(x_i)=\int_{x_{i-1}} b_{i-1}\left(x_{i-1}\right)p_T\left(x_{i}\mid x_{i-1},\pi_{i-1}\left(H_{i-1}\right)\right)$.
    \begin{gather}
        \ExptFlat{t+1:i}{p_Z}{r_i(b_{i},\Policy{i})} \nonumber \\
        =\intop_{z_{t+1:i}}\intop_{x_{i-1:i}}p_Z\left(z_{t+1}\mid H_{t+1}^{-}\right)\cdots p_Z\left(z_{i-1}\mid H_{i-1}^{-}\right) \nonumber \\
        b_{i-1}\left(x_{i-1}\right)p_T\left(x_{i}\mid x_{i-1},\pi_{i-1}\left(H_{i-1}\right)\right)p_Z\left(z_{i}\mid x_{i}\right) \nonumber \\
        r_i\left(x_{i},\pi_{i}(H_{i})\right)\dif x_{i-1:i}\dif z_{t+1:i}
    \end{gather}
\end{enumerate}
By repeating steps 1 and 2 until marginalizing over the state $x_t$, we conclude that:
\begin{gather}
    \ExptFlat{t+1:i}{p_Z}{r_i(b_{i},\Policy{i})} \nonumber \\
    =\intop_{z_{t+1:i}}\intop_{x_{t:i}}b_t\left(x_{t}\right)\prod_{j=t+1}^{i}p_T\left(x_{j}\mid x_{j-1},\pi_{j-1}\left(H_{j-1}\right)\right) \nonumber \\
    \cdot\prod_{k=t+1}^{i}p_Z\left(z_{k}\mid x_{k}\right)\cdot r_i\left(x_{i},\pi_{i}(H_{i})\right)\dif x_{t:i}\dif z_{t+1:i} \nonumber \\
    =\ExptFlat{t+1:i}{p_T,p_Z}{r_i(x_{i},\Policy{i})}
\end{gather}
\end{proof}

\subsection{Theorem \ref{thm:AltModelThBoundRpt}}

\begin{theoremrpt}
    \label{thm:AltModelThBoundRpt}
    Assume current belief is $b_{t}$ and a given policy is $\pi$.
    Denote with $b_{i}^{p_Z}$, $b_{i}^{q_Z}$ the future belief at time step $i$ updated with either $p_Z$ or $q_Z$, respectively. Then it holds that
    \begin{gather}
        \textstyle \absvalflat{\ExptFlatOrigHistory{i}{r_{i}(b_{i}^{p_Z},\Policy{i})}-\ExptFlatSimpHistory{i}{r_{i}(b_{i}^{q_Z},\Policy{i})}} \nonumber \\
        \textstyle \leq \Rmax{i}\sum_{l=t+1}^{i}\ExptFlatSimpState{l^-}{\DZ(x_{l})},
        \label{eq:state_bound_reward_rpt}
    \end{gather}
    i.e. the difference between the expected reward at future time $i$ for the original and simplified POMDP is bounded by the maximum reward, times the sum of the expected state-dependent TV-distances between the observation models.
\end{theoremrpt}

\begin{proof}
We note that the two expectations share integration domain. By the linearity of the integral, we can combine integrands that are equal across the domains, and use the distributive property of the integral. We then apply the integral triangle inequality to the difference term of the two different observation models.
\begin{gather}
    \absvalflat{\ExptFlatOrigHistory{i}{r_{i}(b_{i}^{p_Z},\Policy{i})}-\ExptFlatSimpHistory{i}{r_{i}(b_{i}^{q_Z},\Policy{i})}}\\
    =\absvalflat{\ExptFlatOrigState{i}{r_{i}(x_{i},\Policy{i})}-\ExptFlatSimpState{i}{r_{i}(x_{i},\Policy{i})}}\\
    =\left|\intop_{z_{t+1:i}}\intop_{x_{t:i}} b_{t}(x_{t})\prod_{j=t+1}^{i}p_{T}\left(x_{j}\mid x_{j-1},\pi_{j-1}\left(H_{j-1}\right)\right) \nonumber \right. \\
    \left(\prod_{j=t+1}^{i}p_{Z}\left(z_{j}\mid x_{j}\right)-\prod_{j=t+1}^{i}q_{Z}\left(z_{j}\mid x_{j}\right)\right) \nonumber \\
    \cdot r(x_{i},\pi_i(H_{i}))\dif x_{t:i}\dif z_{t+1:i}\Bigg|\\
    \leq\intop_{z_{t+1:i}}\intop_{x_{t:i}}b_{t}(x_{t}) \prod_{j=t+1}^{i}p_{T}\left(x_{j}\mid x_{j-1},\pi_{j-1}\left(H_{j-1}\right)\right) \nonumber \\
    \left|\prod_{j=t+1}^{i}p_{Z}\left(z_{j}\mid x_{j}\right)-\prod_{j=t+1}^{i}q_{Z}\left(z_{j}\mid x_{j}\right)\right| \nonumber \\ 
    \cdot\left|r(x_{i},\pi_i(H_{i}))\right|\dif x_{t:i}\dif z_{t+1:i}\
\end{gather}
We utilize the assumption of a bounded state reward, of the form $\left|r(x_{i},\pi_i(H_{i}))\right|\leq \Rmax{i}$.
\begin{gather}
    \leq \Rmax{i} \cdot\intop_{z_{t+1:i}}\intop_{x_{t:i}}b_{t}(x_{t}) \nonumber \\
    \prod_{j=t+1}^{i}p_T\left(x_{j}\mid x_{j-1},\pi_{j-1}\left(H_{j-1}\right)\right) \nonumber \\
    \cdot\left|\prod_{j=t+1}^{i}p_{Z}\left(z_{j}\mid x_{j}\right)-\prod_{j=t+1}^{i}q_{Z}\left(z_{j}\mid x_{j}\right)\right|\dif x_{t:i}\dif z_{t+1:i}
\end{gather}
Next, we add and subtract a mixed term of $i-t-1$ simplified observation factors $q_Z$ and a single original observation factor $p_Z$. This factor is carefully chosen to pick together common factors such that we'll be left with our desired expectations - computed over the simplified observation model.
\begin{gather}
    =\Rmax{i}\cdot\intop_{z_{t+1:i}}\intop_{x_{t:i}}b_{t}(x_{t}) \nonumber \\
    \prod_{j=t+1}^{i}p_{T}\left(x_{j}\mid x_{j-1},\pi_{j-1}\left(H_{j-1}\right)\right)\\ 
    \cdot \left|\prod_{j=t+1}^{i}p_{Z}\left(z_{j}\mid x_{j}\right)-\prod_{j=t+1}^{i-1}q_{Z}\left(z_{j}\mid x_{j}\right)p_{Z}\left(z_{i}\mid x_{i}\right) \right. \nonumber \\
    \left. +\prod_{j=t+1}^{i-1}q_{Z}\left(z_{j}\mid x_{j}\right)p_{Z}\left(z_{i}\mid x_{i}\right)-\prod_{j=t+1}^{i}q_{Z}\left(z_{j}\mid x_{j}\right)\right| \nonumber \\
    \dif x_{t:i}\dif z_{t+1:i}\\
    =\Rmax{i}\cdot\intop_{z_{t+1:i}}\intop_{x_{t:i}}b_{t}(x_{t}) \nonumber \\
    \prod_{j=t+1}^{i}p_{T}\left(x_{j}\mid x_{j-1},\pi_{j-1}\left(H_{j-1}\right)\right) \\
    \cdot \left|\prod_{j=t+1}^{i-1}q_{Z}\left(z_{j}\mid x_{j}\right)\left(p_{Z}\left(z_{i}\mid x_{i}\right)-q_{Z}\left(z_{i}\mid x_{i}\right)\right)+ \right. \nonumber \\
    \left. p_{Z}\left(z_{i}\mid x_{i}\right)\left(\prod_{j=t+1}^{i-1}p_{Z}\left(z_{j}\mid x_{j}\right)-\prod_{j=t+1}^{i-1}q_{Z}\left(z_{j}\mid x_{j}\right)\right)\right| \nonumber \\
    \dif x_{t:i}\dif z_{t+1:i}
\end{gather}
We once more use the triangle inequality and the linearity of integral to separate the obtained integral into two. We denote these terms as $(1)$ and $(2)$ and then handle each separately.
\begin{gather}
    \leq \Rmax{i} \nonumber \\
    \cdotp (1)\bydef \left\{
        \begin{aligned}
        \intop_{z_{t+1:i}}\intop_{x_{t:i}}b_{t}(x_{t})\prod_{j=t+1}^{i}p_{T}\left(x_{j}\mid x_{j-1},\pi_{j-1}\left(H_{j-1}\right)\right) \\
        \cdot\prod_{j=t+1}^{i-1}q_{Z}\left(z_{j}\mid x_{j}\right)\left|p_{Z}\left(z_{i}\mid x_{i}\right)-q_{Z}\left(z_{i}\mid x_{i}\right)\right| \\
        \dif x_{t:i}\dif z_{t+1:i}
        \end{aligned}
        \right. \\
    +\Rmax{i} \nonumber \\
    \cdot (2) \bydef \left\{
        \begin{aligned}
        \intop_{z_{t+1:i}}\intop_{x_{t:i}}b_{t}(x_{t})\prod_{j=t+1}^{i}p_{T}\left(x_{j}\mid x_{j-1},\pi_{j-1}\left(H_{j-1}\right)\right) \\
        \cdot p_{Z}\left(z_{i}\mid x_{i}\right)\left|\prod_{j=t+1}^{i-1}p_{Z}\left(z_{j}\mid x_{j}\right)-\prod_{j=t+1}^{i-1}q_{Z}\left(z_{j}\mid x_{j}\right)\right| \\
        \dif x_{t:i}\dif z_{t+1:i}
        \end{aligned}
        \right.
\end{gather}
The first term turns out to become the expected TV-distance w.r.t. trajectories taken with the simplified observation model.
\begin{gather}
    (1)=\intop_{z_{t+1:i-1}}\intop_{x_{t:i}}b_{t}(x_{t})\prod_{j=t+1}^{i}p_{T}\left(x_{j}\mid x_{j-1},\pi_{j-1}\left(H_{j-1}\right)\right) \nonumber \\
    \cdot \prod_{j=t+1}^{i-1}p_{Z}\left(z_{j}\mid x_{j}\right)\left(\intop_{z_{i}}\left|p_{Z}\left(z_{i}\mid x_{i}\right)-q_{Z}\left(z_{i}\mid x_{i}\right)\right|\dif z_{i}\right)\nonumber \\
    \dif x_{t:i}\dif z_{t+1:i} \nonumber \\
    =\Rmax{i}\cdot \ExptFlatSimpState{i^-}{\DZ(x_{i})}
\end{gather}
The second term is the recursive term, in which we can integrate out first the last observation factor, and then the last transition factor.
\begin{gather}
    (2)=\intop_{z_{t+1:i-1}}\intop_{x_{t:i}}b_{t}(x_{t})\prod_{j=t+1}^{i}p_{T}\left(x_{j}\mid x_{j-1},\pi_{j-1}\left(H_{j-1}\right)\right) \nonumber \\
    \cdot \left|\prod_{j=t+1}^{i-1}p_{Z}\left(z_{j}\mid x_{j}\right)-\prod_{j=t+1}^{i-1}q_{Z}\left(z_{j}\mid x_{j}\right)\right| \nonumber \\
    \cancel{\left(\intop_{z_{i}}p_{Z}\left(z_{i}\mid x_{i}\right)\dif z_{i-1}\right)}\dif x_{t:i}\dif z_{t+1:i}\\
    =\intop_{z_{t+1:i-1}}\intop_{x_{t:i}}b_{t}(x_{t})\prod_{j=t+1}^{i-1}p_{T}\left(x_{j}\mid x_{j-1},\pi_{j-1}\left(H_{j-1}\right)\right) \nonumber \\
    \cdot \left|\prod_{j=t+1}^{i-1}p_{Z}\left(z_{j}\mid x_{j}\right)-\prod_{j=t+1}^{i-1}q_{Z}\left(z_{j}\mid x_{j}\right)\right| \nonumber \\
    \cancel{\left(\int_{x_{i}}p\left(x_{i}\mid x_{i-1},\pi_{i-1}\left(z_{i-1}\right)\right)\dif x_{i}\right)}\dif x_{t:i-1}\dif z_{t+1:i}\\
    =\intop_{z_{t+1:i}}\intop_{x_{t:i-1}}b_{t}(x_{t})\prod_{j=t+1}^{i-1}p\left(x_{j}\mid x_{j-1},\pi_{j-1}\left(H_{j-1}\right)\right) \nonumber \\
    \cdot\left|\prod_{j=t+1}^{i-1}p_{Z}\left(z_{j}\mid x_{j}\right)-\prod_{j=t+1}^{i-1}q_{Z}\left(z_{j}\mid x_{j}\right)\right|\dif x_{t:i-1}\dif z_{t+1:i-1}
\end{gather}
Notice the recursive relationship:
\begin{gather}
    \intop_{z_{t+1:i}}\intop_{x_{t:i}}b_{t}(x_{t})\prod_{j=t+1}^{i}p\left(x_{j}\mid x_{j-1},\pi_{j-1}\left(H_{j-1}\right)\right) \nonumber \\
    \cdot\left|\prod_{j=t+1}^{i}p_{Z}\left(z_{j}\mid x_{j}\right)-\prod_{j=t+1}^{i}q_{Z}\left(z_{j}\mid x_{j}\right)\right|\dif x_{t:i}\dif z_{t+1:i}\\
    \leq\ExptFlatSimpState{i^-}{\DZ(x_{i})} \nonumber \\
    +\intop_{z_{t+1:i}}\intop_{x_{t:i-1}}b_{t}(x_{t})\prod_{j=t+1}^{i-1}p\left(x_{j}\mid x_{j-1},\pi_{j-1}\left(H_{j-1}\right)\right) \nonumber \\
    \cdot\left|\prod_{j=t+1}^{i-1}p_{Z}\left(z_{j}\mid x_{j}\right)-\prod_{j=t+1}^{i-1}q_{Z}\left(z_{j}\mid x_{j}\right)\right|\dif x_{t:i-1}\dif z_{t+1:i-1}
\end{gather}

In the base case of $i=t+1$ we get directly
\begin{gather}
\intop_{z_{t+1:i}}\intop_{x_{t:i}}b_{t}(x_{t})\prod_{j=t+1}^{i}p\left(x_{j}\mid x_{j-1},\pi_{j-1}\left(H_{j-1}\right)\right)\nonumber \\
\cdot\left|\prod_{j=t+1}^{i}p_{Z}\left(z_{j}\mid x_{j}\right)-\prod_{j=t+1}^{i}q_{Z}\left(z_{j}\mid x_{j}\right)\right|\dif x_{t:i}\dif z_{t+1:i}\\
=\ExptFlatSimpState{i^-}{\DZ(x_{i})} =\ExptFlat{b_{t}}{}{\ExptFlat{t+1}{p_T}{\DZ(x_{t+1})}}
\end{gather}
Hence we can see that the sum of the recursion turns out to be
\begin{gather}
    \absvalflat{\ExptFlatOrigHistory{i}{r_{i}(b_{i}^{p_Z},\Policy{i})}-\ExptFlatSimpHistory{i}{r_{i}(b_{i}^{q_Z},\Policy{i})}} \\
    \leq \Rmax{i}\sum_{l=t+1}^{i}\ExptFlatSimpState{l^-}{\DZ(x_{l})}
\end{gather}
\end{proof}

\subsection{Corollary \ref{crl:TheoreticalBoundRpt}}

\begin{corollaryrpt}
    \label{crl:TheoreticalBoundRpt}
    The difference between the original and simplified value functions can be bounded by the following sum of scaled expected TV-distance terms:
    \begin{gather}
        \textstyle \absvalflat{V_{t}^{\pi,p_Z}(b_{t})-V_{t}^{\pi,q_Z}(b_{t})}\nonumber\\
        \textstyle \leq\sum_{i=t+1}^{L}\ExptFlatSimpState{i^-}{\DZ(x_{i})}\sum_{l=i}^{L}\gamma^{l-t}\Rmax{l} \\
        \textstyle =\sum_{i=t+1}^{L}\Vmax{i}\cdotp\ExptFlatSimpState{i^-}{\DZ(x_{i})}. 
        \label{eq:bound_value_rpt}
    \end{gather}
\end{corollaryrpt}

\begin{proof}
We substitute for the definition of $V_{t}^\pi$, and we can see that the immediate reward at time $i$ cancels out.
\begin{gather}
    \absval{V_{t}^{\pi,p_Z}(b_{t})-V_{t}^{\pi,q_Z}(b_{t})} \\
    = \left| \ExptFlatOrigHistory{L}{\sum_{i=t}^{L}\gamma^{i-t}r_{i}(b_{i},\Policy{i})} \right. \nonumber \\
    \left. - \ExptFlatSimpHistory{L}{\sum_{i=t}^{L}\gamma^{i-t}r_{i}(b_{i},\Policy{i})} \right| \\
    = \left| \cancel{r_{t}(b_{t},\Policy{t})} + \ExptFlatOrigHistory{L}{\sum_{i=t+1}^{L}\gamma^{i-t}r_{i}(b_{i},\Policy{i})} \right. \nonumber \\
    \left. - \cancel{r_{t}(b_{t},\Policy{t})} - \ExptFlatSimpHistory{L}{\sum_{i=t+1}^{L}\gamma^{i-t}r_{i}(b_{i},\Policy{i})} \right|
\end{gather}
We apply Lemma \ref{lem:StateExptEquivRpt} to substitute expectation over belief rewards to expectation over state rewards, and we rearrange terms so that we get summation over difference terms of expected state rewards.
\begin{gather}
    = \left| \sum_{i=t+1}^{L}\gamma^{i-t}\ExptFlatOrigState{i}{r_{i}(x_{i},\Policy{i})} \right. \nonumber \\ 
    \left. - \sum_{i=t+1}^{L}\gamma^{i-t}\ExptFlatSimpState{i}{r_{i}(x_{i},\Policy{i})} \right| \\
    = \left| \sum_{i=t+1}^{L}\gamma^{i-t}\left(\ExptFlatOrigState{i}{r_{i}(x_{i},\Policy{i})} - \ExptFlatSimpState{i}{r_{i}(x_{i},\Policy{i})}\right) \right|
\end{gather}
We now use the triangle inequality to separate into $L-t$ summands on which we can apply Theorem \ref{thm:AltModelThBoundRpt}:
\begin{gather}
    \leq \sum_{i=t+1}^{L}\gamma^{i-t} \left| \ExptFlatOrigState{i}{r_{i}(x_{i},\Policy{i})} - \ExptFlatSimpState{i}{r_{i}(x_{i},\Policy{i})} \right| \\
    \stackrel{\textit{Thm. \ref{thm:AltModelThBoundRpt}}}{\leq} \sum_{i=t+1}^{L}\gamma^{i-t} \Rmax{i}\sum_{l=t+1}^{i}\ExptFlatSimpState{l^-}{\DZ(x_{l})}
\end{gather}
We switch the order of summation such that for every expected-$\DZ$ term we sum all of its $\Rmax{}$ terms:
\begin{gather}
    = \gamma \Rmax{t+1} \left( \ExptFlatSimpState{{t+1}^-}{\DZ(x_{t+1})} \right) \nonumber \\
    + \gamma^2 \Rmax{t+2} \left( \ExptFlatSimpState{{t+1}^-}{\DZ(x_{t+1})} 
    \right. \nonumber \\
    \left. 
    + \ExptFlatSimpState{{t+2}^-}{\DZ(x_{t+2})} \right) \nonumber \\
    + \dots \nonumber \\
    \begin{multlined}
        + \gamma^{L-t} \Rmax{L} \left( \ExptFlatSimpState{{t+1}^-}{\DZ(x_{t+1})} \right. \\
        \left. + \dots + \ExptFlatSimpState{{t+L}^-}{\DZ(x_{t+L})} \right)
    \end{multlined} \\
    \begin{multlined}
    = \ExptFlatSimpState{{t+1}^-}{\DZ(x_{t+1})} \left( \gamma \Rmax{t+1} + \dots + \gamma^{L-t} \Rmax{L} \right) \\
    + \ExptFlatSimpState{{t+2}^-}{\DZ(x_{t+2})} \left( \gamma^2 \Rmax{t+2} + \dots + \gamma^{L-t} \Rmax{L} \right) \\
    + \dots \\
    + \ExptFlatSimpState{{t+L}^-}{\DZ(x_{t+L})} \gamma^{L-t} \Rmax{L}
    \end{multlined} \\
    =\sum_{i=t+1}^{L}\ExptFlatSimpState{i^-}{\DZ(x_{i})}\sum_{l=i}^{L}\gamma^{l-t}\Rmax{l}
\end{gather}
By identifying the sum of discounted maximum rewards as the maximum value, we arrive at the required:
\begin{equation}
    =\sum_{i=t+1}^{L}\Vmax{i}\cdotp\ExptFlatSimpState{i^-}{\DZ(x_{i})}.
\end{equation}

\end{proof}

\subsection{Theorem \ref{thm:LocalActionBoundRpt}}

\begin{theoremrpt}
    \label{thm:LocalActionBoundRpt}
    Under the conditions of Theorem \ref{thm:AltModelThBound}, the difference between the original and simplified value function can be bounded by
    \begin{gather}
        \textstyle \absvalflat{V_{t}^{\pi,p_Z}(b_{t})-V_{t}^{\pi,q_Z}(b_{t})}\leq M_{t}^{\pi}(b_{t}). 
        \label{eq:MBoundV_rpt}
    \end{gather}
    In addition, the respective difference in action value function can be bounded by the action cumulative bound,
    \begin{gather}
        \textstyle \absvalflat{Q_{t}^{\pi,p_Z}(b_{t},a)-Q_{t}^{\pi,q_Z}(b_{t},a)}\leq \Phi_{t}^{\pi}(b_{t},a). 
        \label{eq:PhiBoundQ_rpt}
    \end{gather}
\end{theoremrpt}

\begin{proof}
First we prove for $\eqref{eq:MBoundV_rpt}$:
\begin{gather}
    \textstyle M_{t}^{\Policy{}}(b_t) = \ExptFlatSimpHistory{L-1}{\sum_{i=t}^{L-1}m_{i}(b_{i},\Policy{i})} \\
    =\textstyle \sum_{i=t+1}^{L} \ExptFlatSimpHistory{i-1}{m_{i-1}(b_{i-1},\Policy{i-1})} \\
    \textstyle \overset{\textit{Lem. \ref{lem:StateExptEquivRpt}}}{=}\sum_{i=t+1}^{L} \ExptFlatSimpState{i-1}{m_{i-1}(x_{i-1},\Policy{i-1})} \\
    \textstyle =\sum_{i=t+1}^{L}\Vmax{i}\cdot\ExptFlatSimpState{i-1}{\ExptFlat{i}{p_T}{\DZ(x_{i})}}  \\
    \textstyle =\sum_{i=t+1}^{L} \Vmax{i}\cdot\ExptFlatSimpState{i^{-}}{\DZ(x_{i})}
\end{gather}
And the proof is clear by seeing that this is the exact term that bounds $\eqref{eq:bound_value_rpt}$.
For proving $\eqref{eq:PhiBoundQ_rpt}$, note that for every action $a$ we can define a policy $\Policy{}^{a}$ by performing $a$ and then continuing with $\Policy{}$.
For every policy $\Policy{}$ it holds that $Q_{t}^{\Policy{}}(b_t, \Policy{t}) = V_{t}^{\Policy{}}(b_t)$, and specifically for $\Policy{}^{a}$.
Therefore:
\begin{gather}
    \textstyle \absval{Q_{t}^{\pi,p_Z}(b_{t},a)-Q_{t}^{\pi,q_Z}(b_{t},a)}= \\ 
    \textstyle \absval{V_{t}^{\Policy{}^{a},p_Z}(b_{t})-V_{t}^{\Policy{}^{a},q_Z}(b_{t})} \\
    \leq M_{t}^{\Policy{}^{a}}(b_t) = \Phi_{t}^{\pi}(b_{t},a).
\end{gather}
\end{proof}

\subsection{Theorem \ref{thm:ProbConvergeRpt}}

The proof of Theorem \ref{thm:ProbConvergeRpt} is an adaptation of Lemma 2 from \cite{Lim22arxiv}, with relatively small modifications.

We first rely on the Particle Likelihood SN Estimator Convergence lemma from \cite{Lim22arxiv} (originally Lemma 1):

\begin{lemmarpt}[Particle Likelihood SN Estimator Convergence] 
    \label{lem:SNBounds}
     Suppose a function f is bounded by a finite constant $\left\Vert f\right\Vert _{\infty}\leq f_{\max}$, and a particle belief state $\PB{t}=\{x_{t}^{i}, w_{t}^{i}\}_{i=1}^{C}$ at depth $t$ that represents $\PB{t}$ with particle likelihood weighting that is recursively updated as $w_{t}^{i}=w_{t-1}^{i}\cdot p_{Z}\left(z\mid x_{t+1}\right)$ for an observation sequence $\{z_{n}^{i}\}_{n=1}^{t}$.
     Then, for all $t=0,\dots,L$, the following weighted average is the SN estimator of $f$ under the belief $b_{t}$ corresponding to observation sequence $\{ z_{n}\} _{n=1}^{t}:\tilde{\mu}_{\PB{t}}[f]=\frac{\sum_{i=1}^{C}w_{t}^{i}f(x_{t}^{i})}{\sum_{i=1}^{C}w_{t}^{i}}$ and the following concentration bound holds with probability at least $1-3\exp\left(-C\cdot k_{\max}^{2}\left(\lambda,C\right)\right)$
     \begin{gather}
        \left|\mathbb{E}_{s\sim b_{t}}\left[f\left(x\right)\right]-\tilde{\mu}_{\bar{b}_{t}}\left[f\right]\right|\leq\lambda \\
        k_{\max}\left(\lambda,C\right)\triangleq\frac{\lambda}{f_{\max}d_{\infty}^{\max}}-\frac{1}{\sqrt{C}} \\
        d_{\infty}\left(\mathcal{P}^{t}||\mathcal{Q}^{t}\right)=\esssup_{x\sim\mathcal{Q}^{t}} w_{\mathcal{P}^{t}\slash\mathcal{Q}^{t}}\left(x\right)\leq d_{\infty}^{\max}
     \end{gather}
     where $\mathcal{P}^{t}$ is the target distribution and $\mathcal{Q}^{t}$ is the distribution of the particle filter.
\end{lemmarpt}

For the next lemma, we define a new theoretical algorithm Sparse Sampling-$\omega$-$\pi$, shown in Algorithm \ref{alg:SparseSamplingOmegaPi}. The algorithm estimates the value function of a policy $\pi$ according to Sparse Sampling-$\omega$ \cite{Lim23jair}.

\begin{algorithm}[tb]
    \caption{Sparse Sampling-$\omega$-$\pi$}
    \label{alg:SparseSamplingOmegaPi}
    \textbf{Global Variables}: $\gamma,C,L,\pi$\\
    \textbf{Algorithm:} $\text{Estimate}V^{\pi}(\bar{b},t)$.\\
    \textbf{Input}: Particle belief set $\bar{b}=\left\{ \left(x_{i},w_{i}\right)\right\}$, depth $t$, policy $\pi$. \\
    \textbf{Output}: A scalar $\hat{V}_{\omega,t}^{\pi}\left(\bar{b}_{t}\right)$ that is an estimate of $V_{t}^{\pi}\left(\bar{b}_{t}\right)$.
    \begin{algorithmic}[1]
    \IF {$t\geq L$}
    \STATE \textbf{return} 0
    \ENDIF
    \begingroup
    \color{blue}
    \STATE $\hat{Q}_{t}^{\pi}(\bar{b},\pi(\bar{b}))\leftarrow\text{Estimate}Q^{\pi}(\bar{b},\pi(\bar{b}),t)$ \label{alg:BlueVariation}
    \endgroup
    \begingroup
    \color{red}
    \FORALL {$a\in\mathcal{A}$} \label{alg:RedVariationStart}
    \STATE $\hat{Q}_{t}^{\pi}(\bar{b},a)\leftarrow\text{Estimate}Q^{\pi}(\bar{b},a,t)$ \label{alg:RedVariationEnd}
    \ENDFOR
    \endgroup
    \STATE \textbf{return} $ \hat{V}_{t}^{\pi}(\bar{b})\leftarrow\hat{Q}_{t}^{\pi}(\bar{t},\pi(\bar{b}))$
    \end{algorithmic}
    \textbf{Algorithm:} $\text{Estimate}Q^{\pi}\left(\bar{b},a,t\right)$.\\
    \textbf{Input}: Particle belief set $\bar{b}=\left\{ \left(x_{i},w_{i}\right)\right\}$, action $a$, depth $t$, policy $\pi$. \\
    \textbf{Output}: A scalar $\hat{Q}_{\omega,t}^{\pi}\left(\bar{b},a\right)$ that is an estimate of $Q_{t}^{\pi}\left(b,a\right)$.
    \begin{algorithmic}[1]
    \FORALL {$i=1,\dots,C$}
    \STATE $\bar{b}_{i}^{\prime},\rho\leftarrow\text{GenPF}(\bar{b},a)$
    \STATE $\hat{V}_{t+1}^{\pi}(\bar{b}_{i}^{\prime})\leftarrow\text{Estimate}V^{\pi}(\bar{b}_{i}^{\prime},t+1)$
    \ENDFOR
    \STATE \textbf{return} $\hat{Q}_{t}^{\pi}(\bar{b},a)\leftarrow\rho+\frac{1}{C}\sum_{i=1}^{C}\gamma\cdot\hat{V}_{t+1}^{\pi}(\bar{b}_{i}^{\prime})$
    \end{algorithmic}
\end{algorithm}

The algorithm has two variations, indicated with the blue line in \ref{alg:BlueVariation}, or with the red lines in \ref{alg:RedVariationStart}-\ref{alg:RedVariationEnd}. The blue variation is for estimating the value of a single policy $\pi$, whereas the red variation expands the entire action space (if it is finite), to estimate the values of each possible policy via the $Q$-value.

\begin{lemmarpt}[Sparse Sampling-$\omega$-$\pi$ $Q$-Value Coupled Convergence] 
    \label{lem:SparseSamplingOmegaPiConvergence}
For a given policy $\pi$, for all $t=0,\dots,L$ and actions $a$, the following bounds hold with probability at least $1-5\left(4C\right)^{L+1}\left(\exp\left(-C\cdot\acute{k}^{2}\right)+\delta_r\left(\nu,N_r\right)\right)$:
\begin{gather}
    \left|Q_{\mathbf{P},t}^{\pi}\left(b_{t},a\right)-\hat{\tilde{Q}}_{\omega,t}^{\pi}\left(\bar{b}_{t},a\right)\right|\leq\alpha_{t}, \\
    \alpha_{t}=\lambda+\nu+\gamma\alpha_{t+1},\ \alpha_{L}=\lambda+\nu, \\
    \left|Q_{\mathbf{M_{P}},t}^{\pi}\left(\bar{b}_{t},a\right)-\hat{\tilde{Q}}_{\omega,t}^{\pi}\left(\bar{b}_{t},a\right)\right|\leq\beta_{t},\\
    \beta_{t}=\nu+\gamma\left(\lambda+\beta_{t+1}\right),\beta_{L}=\nu, \\
    k_{\max}\left(\lambda,C\right)=\frac{\lambda}{4V_{\max}d_{\infty}^{\max}}-\frac{1}{\sqrt{C}},\\
    \acute{k}=\min\left\{ k_{\max},\lambda\slash4\sqrt{2}V_{\max}\right\} 
\end{gather}

Under the assumption that the immediate reward estimate is probabilistically bounded such that $\probd(\left|r_{t}^{i}-\tilde{r}_{t}^{i}\right|\geq\nu)\leq\delta_r(\nu,N_r)$, for a number of samples parameter $N_r$.

If we require the bound to hold for all possible policies that can be extracted from a given belief tree simultaneously, then the probability becomes at least $1-5\left(4\left|A\right|C\right)^{L+1}\left(\exp\left(-C\cdot\acute{k}^{2}\right)+\delta_r\left(\nu,N_r\right)\right)$.
\end{lemmarpt}

\begin{proof}
We prove for observation model $p_Z$ without loss of generality.
\paragraph{POMDP Value Convergence}

We split the difference between the SN estimator and $Q_{\mathbf{P},t}^{\pi}$
into two terms, the reward estimation error $(A)$ and the next-step
value estimation error $(B)$:

\begin{gather}
\left|Q_{\mathbf{P},t}^{\pi}\left(b_{t},a\right)-\hat{\tilde{Q}}_{\omega,t}^{\pi}\left(\bar{b}_{t},a\right)\right|\\
\leq\underbrace{\left|\mathbb{E}_{\mathbf{P}}\left[R\left(x_{t},a\right)\mid b_{t}\right]-\frac{\sum_{i=1}^{C}w_{t}^{i}\tilde{r}_{t}^{i}}{\sum_{i=1}^{C}w_{t}^{i}}\right|}_{(A)}\\
+\gamma\underbrace{\left|\mathbb{E}_{\mathbf{P}}\left[V_{\mathbf{P},t+1}^{\pi}\left(b_{t}az\right)\mid b_{t}\right]-\frac{1}{C}\sum_{i=1}^{C}\hat{\tilde{V}}_{\omega,t+1}^{\pi}\left(\bar{b}_{t+1}^{\prime\left[I_{i}\right]}\right)\right|}_{(\mathrm{B})}
\end{gather}

Where $I_{i}$ is a RV sampled from the probability mass $p_{w,t}\left(I=i\right)=\left(w_{t}^{i}\slash\sum_{j}w_{t}^{j}\right)$, 
and the particle belief $\bar{b}_{t+1}^{\prime\left[I_{i}\right]}$
is updated with an observation generated from $x_{t}^{I_{i}}$.

To prove the base case $t=L$, note that only term $(A)$ is needed
to be bounded, since $t=L$ corresponds to the leaf node of Sparse
Sampling-$\omega$-$\pi$ and no further next step value estimation
is performed.

We split term $(A)$ into two terms:

\begin{gather}
\underbrace{\left|\mathbb{E}_{\mathbf{P}}\left[R\left(x_{t},a\right)\mid b_{t}\right]-\frac{\sum_{i=1}^{C}w_{t}^{i}\tilde{r}_{t}^{i}}{\sum_{i=1}^{C}w_{t}^{i}}\right|}_{(A)}\\
\leq\underbrace{\left|\mathbb{E}_{\mathbf{P}}\left[R\left(x_{t},a\right)\mid b_{t}\right]-\frac{\sum_{i=1}^{C}w_{t}^{i}r_{t}^{i}}{\sum_{i=1}^{C}w_{t}^{i}}\right|}_{(1)\text{Importance sampling error}}\\
+\underbrace{\left|\frac{\sum_{i=1}^{C}w_{t}^{i}r_{t}^{i}}{\sum_{i=1}^{C}w_{t}^{i}}-\frac{\sum_{i=1}^{C}w_{t}^{i}\tilde{r}_{t}^{i}}{\sum_{i=1}^{C}w_{t}^{i}}\right|}_{(2)\text{Reward approximation error}}
\end{gather}

Term $(1)$ is a particle likelihood weighted average of $R\left(\cdotp,a\right)$,
and we will use the SN concentration bounds from Lemma \ref{lem:SNBounds}.
We bound $R$ with $R_{\max}$ and augment $\lambda$ to $\frac{R_{\max}}{4V_{\max}}\lambda$,
in order to obtain the same uniform $t_{\max}$ factor with the following
terms. This also covers the base case since $\frac{R_{\max}}{4V_{\max}}\lambda\leq\lambda=\alpha_{L}$.
Hence the bound will hold with probability at least $1-3\exp\left(-C\cdot t_{\max}^{2}\left(\lambda,C\right)\right)$.

For term $(2)$, we use the triangle inequality to bound the apply
the assumption of a probabilistic bound on the state reward to a probabilistic
bound on the belief reward. Next we use the monotonicity of the weighted
mean, in the context that if all terms have an upper bound $\nu$,
then follows that the average itself is upper bounded by $\nu$. Finally
we use the inverse of the union bound (Boole's inequality) to lower
bound with assumed bound for each individual reward term. 
\begin{gather}
\probd\left(\left|\frac{\sum_{i=1}^{C}w_{t}^{i}r_{t}^{i}}{\sum_{i=1}^{C}w_{t}^{i}}-\frac{\sum_{i=1}^{C}w_{t}^{i}\tilde{r}_{t}^{i}}{\sum_{i=1}^{C}w_{t}^{i}}\right|\leq\nu\right)\\
\overset{\text{Triangle inequality}}{\geq}\\
\probd\left(\left(\sum_{i=1}^{C}w_{t}^{i}\right)^{-1}\sum_{i=1}^{C}w_{t}^{i}\left|\left(r_{t}^{i}-\tilde{r}_{t}^{i}\right)\right|\leq\nu\right)\\
\overset{\text{Weighted mean monotonicity}}{\geq}\probd\left(\bigcap_{i=1}^{C}\left|\left(r_{t}^{i}-\tilde{r}_{t}^{i}\right)\right|\leq\nu\right)\\
\overset{\text{Union bound}}{\geq}1-\sum_{i=1}^{C}\probd\left(\left|\left(r_{t}^{i}-\tilde{r}_{t}^{i}\right)\right|\geq\nu\right)\\
\overset{\text{Assumption}}{\geq}1-\sum_{i=1}^{C}\delta_r\left(\nu,N_r\right)=1-C\delta_r\left(\nu,N_r\right)
\end{gather}

Term $(B)$ is repeatedly separated into four terms:
\begin{gather}
\underbrace{\left|\mathbb{E}_{\mathbf{P}}\left[V_{t+1}^{\pi}\left(b_{t}az\right)\mid b_{t}\right]-\frac{1}{C}\sum_{i=1}^{C}\hat{\tilde{V}}_{\omega,t+1}^{\pi}\left(\bar{b}_{t+1}^{\prime\left[I_{i}\right]}\right)\right|}_{(B)}\\
\leq\underbrace{\left|\mathbb{E}_{\mathbf{P}}\left[V_{t+1}^{\pi}\left(b_{t}az\right)\mid b_{t}\right]-\frac{\sum_{i=1}^{C}w_{t}^{i}\boldsymbol{V}_{t+1}^{\pi}\left(b_{t},a\right)^{\left[i\right]}}{\sum_{i=1}^{C}w_{t}^{i}}\right|}_{(1)\text{ Importance sampling error}}\\
+\underbrace{\left|\frac{\sum_{i=1}^{C}w_{t}^{i}\boldsymbol{V}_{t+1}^{\pi}\left(b_{t},a\right)^{\left[i\right]}}{\sum_{i=1}^{C}w_{t}^{i}}-\frac{1}{C}\sum_{i=1}^{C}\boldsymbol{V}_{t+1}^{\pi}\left(b_{t},a\right)^{\left[I_{i}\right]}\right|}_{(2)\text{ MC weighted sum approximation error}}\\
+\underbrace{\left|\frac{1}{C}\sum_{i=1}^{C}\boldsymbol{V}_{t+1}^{\pi}\left(b_{t},a\right)^{\left[I_{i}\right]}-\frac{1}{C}\sum_{i=1}^{C}V_{t+1}^{\pi}\left(b_{t}az^{\left[I_{i}\right]}\right)\right|}_{(3)\text{ MC next-step integral approximation error}}\\
+\underbrace{\left|\frac{1}{C}\sum_{i=1}^{C}V_{t+1}^{\pi}\left(b_{t}az^{\left[I_{i}\right]}\right)-\frac{1}{C}\sum_{i=1}^{C}\hat{\tilde{V}}_{\omega,t+1}^{\pi}\left(\bar{b}_{t+1}^{\prime\left[I_{i}\right]}\right)\right|}_{(4)\text{Function estimation error}}\\
\leq\frac{1}{4}\lambda+\frac{1}{4}\lambda+\frac{1}{2}\lambda+\alpha_{t+1}
\end{gather}

Note the notations used:
\begin{gather}
Z_{t+1}\triangleq p_{Z}\left(z\mid x_{t+1}\right)\\
T_{t,t+1}\triangleq p_{T}\left(x_{t+1}\mid x_{t},a\right)\\
T_{t,t+1}^{\left[i\right]}\triangleq p_{T}\left(x_{t+1}\mid x_{t}^{i},a\right)
\end{gather}
For the rest of this proof, we define $V_{t}^{\pi}\left(b_{t}\right)$ to be the value function attained 
from time $t$ given initial belief $b_{t}$ and by following policy 
$\pi$. Additionally, we define the following:
\begin{gather}
\boldsymbol{V}_{t+1}^{\pi}\left(b_{t},a,x_{t}\right)\bydef \\
\intop_{\mathcal{X}}\intop_{\mathcal{Z}}V_{t+1}^{\pi}\left(b_{t}az\right)\left(Z_{t+1}\right)\left(T_{t,t+1}\right)\dif x_{t+1}\dif z\\
\boldsymbol{V}_{t+1}^{\pi}\left(b_{t},a\right)^{\left[i\right]}\triangleq\boldsymbol{V}_{t+1}^{\pi}\left(b_{t},a,x_{t,i}\right)=\\
\intop_{\mathcal{X}}\intop_{\mathcal{Z}}V_{t+1}^{\pi}\left(b_{t}az\right)Z_{t+1}T_{t,t+1}^{\left[i\right]}\dif x_{t+1}\dif z \\
\mathbb{E}_{\mathbf{P}}\left[V_{t+1}^{\pi}\left(b_{t}az\right)\mid b_{t}\right]\bydef\\
\intop_{\mathcal{X}}\intop_{\mathcal{X}}\intop_{\mathcal{Z}}V_{t+1}^{\pi}\left(b_{t}az\right)\left(Z_{t+1}\right)\left(T_{t,t+1}\right)p\left(x_{t}\mid b_{t}\right)\dif x_{t:t+1}\dif z\\
=\intop_{\mathcal{X}}\boldsymbol{V}_{t+1}^{\pi}\left(b_{t},a\right)p\left(x_{t}\mid b_{t}\right)\dif x_{t}\\
=\frac{\intop_{\mathcal{X}^{t+1}}\boldsymbol{V}_{t+1}^{\pi}\left(b_{t},a\right)\left(Z_{1:t}\right)\left(T_{1:t}\right)p\left(x_{0}\mid b_{0}\right)\dif x_{0:t}}{\intop_{\mathcal{X}^{t+1}}\left(Z_{1:t}\right)\left(T_{1:t}\right)p\left(x_{0}\mid b_{0}\right)\dif x_{0:t}}
\end{gather}
\paragraph{(1) Importance sampling error}

This term is the difference between the conditional expectation $\mathbb{E}_{\mathbf{P}}\left[V_{t+1}^{\pi}\left(b_{t}az\right)\mid b_{t}\right]$
and its SN estimator. We have $\left\Vert \boldsymbol{V}_{t+1}^{\pi}\right\Vert _{\infty}\leq V_{\max}$,
therefore we can apply the SN concentration inequality from Lemma \ref{lem:SNBounds} to
bound it by the augmented $\lambda\slash4$:
\begin{gather}
    \begin{split}
        \probd\left(\left| 
            \vphantom{\frac{\sum_{i=1}^{C}w_{t}^{i}\boldsymbol{V}_{t+1}^{\pi}\left(b_{t},a\right)^{\left[i\right]}}{\sum_{i=1}^{C}w_{t}^{i}}} 
        \mathbb{E}_{\mathbf{P}}\left[V_{T+1}^{\pi}\left(b_{t}az\right)\mid b_{t}\right] \right. \right. \\
        \left. \left. -\frac{\sum_{i=1}^{C}w_{t}^{i}\boldsymbol{V}_{t+1}^{\pi}\left(b_{t},a\right)^{\left[i\right]}}{\sum_{i=1}^{C}w_{t}^{i}}\right|\leq\frac{\lambda}{4}\right)
    \end{split} \\
    \geq1-3\exp\left(-C\cdot k_{\max}^{2}\left(\lambda,C\right)\right)
\end{gather}

\paragraph{(2) Monte Carlo weighted sum approximation error}

First, we assume that all variables $\left\{ s_{t}^{i},w_{t}^{i}\right\} ,b_{t},a$
are given, and only $I$ is random. Note that $\left\Vert \boldsymbol{V}_{t+1}^{\pi}\left(b_{t},a,\cdot\right)\right\Vert _{\infty}\leq V_{\max}$.

We will define the discrete probability mass defined by the weights
at depth $t$: $p_{w,t}\left(I=i\right)\triangleq\left(w_{t}^{i}\slash\sum_{j}w_{t}^{j}\right)$,
and for convenience denote $\boldsymbol{V}\left(i\right)\triangleq\boldsymbol{V}_{t+1}^{\pi}\left(b_{t},a\right)^{\left[i\right]}$.
The term $\frac{\sum_{i=1}^{C}w_{t}^{i}\boldsymbol{V}_{t+1}^{\pi}\left(b_{t},a\right)^{\left[i\right]}}{\sum_{i=1}^{C}w_{t}^{i}}$
is equivalent to the expectation of $\boldsymbol{V}\left(I\right)$
w.r.t. $p_{w,t}\left(I=i\right)$. The term $\frac{1}{C}\sum_{i=1}^{C}\boldsymbol{V}_{t+1}^{\pi}\left(b_{t},a\right)^{\left[I_{i}\right]}$
is equivalent to a Monte Carlo average of the previous quantity with
$C$ samples. Therefore:
\begin{gather}
\left|\frac{\sum_{i=1}^{C}w_{t}^{i}\boldsymbol{V}_{t+1}^{\pi}\left(b_{t},a\right)^{\left[i\right]}}{\sum_{i=1}^{C}w_{t}^{i}}-\frac{1}{C}\sum_{i=1}^{C}\boldsymbol{V}_{t+1}^{\pi}\left(b_{t},a\right)^{\left[I_{i}\right]}\right|\\
\Rightarrow\left|\mathbb{E}_{p_{w,t}\left(I=i\right)}\left[\boldsymbol{V}\left(I\right)\right]-\frac{1}{C}\sum_{i=1}^{C}\boldsymbol{V}\left(I_{i}\right)\right|
\end{gather}

This is the form of the double-sided Hoeffding-type bound on the function
values $\boldsymbol{V}\left(I\right)$. Hence, we can choose $\lambda$
such that for an arbitrary fixed set $\left\{ s_{t}^{i},w_{t}^{i}\right\} ,b_{t},a$:
\begin{gather}
    \begin{split}
        \probd\left(\left|\mathbb{E}_{p_{w,t}\left(I=i\right)}\left[\boldsymbol{V}\left(I\right)\right]-\frac{1}{C}\sum_{i=1}^{C}\boldsymbol{V}\left(I_{i}\right)\right| \right. \\
        \left. \vphantom{\sum_{i=1}^{C}}
             \leq\lambda\mid\left\{ s_{t}^{i},w_{t}^{i}\right\} ,b_{t},a\right)
    \end{split} \\
    \geq1-2\exp\left(-C\lambda^{2}\slash 2V_{\max}^{2}\right)
\end{gather}

We will use the two following well-known facts:
\begin{enumerate}
\item The probability of an event $A$ is equal to the expectation of the
indicator $\boldsymbol{1}_{A}$, i.e. $\probd\left(A\right)=\mathbb{E}\left(\boldsymbol{1}_{A}\right)$.
\item The tower property, also known as the law of total expectation: for
any two random variables $X,Y$ defined on the same probability space,
holds that $\mathbb{E}\left[X\right]=\mathbb{E}\left[\mathbb{E}\left[X\mid Y\right]\right]$.
\end{enumerate}
Therefore
\begin{gather}
\probd\left(\left|\mathbb{E}_{p_{w,d}\left(I=i\right)}\left[\boldsymbol{V}\left(I\right)\right]-\frac{1}{C}\sum_{i=1}^{C}\boldsymbol{V}\left(I_{i}\right)\right|\leq\lambda\right)\\
\overset{\text{Expt. of indicator}}{=}\mathbb{E}\left[\boldsymbol{1}_{\probd\left(\left|\mathbb{E}_{p_{w,d}\left(I=i\right)}\left[\boldsymbol{V}\left(I\right)\right]-\frac{1}{C}\sum_{i=1}^{C}\boldsymbol{V}\left(I_{i}\right)\right|\leq\lambda\right)}\right]\\
\begin{split}
\overset{\text{Tower property}}{=}\mathbb{E}\left[\mathbb{E}\left[\boldsymbol{1}_{\probd\left(\left|\mathbb{E}_{p_{w,d}\left(I=i\right)}\left[\boldsymbol{V}\left(I\right)\right]-\frac{1}{C}\sum_{i=1}^{C}\boldsymbol{V}\left(I_{i}\right)\right|\leq\lambda\right)}\mid \right.\right.\\
\left\{ s_{d,i},w_{d,i}\right\} ,b_{d},a\bigg]\bigg]
\end{split} \\
\begin{split}
\overset{\text{Expt. of indicator}}{=}\mathbb{E}\left[\probd\left(\left|\mathbb{E}_{p_{w,d}\left(I=i\right)}\left[\boldsymbol{V}\left(I\right)\right]-\frac{1}{C}\sum_{i=1}^{C}\boldsymbol{V}\left(I_{i}\right)\right| \right. \right. \\
\leq\lambda\mid\left\{ s_{d,i},w_{d,i}\right\} ,b_{d},a\bigg)\bigg]
\end{split} \\
\geq\mathbb{E}\left[1-2\exp\left(-C\lambda^{2}\slash2V_{\max}^{2}\right)\right]\\
=1-2\exp\left(-C\lambda^{2}\slash2V_{\max}^{2}\right)
\end{gather}

We choose to bound term $(2)$ with augmented $\lambda\slash4$, and
this holds with probability at least $1-2\exp\left(-C\lambda^{2}\slash32V_{\max}^{2}\right)$:

\begin{gather}
    \begin{split}
    \probd\left(\left|\frac{\sum_{i=1}^{C}w_{t}^{i}\boldsymbol{V}_{t+1}^{\pi}\left(b_{t},a\right)^{\left[i\right]}}{\sum_{i=1}^{C}w_{t}^{i}}-\right.\right. \\
    \frac{1}{C}\sum_{i=1}^{C}\boldsymbol{V}_{t+1}^{\pi}\left(b_{t},a\right)^{\left[I_{i}\right]}\bigg|\leq\frac{\lambda}{4}\bigg)
    \end{split}\\
\geq1-2\exp\left(-C\lambda^{2}\slash32V_{\max}^{2}\right)
\end{gather}

\paragraph{(3) Monte Carlo next-step integral approximation error}

First we define $\Delta_{t+1}\left(b_{t},a\right)^{\left[I_{i}\right]}\triangleq\boldsymbol{V}_{t+1}^{\pi}\left(b_{t},a\right)^{\left[I_{i}\right]}-V_{t+1}^{\pi}\left(b_{t}az^{\left[I_{i}\right]}\right)$.
Note that by rearranging the summation we can write 
\begin{gather}
\left|\frac{1}{C}\sum_{i=1}^{C}\boldsymbol{V}_{t+1}^{\pi}\left(b_{t},a\right)^{\left[I_{i}\right]}-\frac{1}{C}\sum_{i=1}^{C}V_{t+1}^{\pi}\left(b_{t}az^{\left[I_{i}\right]}\right)\right|\\
=\left|\frac{1}{C}\sum_{i=1}^{C}\Delta_{t+1}\left(b_{t},a\right)^{\left[I_{i}\right]}\right|
\end{gather}

Note that $V_{t+1}^{\pi}\left(b_{t}az^{\left[I_{i}\right]}\right)$
is simply a single sample Monte Carlo approximation of $\boldsymbol{V}_{t+1}^{\pi}\left(b_{t},a\right)^{\left[I_{i}\right]}$,
as the random vector $\left(x_{t+1,I_{i}},z_{I_{i}}\right)$ is jointly
generated using the generative model according to the correct probability
$Z_{t+1}\cdot T_{t,t+1}$ given $x_{t}^{i}$. This can be seen by the
following:
\begin{gather}
\mathbb{E}_{\left(s_{d+1,I_{i}},o_{I_{i}}\right)}\left[V_{d+1}^{\pi}\left(b_{d}ao^{\left[I_{i}\right]}\right)\right]\\
=\intop_{\mathcal{X}}\intop_{\mathcal{O}}V_{d+1}^{\pi}\left(b_{d}ao\right)Z_{d+1}T_{d,d+1}^{\left[i\right]}\dif s_{d+1}\dif z\\
=\boldsymbol{V}_{d+1}^{\pi}\left(b_{d},a\right)^{\left[I_{i}\right]}
\end{gather}
Hence follows from the tower property that $\mathbb{E}\left[\Delta_{t+1}\right]=0$.
Define $\Delta_{t+1}\left(b_{t},a\right)^{\left[I_{i}\right]}\triangleq\boldsymbol{V}_{t+1}^{\pi}\left(b_{t},a\right)^{\left[I_{i}\right]}-V_{t+1}^{\pi}\left(b_{t}az^{\left[I_{i}\right]}\right)$.
Note from the triangle inequality that $\left\Vert \Delta_{t+1}\right\Vert _{\infty}\leq2V_{\max}$.
Since $I_{i}$ are i.i.d. follows that $\Delta_{t+1}\left(b_{t},a\right)^{\left[I_{i}\right]}$
are i.i.d. too, and we can directly use another Hoeffding's bound:
\begin{gather}
\probd\left(\left|\frac{1}{C}\sum_{i=1}^{C}\Delta_{t+1}\left(b_{t},a\right)^{\left[I_{i}\right]}\right|\leq\frac{\lambda}{2}\right)\\
=\probd\left(\left|\frac{1}{C}\sum_{i=1}^{C}\Delta_{t+1}\left(b_{t},a\right)^{\left[I_{i}\right]}-\mathbb{E}\left[\Delta_{t+1}\right]\right|\leq\frac{\lambda}{2}\right)\\
\geq 1 - 2\exp\left(-C\lambda^{2}\slash32V_{\max}^{2}\right)
\end{gather}

\paragraph{(4) Function estimation error}

From the inductive hypothesis, for each possible $az^{\left[I_{i}\right]}$,
we have the bound $\left|Q_{\mathbf{P},t}^{\pi}\left(b_{t}az^{\left[I_{i}\right]},a^{\prime}\right)-\hat{\tilde{Q}}_{\omega,t}^{\pi}\left(\bar{b}_{t}az^{\left[I_{i}\right]},a^{\prime}\right)\right|\leq\alpha_{t+1}$
holding with high probability for all actions $a^{\prime}$, hence
in particular for $a^{\prime}=\pi\left(b_{t}az^{\left[I_{i}\right]}\right)$.
Then follows:
\begin{gather}
\left|\frac{1}{C}\sum_{i=1}^{C}V_{t+1}^{\pi}\left(b_{t}az^{\left[I_{i}\right]}\right)-\frac{1}{C}\sum_{i=1}^{C}\hat{\tilde{V}}_{\omega,t+1}^{\pi}\left(\bar{b}_{t+1}^{\prime\left[I_{i}\right]}\right)\right|\\
\leq\frac{1}{C}\sum_{i=1}^{C}\left|V_{t+1}^{\pi}\left(b_{t}az^{\left[I_{i}\right]}\right)-\hat{\tilde{V}}_{\omega,t+1}^{\pi}\left(\bar{b}_{t+1}^{\prime\left[I_{i}\right]}\right)\right|\\
\leq\frac{1}{C}\sum_{i=1}^{C}\alpha_{t+1}=\alpha_{t+1}
\end{gather}

\paragraph{Combining the bounds}

Thus, each of the terms are bound by $(A)\leq\frac{R_{\max}}{4V_{\max}}\lambda$
and $(B)\leq\frac{1}{4}\lambda+\frac{1}{4}\lambda+\frac{1}{2}\lambda+\alpha_{t+1}$,
which uses the SN concentration bounds twice and Hoeffding's bound
twice. Combining together:
\begin{gather}
\left|Q_{\mathbf{P},t}^{\pi}\left(b_{t},a\right)-\hat{\tilde{Q}}_{\omega,t}^{\pi}\left(\bar{b}_{t},a\right)\right|\\
\leq\frac{R_{\max}}{4V_{\max}}\lambda+\nu+\gamma\left[\frac{1}{4}\lambda+\frac{1}{4}\lambda+\frac{1}{2}\lambda+\alpha_{t+1}\right]\\
\leq\frac{1-\gamma}{4}\lambda+\nu+\gamma\lambda+\gamma\alpha_{t+1}\\
\leq\lambda+\nu+\gamma\alpha_{t+1}=\alpha_{t}
\end{gather}

We obtain the worst case union bound on the probability that all inequalities
simultaneously hold. We used the SN concentration bound twice, Hoeffding's
bound twice, and assumed reward bound once. For the SN and Hoeffding's
bounds, we can bound the worst case probability of either by the following
\begin{gather}
\max\left\{ 3\exp\left(-C\cdot k_{\max}^{2}\left(\lambda,C\right)\right),2\exp\left(-C\lambda^{2}\slash32V_{\max}^{2}\right)\right\} \\
\leq3\exp\left(-C\cdot\acute{k}^{2}\right),
\end{gather}
which we multiply by the union factor bound $\left(4C\right)^{L+1}$
since we want the function estimates to be within the bounds for the
specific action chosen by the policy, and all child nodes (used $C$
times in the function estimation error), and we used either SN concentration
bound or Hoeffding's bound 4 times in total. For the reward approximation
bound, we multiply by a factor of $C^{L+1}$ to account for the branching
factor. Hence, in total, we have shown that for all levels $t$ the
worst case union bound probability of all bad events is bounded by
\begin{gather}
\probd\left(\left|Q_{\mathbf{P},t}^{\pi}\left(b_{t},a\right)-\hat{\tilde{Q}}_{\omega,t}^{\pi}\left(\bar{b}_{t},a\right)\right|\leq\alpha_{t}\right)\\
\geq1-3\left(4C\right)^{L+1}\left(\exp\left(-C\cdot\acute{k}^{2}\right)\right)-C^{L+1}\alpha\left(\nu,N\right)\\
\geq1-3\left(4C\right)^{L+1}\left(\exp\left(-C\cdot\acute{k}^{2}\right)+\alpha\left(\nu,N\right)\right)
\end{gather}

When giving a union bound for all possible actions as well (i.e. for
all policies) then the action branching factor becomes $\left|A\right|^{L+1}$,
once from bounding the root level, and another $L$ times for all
depth levels. Thus, in this case the probability becomes at least $1-3\left(4\left|A\right|C\right)^{L+1}\left(\exp\left(-C\cdot\acute{k}^{2}\right)+\alpha\left(\nu,N\right)\right)$.

\paragraph{PB-MDP Value Convergence}

Similarily to the previous convergence bound, we split the difference
into two terms, the reward estimation error $(A)$ and the next-step
value estimation error $(B)$:

\begin{gather}
\left|Q_{\mathbf{M_{P}},t}^{\pi}\left(\bar{b}_{t},a\right)-\hat{\tilde{Q}}_{\omega,t}^{\pi}\left(\bar{b}_{t},a\right)\right|\\
\leq\underbrace{\left|\rho\left(\bar{b}_{t},a\right)-\tilde{\rho}\left(\bar{b}_{t},a\right)\right|}_{(\mathrm{A})}\\
\underbrace{\begin{split}
    +\gamma \bigg|\mathbb{E}_{\mathbf{M}_{\mathbf{P}}}\left[V_{\mathbf{M}_{\mathbf{P}},t+1}^{\pi}\left(\bar{b}_{t+1}\right)\mid\bar{b}_{t},a\right] \\
    -\frac{1}{C}\sum_{i=1}^{C}\hat{\tilde{V}}_{\omega,t+1}^{\pi}\left(\bar{b}_{t+1}^{\prime\left[I_{i}\right]}\right)\bigg|.
\end{split}}_{(\mathrm{B})}
\end{gather}

Term $(A)$ can be bounded like the reward approximation error of
term $(A)$ in the previous case.
\begin{equation}
\probd\left(\left|\frac{\sum_{i=1}^{C}w_{t}^{i}r_{t}^{i}}{\sum_{i=1}^{C}w_{t}^{i}}-\frac{\sum_{i=1}^{C}w_{t}^{i}\tilde{r}_{t}^{i}}{\sum_{i=1}^{C}w_{t}^{i}}\right|\leq\nu\right)\geq1-C\delta_r\left(\nu,N_r\right)
\end{equation}

For the inductive step, we prove that the difference $(B)$ is bounded
for all $t=0,\dots,L$. We split it into two terms:
\begin{gather}
\left|\mathbb{E}_{\mathbf{M}_{\mathbf{P}}}\left[V_{\mathbf{M}_{\mathbf{P}},t+1}^{\pi}\left(\bar{b}_{t+1}\right)\mid\bar{b}_{d},a\right]-\frac{1}{C}\sum_{i=1}^{C}\hat{\tilde{V}}_{\omega,t+1}^{\pi}\left(\bar{b}_{t+1}^{\prime\left[I_{i}\right]}\right)\right|\\
\leq\underbrace{
    \begin{split}
        \bigg|\mathbb{E}_{\mathbf{M}_{\mathbf{P}}}\left[V_{\mathbf{M}_{\mathbf{P}},t+1}^{\pi}\left(\bar{b}_{t+1}\right)\mid\bar{b}_{t},a\right]- \\
        \left. \frac{1}{C}\sum_{i=1}^{C}V_{\mathbf{M}_{\mathbf{P}},t+1}^{\pi}\left(\bar{b}_{t+1}^{\prime\left[I_{i}\right]}\right)\right|
    \end{split}
    }_{(1)\text{MC transition approximation error}}\\
+\underbrace{\left|\frac{1}{C}\sum_{i=1}^{C}V_{\mathbf{M}_{\mathbf{P}},t+1}^{\pi}\left(\bar{b}_{t+1}^{\prime\left[I_{i}\right]}\right)-\frac{1}{C}\sum_{i=1}^{C}\hat{\tilde{V}}_{\omega,t+1}^{\pi}\left(\bar{b}_{t+1}^{\prime\left[I_{i}\right]}\right)\right|}_{(2)\text{ Function approximation error}}\\
\leq\underbrace{\lambda}_{(1)}+\underbrace{\beta_{t+1}}_{(2)}.
\end{gather}

\paragraph{(1) MC transition approximation error}

This term is a Monte Carlo estimate of the integration over the transition
estimate $\tau\left(\bar{b}_{t+1}\mid\bar{b}_{t},a\right)$. The value
function and its estimate are both bounded by $V_{\max}$, therefore
we can invoke Hoeffding's bound to obtain the following probabilistic
bound:
\begin{gather}
\begin{split}
    \probd\bigg(\bigg|\mathbb{E}_{\mathbf{M}_{\mathbf{P}}}\left[V_{\mathbf{M}_{\mathbf{P}},t+1}^{\pi}\left(\bar{b}_{t+1}\right)\mid\bar{b}_{t},a\right]- \\
    \left. \left. \frac{1}{C}\sum_{i=1}^{C}V_{\mathbf{M}_{\mathbf{P}},t+1}^{\pi}\left(\bar{b}_{t+1}^{\left[I_{i}\right]}\right)\right|\leq\lambda\right)\\
    \geq1-2\exp\left(-C\lambda^{2}\slash2V_{\max}^{2}\right). 
\end{split}
\end{gather}

\paragraph{(2) Function approximation error}

From the inductive hypothesis, for each $\bar{b}_{t+1}^{\prime\left[I_{i}\right]}$,
its PB-MDP $Q$-value it's sparse sampling-$\omega$-$\pi$ estimate
at step $t+1$ is bounded by $\beta_{t+1}$ for all actions. In particular,
this also applies for $a=\pi\left(\bar{b}_{t+1}^{\prime\left[I_{i}\right]}\right)$.
Thus follows 
\begin{gather}
    \left|\frac{1}{C}\sum_{i=1}^{C}V_{\mathbf{M}_{\mathbf{P}},t+1}^{\pi}\left(\bar{b}_{t+1}^{\prime\left[I_{i}\right]}\right)-\frac{1}{C}\sum_{i=1}^{C}\hat{\tilde{V}}_{\omega,t+1}^{\pi}\left(\bar{b}_{t+1}^{\prime\left[I_{i}\right]}\right)\right|\\
    \leq\frac{1}{C}\sum_{i=1}^{C}\left|V_{\mathbf{M}_{\mathbf{P}},t+1}^{\pi}\left(\bar{b}_{t+1}^{\prime\left[I_{i}\right]}\right)-\hat{\tilde{V}}_{\omega,t+1}^{\pi}\left(\bar{b}_{t+1}^{\prime\left[I_{i}\right]}\right)\right|\\
    \leq\frac{1}{C}\sum_{i=1}^{C}\beta_{d+1}=\beta_{t+1}
\end{gather}

\paragraph{Combining the bounds}

By applying similar logic of ensuring that every particle belief node
and satisfies the concentration inequalities, we combine one Hoeffding's
inequality with one immediate reward approximation error. The terms
are bounded by $(A)\leq\nu$ and $(B)\leq\lambda+\beta_{t+1}$, therefore:
\begin{gather}
\left|Q_{\mathbf{M_{P}},t}^{\pi}\left(\bar{b}_{t},a\right)-\hat{\tilde{Q}}_{\omega,t}^{\pi}\left(\bar{b}_{t},a\right)\right| \\
\leq\nu+\gamma\left(\lambda+\beta_{t+1}\right)=\beta_{t},
\end{gather}

and with the union bound, we get the following probabilistic bound
\begin{gather}
\probd\left(\left|Q_{\mathbf{M_{P}},t}^{\pi}\left(\bar{b}_{t},a\right)-\hat{\tilde{Q}}_{\omega,t}^{\pi}\left(\bar{b}_{t},a\right)\right|\leq\beta_{t}\right)\\
\geq1-2\cdotp C^{L+1}\left(\exp\left(-C\lambda^{2}\slash2V_{\max}^{2}\right)\right)-C^{L+1}\delta_r\left(\nu,N_r\right)\\
\geq1-2\cdotp C^{L+1}\left(\exp\left(-C\lambda^{2}\slash2V_{\max}^{2}\right)+\delta_r\left(\nu,N_r\right)\right).
\end{gather}
For the case of bounding for all policies simultaneously, the
probabilities become at least $1-2\cdotp\left(\left|A\right|C\right)^{L+1}\left(\exp\left(-C\lambda^{2}\slash2V_{\max}^{2}\right)+\delta_r\left(\nu,N_r\right)\right)$

\paragraph{Combining Both Concentration Bounds}

In order to enable the two concentration inequalities to simultaneously
hold, we bound the worst case union probability:
\begin{gather}
3\left(4C\right)^{L+1}\left(\exp\left(-C\cdot\acute{k}^{2}\right)+\delta_r\left(\nu,N_r\right)\right) \\+2\cdotp C^{L+1}\left(\exp\left(-C\lambda^{2}\slash2V_{\max}^{2}\right)+\delta_r\left(\nu,N_r\right)\right)\\
\leq3\left(4C\right)^{L+1}\left(\exp\left(-C\cdot\acute{k}^{2}\right)+\delta_r\left(\nu,N_r\right)\right)\\+2\left(4C\right)^{L+1}\left(\exp\left(-C\cdot\acute{k}^{2}\right)+\delta_r\left(\nu,N_r\right)\right)\\
=5\left(4C\right)^{L+1}\left(\exp\left(-C\cdot\acute{k}^{2}\right)+\delta_r\left(\nu,N_r\right)\right)
\end{gather}
Therefore, we conclude that the Sparse Sampling-$\omega$-$\pi$ $Q$-value
estimate concentration inequalities approximation error, for both
the original POMDP and its PB-MDP approximation, are bounded by $\alpha_{t}$,
$\beta_{t}$ at every belief node, respectively, with probability
at least $1-5\left(4C\right)^{L+1}\left(\exp\left(-C\cdot\acute{k}^{2}\right)+\delta_r\left(\nu,N_r\right)\right)$.
If we require the concentration inequality to simultaneously hold
for all policies, then the probability becomes $1-5\left(4\left|A\right|C\right)^{L+1}\left(\exp\left(-C\cdot\acute{k}^{2}\right)+\delta_r\left(\nu,N_r\right)\right)$.

\end{proof}

\begin{theoremrpt}[Generalized PB-MDP Convergence] 
    \label{thm:ProbConvergeRpt}
    Assume that the immediate state reward estimate is probabilistically bounded such that
    $\OpProbFlat{\absvalflat{r_{i}^{j}-\tilde{r}_{i}^{j}}\geq\nu}\leq\delta_{r}(\nu,N_r)$, for a number of reward samples $N_r$ and state sample $x_{i}^{j}$. 
    Assume that $\delta_{r}(\nu,N_r)\to0$ as $N_r\to\infty$.
    For all policies $\pi$, $t=0,\dots,L$ and $a\in\mathcal{A}$, the following bounds hold with probability of at least $1-5(4C)^{L+1}(\exp(-C\cdot\acute{k}^{2})+\delta_{r}(\nu,N_r))$:
    \begin{gather}
        \textstyle
        \absvalflat{Q_{\OrigPOMDP,t}^{\pi,[{p_Z}\slash{q_Z}]}(b_{t},a)-Q_{\PBMDP,t}^{\pi,[{p_Z}\slash{q_Z}]}(\PB{t}, a)}\leq \alpha_{t}+\beta_{t},
    \end{gather}
    where,
    \begin{gather}
        \textstyle
        \alpha_{t}=(1+\gamma)\lambda+\gamma \alpha_{t+1},\ \alpha_{L}=\lambda\geq0, \\
        \beta_{t}=2\nu+\gamma \beta_{t+1},\ \beta_{L}=2\nu\geq 0, \\
        k_{\max}(\lambda, C)=\frac{\lambda}{4V_{\max}d_{\infty}^{\max}}-\frac{1}{\sqrt{C}}>0,\\
        \acute{k}=\min\{k_{\max},\lambda\slash4\sqrt{2}V_{\max}\}.
    \end{gather}
    If we require the bound to hold for all possible policies that can be extracted from a given belief tree simultaneously, then under the assumption of a finite action space, the probability is at least $1-5(4\absvalflat{\mathcal{A}}C)^{L+1}(\exp(-C\cdot\acute{k}^{2})+\delta_{r}(\nu,N_r))$.
\end{theoremrpt}

\begin{proof}
    We prove for the observation model $p_Z$ without loss of generality.

    Under the same conditions and probability as Lemma \ref{lem:SparseSamplingOmegaPiConvergence}, we bound the difference directly between the theoretical action value function and the particle-belief approximation using the triangle inequality,
\begin{gather}
    \left|Q_{\mathbf{P},t}^{\pi}(b_{t},a)-Q_{\mathbf{M_{P}},t}^{\pi}(\PB{t},a)\right|\leq\alpha_{t}+\beta_{t}.
\end{gather}
We define the following recursive bounds
\begin{gather}
    A_{t}\bydef (1+\gamma)\lambda+\gamma A_{d+1},\ A_{L}\bydef \lambda \\
    B_{t}\bydef 2\nu+\gamma B_{t+1},\ B_{L}\bydef 2\nu
\end{gather}
and follows that $A_{t}+B_{t}=\alpha_{t}+\beta_{t}$ and $A_{L}+B_{L}=\alpha_{L}+\beta_{L}$, hence follows that $\left|Q_{\mathbf{P},t}^{\pi}(b_{t},a)-Q_{\mathbf{M_{P}},t}^{\pi}(\PB{t},a)\right|\leq A_{t}+B_{t}$. By renaming $\alpha_t \bydef A_t$ and $\beta_t \bydef B_t$ we get the required result.
\end{proof}
    
\subsection{Corollary \ref{crl:ArbitraryPrecisionBoundsRpt}}

\begin{corollaryrpt}
    \label{crl:ArbitraryPrecisionBoundsRpt}
    For arbitrary precision $\varepsilon$ and accuracy $\delta$ we can choose constants $\lambda, \nu, C, N_r$ such that the following holds with probability of at least $1-\delta$:
    \begin{gather}
        \absvalflat{Q_{\OrigPOMDP,t}^{\Policy{},[{p_Z}\slash{q_Z}]}(\PB{t},a)-Q_{\PBMDP,t}^{\Policy{},[{p_Z}\slash{q_Z}]}(\PB{t}, a)}\leq \varepsilon.
    \end{gather}
\end{corollaryrpt}

\begin{proof}
    We prove for the case of bounding for a belief tree with a finite action space. 
    The case of a single policy can be proven similarly by removing the factors related $\absval{\mathcal{A}}$.

    Let $\varepsilon>0$ and let $\lambda>0$.
    We denote $L_{+1} \bydef L+1$.

    Let $\nu,\lambda=\frac{1}{4L_{+1}}\varepsilon$.

    The conditions necessary for Theorem \ref{thm:ProbConvergeRpt} are the following:
    \begin{gather}
        k_{\max}(\lambda, C)=\frac{\lambda}{4V_{\max}d_{\infty}^{\max}}-\frac{1}{\sqrt{C}}>0, \label{eq:condition1} \\
        \delta \geq 5(4\absvalflat{\mathcal{A}}C)^{L_{+1}}(\exp(-C\cdot\acute{k}^{2})+\delta_{r}(\nu,N_r)) \label{eq:condition2} \\
        \acute{k}=\min\{k_{\max},\lambda\slash4\sqrt{2}V_{\max}\}.
    \end{gather}

    Denote $A_1\bydef \frac{\lambda}{4V_{\max}d_{\infty}^{\max}}$, $A_2\bydef \lambda\slash4\sqrt{2}V_{\max}$, and note $A_1, A_2>0$.
    We obtain $k_{\max}(\lambda, C) = A_1 - \frac{1}{\sqrt{C}}$.
    
    We would like to choose a particle count $C$ large enough such that $k_{\max}(\lambda, C)$ is larger than $A_1\slash 2$.
    Hence we denote the solution for the following equation with $C_{A_1\slash2}$:
    \begin{gather}
        (A_1 - \frac{1}{\sqrt{C_{A_1\slash2}}})^2\cdot C_{A_1\slash2} = \frac{A_1}{2} \\
        \Rightarrow C_{A_1\slash2}\bydef \frac{2\sqrt{2}\sqrt{\frac{1}{(A_1)^{3}}}(A_1)^{2}+ A_1 + 2}{2 (A_1)^2}. 
    \end{gather}
    Denote the following constants:
    \begin{gather}
        K_1 \bydef \max\{C_{A_1\slash2},\frac{1}{(A_1)^2}, \frac{2}{(A_2)^2}\} \\
        K_2 \bydef \min \{\frac{A_1}{2}, 2\} > 0, \\
        K_3 \bydef 5(4\absvalflat{\mathcal{A}})^{L_{+1}}
    \end{gather}

    We choose an auxiliary particle count $\tilde{C}\in \mathbb{N}$, such that the particle count $C$ satisfies
    \begin{gather}
        C>K_1 \cdot \tilde{C} \geq K_1.
    \end{gather}
    Condition \eqref{eq:condition1} is satisfied because 
    \begin{gather}
        k_{\max}(\lambda, C) = A_1 - \frac{1}{\sqrt{C}} > A_1 - \frac{1}{\sqrt{\frac{1}{(A_1)^2}}}, \\
        A_1 - \sqrt{(A_1)^2} = 0.
    \end{gather}
    Additionally, 
    \begin{gather}
        k_{\max}(\lambda, C)^2 \cdot C > \frac{A_1}{2}, \\
        (A_2)^2 \cdot C > (A_2)^2 \cdot \frac{2}{(A_2)^2} = 2.
    \end{gather}
    We obtain that
    \begin{gather}
        \acute{k}^2 \cdot C = \\
        \min \{k_{\max}(\lambda, C)^2, (A_2)^2\} \cdot C > \\
        \min \{\frac{A_1}{2}, 2\} \cdot \tilde{C} = K_2 \cdot \tilde{C}.
    \end{gather}
    Therefore, condition \eqref{eq:condition2} will be satisfied if the following is satisfied:
    \begin{gather}
        \delta \geq K_3 (K_1)^{L_{+1}} \cdot \tilde{C}^{L_{+1}} (\exp(-K_2 \cdot \tilde{C}) + \delta_{r}(\nu,N_r)).
    \end{gather}
    We perform the change of variables $X \bydef K_2\cdot \tilde{C}$, i.e. $\tilde{C} = \frac{X}{K_2}$:
    \begin{gather}
        K_3 (K_1)^{L_{+1}} \cdot \tilde{C}^{L_{+1}} \exp(-K_2 \cdot \tilde{C}) \\
        = K_3(\frac{K_1}{K_2})^{L_{+1}}\cdot X^{L_{+1}} \exp(-X).
    \end{gather}
    The exponential function grows faster than any polynomial, and specifically for $P(X)=K_3(\frac{K_1}{K_2})^{L_{+1}}\cdot X^{L_{+1}}$. Therefore, we can choose $X^\prime \in \mathbb{R}$ such that $\forall X\in\mathbb{N}>X^{\prime}$:
    \begin{gather}
        K_3(\frac{K_1}{K_2})^{L_{+1}}\cdot X^{L_{+1}} \exp(-X) \leq \frac{\delta}{2}. \label{eq:result_1}
    \end{gather}
    By choosing an auxiliary particle count $\tilde{C} > \frac{X^\prime}{K_2}$ we satisfy \eqref{eq:result_1}. 

    For the choice of $\lambda$ the following holds:
    \begin{gather}
        \alpha_0 \leq \sum_{k=0}^{L} 2\lambda \cdot \gamma^k \leq 2\lambda L_{+1} = \frac{2\cdotp \varepsilon(L_{+1})}{4L_{+1}}=\frac{\varepsilon}{2}.
    \end{gather}
    For the given choice of the particle count $C$, we remind the assumption in Theorem \ref{thm:ProbConvergeRpt} that for all $\nu>0$ it holds that $\delta_r(\nu,N_r)\to 0$ as $N_r\to\infty$. Therefore, we can choose $N_r^{\prime}$ such that the following holds:
    \begin{gather}
        K_3 C^{L_{+1}} \cdot \delta_r(\nu,N_r) \leq \frac{\delta}{2}
    \end{gather}
    For the choice of $\nu$ the following holds:
    \begin{gather}
        \beta_0 \leq \sum_{k=0}^{L} 2\nu \cdot \gamma^k \leq 2\nu L_{+1} = \frac{2\cdotp \varepsilon (L_{+1})}{4L_{+1}}=\frac{\varepsilon}{2}.
    \end{gather}
    In summary, for the choices of:
    \begin{gather}
        C>\max \{K_1, \frac{K_1}{K_2}X^\prime\} \\
        N_r > N_r^{\prime}
    \end{gather}
    we have that the following statements hold,
    \begin{gather}
        K_3 C^{L_{+1}}\cdot\exp(-C\cdot\acute{k}^{2}) \leq \frac{\delta}{2} \\
        K_3 C^{L_{+1}} \cdot \delta_r(\nu,N_r) \leq \frac{\delta}{2},
    \end{gather}
    And therefore from Theorem \ref{thm:ProbConvergeRpt}, with probability of at least $1-K_3C^{L_{+1}}(\exp(-C\cdot\acute{k}^{2}) + \delta_r(\nu,N_r))\geq 1-\delta$ the following bound holds:
    \begin{gather}
        \left|Q_{\mathbf{P},t}^{\pi}(b_{t},a)-Q_{\mathbf{M_{P}},t}^{\pi}(\PB{t},a)\right|\leq\alpha_{t}+\beta_{t} \\
        \leq \alpha_0 + \beta_0 \leq \frac{\varepsilon}{2} + \frac{\varepsilon}{2} = \varepsilon.
    \end{gather}
\end{proof}

\subsection{Corollary \ref{crl:JointApproximationBoundRpt}}

\begin{corollaryrpt} 
    \label{crl:JointApproximationBoundRpt}
    Assuming that $\mathcal{P}$ is an MDP planner that can approximate $Q$-values with arbitrary precision $\varepsilon^{\mathcal{P}}$ at an accuracy $1-\delta^{\mathcal{P}}$, we denote the precision and accuracy of the action value and action cumulative bound functions:
    \begin{gather}
        \OpProbFlat{\absvalflat{Q_{\PBMDP,t}^{\pi,q_Z}(\bar{b}_{t},a)-\Estim{Q}_{\PBMDP,t}^{\pi,q_Z}(\bar{b}_{t},a)}\leq\varepsilon^{\mathcal{P}}_{Q}}\geq 1-\delta^{\mathcal{P}}_{Q} \label{eq:PlannerBoundsQ_rpt}\\
        \OpProbFlat{\absvalflat{\Phi_{\PBMDP,t}^{\pi}(\bar{b}_{t},a)-\PhiImSaEst_{\PBMDP,t}^{\pi}(\bar{b}_{t},a)}\leq\varepsilon^{\mathcal{P}}_{\Phi}}\geq 1-\delta^{\mathcal{P}}_{\Phi} \label{eq:PlannerBoundsPhi_rpt}
    \end{gather}
    From Corollary \ref{crl:ArbitraryPrecisionBounds} it holds that we can choose constants $\lambda, \nu, C, N_r$ such that the following holds,
    \begin{gather}
        \OpProbFlat{\absvalflat{Q_{\OrigPOMDP,t}^{\pi,q_Z}(b_{t},a)-Q_{\PBMDP,t}^{\pi,q_Z}(\PB{t}, a)}\leq \varepsilon_{Q}}\geq 1-\delta_{Q} \label{eq:PBMDPBoundsQ_rpt}, \\
        \OpProbFlat{\absvalflat{\Phi_{\OrigPOMDP,t}^{\pi}(b_{t},a)-\RewardEstim{\Phi}_{\PBMDP,d}^{\pi}(\PB{t}, a)}\leq \varepsilon_{\Phi}}\geq 1-\delta_{\Phi}. \label{eq:PBMDPBoundsPhi_rpt}
    \end{gather}
    Then with probability of at least $1-(\delta_{Q}+\delta^{\mathcal{P}}_{Q}+\delta_{\Phi}+\delta^{\mathcal{P}}_{\Phi})$
    \begin{gather}
        \absvalflat{Q_{\OrigPOMDP,t}^{\pi,p_Z}(b_{t},a)-\Estim{Q}_{\PBMDP,t}^{\pi,q_Z}(\bar{b}_{t},a)}\leq \\
        \PhiImSaEst_{\PBMDP,t}^{\pi}(\bar{b}_{t},a) + \varepsilon_{Q} + \varepsilon^{\mathcal{P}}_{Q} + \varepsilon_{\Phi} + \varepsilon^{\mathcal{P}}_{\Phi}.
    \end{gather}
\end{corollaryrpt}

\begin{proof}
    We combine all probabilistic bounds with the triangle inequality and the union bound, to conclude that with probability of at least $1-(\delta_{Q}+\delta^{\mathcal{P}}_{Q}+\delta_{\Phi}+\delta^{\mathcal{P}}_{\Phi})$:
    \begin{gather}
        \absvalflat{Q_{\OrigPOMDP,t}^{\pi,p_Z}(b_{t},a)-\Estim{Q}_{\PBMDP,t}^{\pi,q_Z}(\PB{t},a)}\\
        \leq \absvalflat{Q_{\OrigPOMDP,t}^{\pi,p_Z}(b_{t},a) - Q_{\OrigPOMDP,t}^{\pi,q_Z}(b_{t},a)} \, (\textit{Thm. \ref{thm:LocalActionBoundRpt}}) \\
        + \absvalflat{Q_{\OrigPOMDP,t}^{\pi,q_Z}(b_{t},a) - Q_{\PBMDP,t}^{\pi,q_Z}(\PB{t},a)} \,(\textit{eq. \eqref{eq:PBMDPBoundsQ_rpt}}) \\
        + \absvalflat{Q_{\PBMDP,t}^{\pi,q_Z}(\PB{t},a) -\Estim{Q}_{\PBMDP,t}^{\pi,q_Z}(\PB{t},a)} \, (\textit{eq. \eqref{eq:PlannerBoundsQ_rpt}}) \\
        \leq \Phi_{\OrigPOMDP,t}^{\pi}(b_{t},a) + \varepsilon_{Q} + \varepsilon^{\mathcal{P}}_{Q} \quad(\textit{eq. \eqref{eq:PBMDPBoundsPhi_rpt} + eq. \eqref{eq:PlannerBoundsPhi_rpt}}) \\
        \leq \PhiImSaEst_{\PBMDP,t}^{\pi}(\PB{t},a) + \varepsilon_{Q} + \varepsilon^{\mathcal{P}}_{Q} + \varepsilon_{\Phi} + \varepsilon^{\mathcal{P}}_{\Phi}.
    \end{gather}
\end{proof}

\section{Further Implementation Details}

\subsection{Simulative Setting}

In the 2D beacons environment, the state and observation spaces are defined as the whole plane: $\mathcal{X}=\mathcal{Z}=\mathbb{R}^2$. The planning horizon is $L=15$.

We define the 2D rectangle $Rect((x_1,y_1),(x_2,y_2))$ as the axis-aligned rectangle starting from the bottom-left corner $(x_1,y_1)$ to the top-right corner $(x_2,y_2)$:
\begin{gather}
    Rect((x_1,y_1),(x_2,y_2)) \bydef \\
    \left\{ (x,y)\in\mathbb{R}^2 \mid x_1 \leq x \leq x_2, y_1 \leq y\leq y_2\right\}
\end{gather}
The outer walls are defined by $\mathcal{X}_\textit{collision}\bydef \neg Rect((-2, 0),(12,6))$.
The goal region is defined by $\mathcal{X}_\textit{goal}=Rect ((4,-1.5),(6, 0))$. 
The prior is the following Gaussian mixture with 2 components:
\begin{gather}
    b_0(x_0) = \frac{1}{2}\mathcal{N}(\mu_1, \Sigma_0) + \frac{1}{2}\mathcal{N}(\mu_2, \Sigma_0), \\
    \mu_1 = (1,2), \mu_2=(9,2), \\
    \Sigma_0 = \operatorname{diag}(\sigma_{x_0}^2,\sigma_{y_0}^2), \\
    \sigma_{x_0} = 0.5, \sigma_{y_0}=0.25.
\end{gather}
The action space is the discrete space of the following 4 actions: $\mathcal{A}=\{(0,1), (0,-1), (1,0), (-1,0)\}$.
The transition model is the following Gaussian model:
\begin{gather}
    p_T(x^\prime\mid x, a) = \mathcal{N}(x+a,\Sigma_T), \\
    \Sigma_T = \operatorname{diag}(\sigma_T^2, \sigma_T^2) , \\
    \sigma_T^2 = 0.15.
\end{gather}
In the arena the 6 beacons are arranged in a horizontal row in equal distances along the 2D line from $(0, 4)$ to $(10, 4)$, such that their locations are:
\begin{gather}
    (x_1^\textit{beacon}, y_1^\textit{beacon}) = (0,4), \\ 
    (x_2^\textit{beacon}, y_2^\textit{beacon}) = (2,4), \\
    (x_3^\textit{beacon}, y_3^\textit{beacon}) = (4,4), \\
    (x_4^\textit{beacon}, y_4^\textit{beacon}) = (6,4), \\
    (x_5^\textit{beacon}, y_5^\textit{beacon}) = (8,4), \\
    (x_6^\textit{beacon}, y_6^\textit{beacon}) = (10,4).
\end{gather}
The range of a beacon is $R^\textit{beacon}=1$.
The light and dark regions are defined as the following:
\begin{gather}
    \mathcal{X}_\textit{light} = \bigcup_{i=1}^6{B((x_i^\textit{beacon}, y_i^\textit{beacon}); R^\textit{beacon})} \\
    \mathcal{X}_\textit{dark} = \mathcal{X} \setminus \mathcal{X}_\textit{light}
\end{gather}

The original and simplified observation models in the dark region are the same Gaussian model:
\begin{gather}
    p_Z(z\mid x) = q_Z(z\mid x) = \mathcal{N}(x, \Sigma_\textit{dark}), \\
    \Sigma_\textit{dark} = \operatorname{diag}(\sigma_\textit{dark}^2, \sigma_\textit{dark}^2), \\
    \sigma_\textit{dark} = 5.
\end{gather}
The original model in the light region is defined as a Gaussian mixture model made of rings of components with increasing number of components but decaying weights, centered around a center component. Its definition is the following:
\begin{gather}
    N_\sigma = 3, k_r = 10, k_\theta = 25 \\
    \sigma_\textit{light} = 0.3, \\
    \Sigma_{p_Z} = \operatorname{diag}(\sigma_\textit{light}\frac{N_\sigma}{k_r}) \\
    N_i^\theta = \max\{1,i\cdotp k_\theta\}, \\
    r_i = N_\sigma\cdotp i, \\
    w_i = \exp(-\frac{(i \frac{N_\sigma}{k_r})^2}{2}), \\
    \tilde{w}_i = w_i \slash (\sum_{k=1}^{k_r} \sum_{j=1}^{N_i^\theta} w_k), \\
    v_{i, j}^{\theta} = i \frac{N_\sigma}{k_r} Rot(\frac{2\pi j}{N_i^\theta}) \begin{bmatrix}
        \sigma_\textit{light} \\
        \sigma_\textit{light}
    \end{bmatrix} \\
    p_Z(z\mid x \in \mathcal{X}_\textit{light}) = \sum_{i=1}^{k_r} \sum_{j=1}^{N_i^\theta} \tilde{w}_i \mathcal{N}(x+\cdotp v_{i, j}^{\theta}, \Sigma_{p_Z})
\end{gather}
where $Rot(\varphi)$ is the 2D rotation matrix corresponding to angle $\varphi$.
The simplified observation model is the following single component Gaussian:
\begin{gather}
    q_Z(z\mid x \in \mathcal{X}_\textit{light}) = \mathcal{N}(x, \Sigma_\textit{light}), \\
    \Sigma_\textit{light} = \operatorname{diag}(\sigma_\textit{light}^2, \sigma_\textit{light}^2), \\
    \sigma_\textit{light} = 0.3.
\end{gather}

The reward function is time and state dependent only, and defined as the sum of 3 indicators:
\begin{gather}
    r_{t}(x)=R_{\textit{hit}}\cdot\boldsymbol{1}_{x\in\mathcal{X}_{\textit{goal}}}\\
    + R_{\textit{miss}}\cdot\boldsymbol{1}_{x\notin\mathcal{X}_{\textit{goal}}} \\
    + R_{\textit{collide}}\cdot\boldsymbol{1}_{x\in\mathcal{X}_{\textit{collision}}}
\end{gather}
For all time steps $t > 0$ we set $R_{\textit{hit}}=100$, $R_{\textit{collide}}=-50$. We set $R_{\textit{miss}}=-50$ if $t=L$, and otherwise $-1$ if $t>0$. In the first time step the reward is 0, i.e. $r_0=0$. Additionally, there is no discount factor, i.e. $\gamma=1$.
From these definitions, it holds that $\Rmax{i}=100$ and $\Vmax{t}=100 + (15-t)$ for $1\leq t\leq15$.

\subsection{Bound Estimate}

\begin{algorithm}[tb]
    \caption{Empirical TV-Distance $\DZEst$}
    \label{alg:DZEst}
    \textbf{Algorithm:} $\text{Estimate }\DZ$.\\
    \textbf{Input}: Number of state samples $N_\Delta$, number of observation per state sample $N_Z$, total variation threshold $\Delta_\textit{Thresh}$, $Q_0$ state proposal distribution, $p_Z$ original measurement model, $q_Z$ simplified observation model. \\
    \textbf{Output}: Number of kept states $N_\Delta^{\textit{eff}}$, a set of states $\{x_n^{\Delta}\}_{n=1}^{N_\Delta^{\textit{eff}}}$, TV-distance estimates $\{ \DZEst(x_n^{\Delta}) \}_{n=1}^{N_\Delta^{\textit{eff}}}$.
    \begin{algorithmic}[1]
    \STATE $X=list(), D=list()$
    \FORALL {{$n=1,\dots, N_\Delta$}}
    \STATE $x_n^{\Delta} \sim Q_0$
    \FORALL {$j=1,\dots,N_Z$}
    \STATE $z_n^j \sim (p_Z + q_Z) \slash 2$
    \ENDFOR
    \STATE $\DZEst(x_{n}^{\Delta}) = \sum_{j=1}^{N_Z}{2\cdot\frac{\lvert p_Z(z_{j}^{n}\mid x_{n}^{\Delta}) - q_Z(z_{j}^{n}\mid x_{n}^{\Delta}) \rvert}{p_Z(z_{j}^{n}\mid x_{n}^{\Delta}) + q_Z(z_{j}^{n}\mid x_{n}^{\Delta})}}$
    \IF {$\DZEst(x_{n}^{\Delta}) > \Delta_\textit{Thresh}$}
    \STATE $X\leftarrow x_{n}^{\Delta}, D \leftarrow \DZEst(x_{n}^{\Delta})$
    \ENDIF
    \ENDFOR
    \STATE \textbf{return} $(\#X), X, D$
    \end{algorithmic}
\end{algorithm}

\begin{algorithm}[tb]
    \caption{Empirical Bounds $\MImSa_i$}
    \label{alg:EmpiricalBounds}
    \textbf{Algorithm:} $\text{Estimate }m_i\left(\bar{b},a,i\right)$.\\
    \textbf{Parameters}: Number of state samples $N_X$. \\
    \textbf{Input}: Particle belief set $\bar{b}_i=\left\{ \left(x_{i},w_{i}\right)\right\}$, action $a$, time step $i$. \\
    \textbf{Output}: A scalar $\Estim{\MImSa}_i(\bar{b}_i,a)$ that is an estimate of $m_i(\bar{b}_i,a)$.
    \begin{algorithmic}[1]
    \FORALL {$j=1,\dots,N_X$}
    \STATE $x_i^j \sim \bar{b}_i$
    \ENDFOR
    \STATE \textbf{return} $\frac{1}{N_x}\sum_{j=1}^{N_X}{\text{Estimate }m_i\left(x_i^j,a,i\right)}$
    \end{algorithmic}
    
    \textbf{Algorithm:} $\text{Estimate }m_i\left(x,a,i\right)$.\\
    \textbf{Parameters}: Transition model $p_T$, Distance threshold $d_T$, $Q_0$ state proposal distribution. \\
    \textbf{Input}: State $x_{i}$, action $a$, time step $i$. \\
    \textbf{Output}: A scalar $\MImSa_i(x_i,a)$ that is an estimate of $m_i(x_i,a)$.
    \begin{algorithmic}[1]
    \STATE $Neighborhood = list()$ \\
    \COMMENT{Neighborhood search can be implemented with KD-Tree}
    \FORALL {$n=1,\dots,N_{\Delta}^{\textit{eff}}$} 
    \IF {$\lVert x_i - x_n^\Delta \rVert \leq d_T$} 
    \STATE $Neighborhood \leftarrow x_n^\Delta$
    \ENDIF
    \ENDFOR
    \STATE $m\leftarrow 0$
    \FORALL {$x_n^\Delta \in Neighborhood$}
    \STATE $m \leftarrow m + \Vmax{i+1}\cdot \frac{p_T(x_n^\Delta \mid x_i, a)}{N_\Delta Q_0(x_n^\Delta)} \DZEst(x_n^\Delta)$
    \ENDFOR
    \STATE \textbf{return} $m$
    \end{algorithmic}
\end{algorithm}

We now describe the parameters describing the computation of $\DZEst$ and $\MImSa_i$.
In Algorithm \ref{alg:DZEst} we describe the process of sampling the delta states and estimating $\DZEst$. In Algorithm \ref{alg:EmpiricalBounds} we describe the procedures for computing $\MImSa_i$ for a particle belief and for a state sample.

We chose the number of delta states $N_\Delta=2048$, the number of observation samples for estimating $\DZEst$ is $N_{Z}=256$.
The number of particles in the belief $C=250$, and $N_X=30$ is the number of particles used for computing $\Estim{\MImSa}_{i}$.
We chose $\Delta_{\textit{Thresh}}=10^{-4}$, the threshold for filtering delta states with low delta. 
The threshold distance for the transition model is $d_T=0.6=4\cdotp\sigma_T$.
After pre-filtering of $\DZEst(x_{n}^{\Delta}) > \Delta_{\textit{Thresh}}$, we had $N_{\Delta}^{\textit{eff}}=251$, sample mean of $\Estim{\mathbb{E}}[\DZEst]=8.31\cdotp10^{-2}$, and sample variance $\Estim{\sigma}[\DZEst]=0.67\cdotp10^{-2}$.
Minimum and maximum values are $\min \DZEst=7.06\cdotp10^{-2}$, $\max \DZEst=12.08\cdotp10^{-2}$.

\subsection{Solver}

We now describe details regarding the implementation of the PFT-DPW solver \cite{Sunberg18icaps}.

We implement the particle filter belief with 2 key points to note. The first is that we use resampling for stability of the particle filter, as when the particle filter algorithm was running without resampling we ran into particle depletion issues in a few scenarios. The second is that during planning with a particle belief, since our scenario may end prematurely, we often get a situation where only some particles terminate and others do not. To solve this problem, we note that in general, we may describe the reward of a belief with the law of total expectation:
\begin{gather}
    Q_t^\pi(b_t, a) = r_t(b_t, a) + \gamma \ExptFlat{z_{t+1}}{}{V_{t+1}^\pi(b_{t+1})} \\
    r_t(b_t, a) \\
    + \gamma \ExptFlat{z_{t+1}}{}{V_{t+1}^\pi(b_{t+1})\mid T(b_{t+1})} \probd(T(b_{t+1})) \\
    + \gamma \ExptFlat{z_{t+1}}{}{V_{t+1}^\pi(b_{t+1})\mid \neg T(b_{t+1})} \probd(\neg T(b_{t+1}))
\end{gather}
where $T(b_{t+1})$ is a random variable indicating that a belief has terminated, whether because of states being in terminal states (due to entering goal or collision states), or due to reaching the time limit. The future value for all terminated states is $0$, therefore the action value is equal to:
\begin{gather}
    Q_t^\pi(b_t, a) = r_t(b_t, a) \\
    + \gamma \ExptFlat{z_{t+1}}{}{V_{t+1}^\pi(b_{t+1})\mid \neg T(b_{t+1})} \probd(\neg T(b_{t+1}))
\end{gather}
We implemented $\probd(\neg T(b_{t+1}))$ as summing the total particle weights that are in goal or collision states after sampling from the transition model.
To implement $\ExptFlat{z_{t+1}}{}{V_{t+1}^\pi(b_{t+1})\mid \neg T(b_{t+1})}$, we added a parameter controlling the normalized sum of the weights of a particle belief, which we multiply after each transition by $\probd(\neg T(b_{t+1}))$.
This conditions all future beliefs of the same branch by the same factor, and trickles down the belief tree recursively.

The rollout policy is based on maximum likelihood transitions and observations.
It steers the empirical mean of the belief towards the goal by executing each time step the action that has maximal inner product with the goal's center.

We implement in PFT-DPW a secondary reward that is stored in all posterior belief nodes.
We modified the posterior nodes in the planner's belief tree to include $\PhiImSaEst_{\PBMDP,t}$ values in addition to $\Estim{Q}$. 
In addition, we hooked on the generative model of the environment to additionally compute $m_i$ in the simplified planning, such that $(\PB{i+1},r_{i+1},m_i)\sim G(\PB{i},a_i)$.

The parameters of PFT-DPW were taken to be the following:
\begin{table}[H]
    \normalsize
    \centering
    \begin{tabularx}{0.95\columnwidth}
        {| >{\centering\arraybackslash}X 
        | >{\centering\arraybackslash}X 
        | >{\centering\arraybackslash}X | }
        \hline
        Parameter & Notation (PFT-DPW) & Value \\
        \hline
        No. of simulations & $n$ & 500 \\
        \hline
        Exploration bonus & $c$ & 50 \\
        \hline
        Action prog. widening mult. const. & $k_a$ & 1.1 \\
        \hline
        Action prog. widening exp. const. & $\alpha_a$ & 0.24 \\
        \hline
        Obs. prog. widening mult. const. & $k_o$ & 1.1 \\
        \hline
        Obs. prog. widening exp. const. & $\alpha_o$ & 0.19 \\
        \hline
    \end{tabularx}
\end{table}
Additionally, the number of particles in the particle filter was $C=250$.

\bibliography{../refs}